\documentclass{article} 
\PassOptionsToPackage{hyphens}{url}
\usepackage{MLSTEFTNO,times}
\usepackage{subcaption}

\usepackage{amsmath,amsfonts,bm}

\def\eqref#1{equation~\ref{#1}}

\def\1{\bm{1}}

\DeclareMathAlphabet{\mathsfit}{\encodingdefault}{\sfdefault}{m}{sl}
\SetMathAlphabet{\mathsfit}{bold}{\encodingdefault}{\sfdefault}{bx}{n}

\renewcommand{\eqref}[1]{(\ref{#1})}
\usepackage{hyperref}
\hypersetup{hidelinks}
\usepackage{url}
\usepackage{graphicx}
\usepackage{amsthm}
\usepackage{cleveref}
\crefname{equation}{Eq.}{Eqs.}
\usepackage{float}
\usepackage{xurl}
\theoremstyle{definition}
\newtheorem{proposition}{Proposition}[section]

\newtheorem{corollary}[proposition]{Corollary}

\title{Loop Corrections in Random Feature Models:\\
Training Error and Generalization Gap}
\iclrfinalcopy

\author{Taeyoung Kim  \\
Center for AI and Natural Sciences\\
Korea Institute for Advanced Study\\
Seoul 02455, South Korea \\
\texttt{taeyoungkim@kias.re.kr} \\
}

\begin{document}

\maketitle

\begin{abstract}
We study fixed-design random feature ridge regression beyond the mean-kernel approximation.
The expectation is taken over the frozen-feature ensemble, conditional on the training sample.
Because the predictor is a nonlinear function of the empirical kernel, its mean training error, test error, and conditional generalization gap depend on centered kernel covariances as well as on mean kernel objects.
We derive the covariance-level, or one-loop, corrections from a finite resolvent identity.
This avoids an almost-sure Neumann-series assumption and yields an explicit remainder bound for the training error.
The test correction additionally requires mixed train--test covariance tensors and therefore cannot, in general, be recovered from the train-restricted kernel law alone.
Numerical experiments show that the covariance term removes the leading inverse-width discrepancy in a controlled regularization regime, while an ablation that discards the mixed train--test covariance fails for the test error even though the training correction is unchanged.
They also identify a width--regularization boundary, measured by a resolvent-fluctuation diagnostic, beyond which the second-order truncation loses accuracy.
All inverse-width statements are for fixed sample size, depth, design, and positive kernel-level regularization.
The analysis concerns frozen features and does not model feature learning or neural-tangent-kernel evolution.
\end{abstract}

\vspace{0.5em}
\noindent\textbf{Keywords:} random feature models, generalization gap, finite-width effects, loop corrections, effective field theory, kernel fluctuations

\section{Introduction}

\subsection{Random Feature Models}

Random feature models provide a mathematically tractable framework for studying the training and generalization properties of neural networks. In these models, a nonlinear feature map is first generated by randomly sampling the parameters of a neural network from a prescribed initialization ensemble and then freezing them, while only the readout layer is optimized. This setting is closely related to the literature on random kitchen sinks, randomized kernel training, and extreme learning machines \citep{RahimiRecht2007,RahimiRecht2009,Rudi2017,Huang2006,Wang2022}.

A central object in the analysis of random feature models is the kernel induced by the frozen random features. In the infinite-width limit, randomly initialized networks admit deterministic Gaussian-process kernels \citep{Lee2018DeepGP}; related neural tangent kernel limits describe fully trained wide networks \citep{Jacot2018,Lee2019}. For frozen features, the limiting predictor is a kernel ridge regressor, and random-feature approximation results quantify its statistical behavior \citep{Rudi2017}. High-dimensional kernel regression can exhibit additional spectral regimes \citep{Adlam2020}.

At finite width, the induced kernel fluctuates around its ensemble mean. Finite-width corrections to network priors, output distributions, and Bayesian predictions have been developed through Edgeworth, diagrammatic, and layerwise perturbative methods \citep{Antognini2019,Yaida2020,Hanin2024}. Parallel work studies finite-width neural tangent kernels and their evolution under training \citep{DyerGurAri2020,HaninNica2020,HuangYau2020,SeleznovaKutyniok2022,GuillenMisofGerken2026}. The present problem is different but structurally related: the frozen-feature readout depends nonlinearly on the empirical kernel, so ensemble-averaged errors are not determined by its mean alone. This raises the question of how the centered kernel fluctuations enter fixed-design prediction errors.

\subsection{An Effective-Field-Theoretic Perspective on Finite-Width Fluctuations}

In this paper, we approach this problem from the viewpoint of statistical physics and effective field theory. From this perspective, the frozen random features are the fluctuating degrees of freedom, and the relevant observables---such as the training error, test error, and generalization gap---should be understood through their ensemble statistics. The standard mean-kernel approximation then plays the role of a leading-order, or tree-level, description, while finite-width effects appear as corrections generated by fluctuations around that mean \citep{Roberts22}.

This viewpoint is useful because the predictor depends on the inverse of a regularized empirical kernel matrix. Consequently, even small kernel fluctuations enter observables through nonlinear resolvent products. A systematic treatment therefore requires more than replacing the empirical kernel by its ensemble mean: at the first nontrivial order one must retain its covariance, and at subsequent orders one must retain higher centered moments. Following the terminology of neural-network effective theory, we call the covariance-level contribution the one-loop correction \citep{Roberts22}. Here the name is only bookkeeping for the centered second-order term; no path-integral representation is assumed.

For fixed depth and a fixed finite set of inputs, connected correlation functions of wide random networks often admit inverse-width power counting \citep{Hanin2024,HaninNica2020,Roberts22}. Motivated by this power counting, we analyze the leading centered correction to the training error, test error, and conditional generalization gap. The test quantities require not only the covariance of the train--train kernel, but also mixed covariances coupling train--train and test--train kernel objects. This distinction is invisible in a train-restricted mean-kernel analysis.

\subsection{Relation to Finite-Width Network Expansions and Scope}

The works above primarily propagate finite-width statistics through the network or through parameter training. Yaida's layerwise expansion, for example, computes non-Gaussian corrections to network priors and Bayesian inference, while Dyer--Gur-Ari, Huang--Yau, and Guillen--Misof--Gerken analyze derivative correlators, neural tangent kernels, and finite-width training dynamics \citep{Yaida2020,DyerGurAri2020,HuangYau2020,GuillenMisofGerken2026}. Those results address a harder upstream problem than the frozen-feature calculation considered here.

Our contribution is downstream and complementary. We take the joint law of the empirical train--train and test--train kernel objects as the input and determine which parts of that law are visible to ridge-regression risks. Once these fluctuations are specified, the algebraic mechanism is indeed a resolvent expansion. The additional content lies in the observable-level structure: the test error contains mixed train--test covariance maps that are absent from the training error, and Proposition~\ref{prop:nonidentifiability} shows that no train-restricted kernel statistic can recover them universally. The numerical ablation in Subsection~\ref{sec:mixed-ablation} shows that these mixed terms are not merely formal; they can be larger than the final test correction because they cancel against train-restricted contributions. A second consequence is that inverse width alone does not determine perturbative accuracy: the relevant quantity is the resolvent-amplified fluctuation \(q_\Theta=\|(G+\gamma I)^{-1}\Delta_K\|_2\), whose width--regularization boundary is measured in Subsection~\ref{sec:validity-map}.

The frozen-feature restriction is therefore deliberate: it isolates representation sampling from feature learning and permits a conditional, fixed-design statement with a controlled remainder. It also marks a strict limitation. The present results neither describe the evolution of the NTK nor explain representation learning; extending the mixed-covariance observable analysis to trained features remains a separate problem.

\subsection{Contributions}

The main contributions of this paper are as follows.

\begin{itemize}
    \item We derive centered second-order expansions around the exact finite-width mean kernel objects. This convention matches the Monte Carlo baseline used in the experiments and is distinguished explicitly from a full expansion around the infinite-width kernel.
    \item For the training error, we replace the formal infinite Neumann series by a finite resolvent identity and give a nonasymptotic operator-norm bound on the discarded terms. Under fixed-design moment assumptions, the remainder is \(O(n^{-3/2})\).
    \item We give explicit covariance corrections for the test error and the conditional generalization gap. These formulas expose mixed train--test tensors that are not identifiable from train-restricted spectral data, even when the complete training-kernel law is known.
    \item We test the hierarchy rather than only the correction itself. In a controlled regime, the covariance term removes the leading \(O(n^{-1})\) discrepancy; a mixed-covariance ablation fails for test error under three population shifts; and a two-parameter sweep links truncation accuracy to \(q_\Theta\). These experiments identify both a qualitative train--test distinction and the boundary of the approximation.

\end{itemize}

\section{Preliminaries}

\subsection{Model and problem setting}
Let \(D_N=\{(x_j,y_j)\}_{j=1}^N\) be a fixed training sample, with
\(y_j=y(x_j)\) for a deterministic target \(y:\mathcal X\to\mathbb R\).
All expectations below are conditional on \(D_N\) and are taken only over the
frozen-feature draw \(\Theta\), unless another expectation is displayed.

For \(n_0=\dim(\mathcal X)\), define a depth-\(L\) feature network by
\begin{equation}
\begin{split}
h^{(0)}(x)&=x,\\
z^{(\ell)}(x)&=W_\ell h^{(\ell-1)}(x)+b_\ell,\\
h^{(\ell)}(x)&=\sigma_\ell\!\left(z^{(\ell)}(x)\right),
\qquad \ell=1,\ldots,L,\\
\phi(x;\Theta)&=h^{(L)}(x)\in\mathbb R^n,
\qquad n:=n_L .
\end{split}
\label{random feature}
\end{equation}
The final activation may be set to the identity to recover preactivation
features. Under this convention, an \(L=1\) experiment with a named
activation applies that activation to the feature vector.

The weights and biases are mutually independent across layers, neurons, and
parameter types, with
\begin{equation}
(W_\ell)_{ij}\sim
\mathcal N\!\left(0,\frac{C_W^{(\ell)}}{n_{\ell-1}}\right),
\qquad
(b_\ell)_i\sim\mathcal N\!\left(0,C_b^{(\ell)}\right).
\label{ensemble}
\end{equation}
The Gaussian assumption is used only to connect the empirical-kernel power
counting with the wide-network literature. The resolvent expansions below
apply to any frozen-feature ensemble satisfying the stated moment assumptions.

The predictor is linear in the frozen features,
\begin{equation}
f(x;\Theta,w):=\phi(x;\Theta)^\top w.
\label{model}
\end{equation}
With \(\Phi_{ji}:=\phi_i(x_j;\Theta)\),
\(\Phi\in\mathbb R^{N\times n}\), and
\(\mathbf y=(y_1,\ldots,y_N)^\top\), ridge regression gives
\begin{equation}
\mathbf w^*(\Theta;D_N)
=\arg\min_{\mathbf w\in\mathbb R^n}
\left\{\frac1N\|\Phi\mathbf w-\mathbf y\|^2
+\lambda\|\mathbf w\|^2\right\}
=(\Phi^\top\Phi+N\lambda I_n)^{-1}\Phi^\top\mathbf y.
\label{mini}
\end{equation}
For a population input distribution \(\mu\), the conditional test error is
\begin{equation}
\mathcal E_{\mathrm{test}}(\Theta;D_N)
:=\int_{\mathcal X}
\bigl(f(x;\Theta,\mathbf w^*)-y(x)\bigr)^2\,d\mu(x).
\end{equation}

\subsection{Ensemble averaging and the origin of the correction}

The training error, test error, and conditional generalization gap are random
through \(\Theta\). We study their feature-ensemble averages with \(D_N\)
held fixed. Replacing the empirical kernel \(K_\Theta\) by its exact
finite-width mean \(G=\mathbb E_\Theta[ K_\Theta]\) defines the
\emph{mean-object baseline}. It is not, in general, identical to a baseline
built from the infinite-width kernel, because \(G\) may itself depend on \(n\).
The distinction matters because the observables depend nonlinearly on
\(K_\Theta\) through the resolvent
\begin{equation}
(K_\Theta+\gamma I_N)^{-1},
\qquad
\gamma:=\frac{N\lambda}{n}.
\end{equation}
The centered second-order term depends on kernel covariances. We refer to it
as the one-loop correction, while retaining the term mean-object to make clear
which finite-width contributions are already included in the baseline.

\section{Effective Formulation and Loop Expansion}

We first separate two covariance objects that are easily conflated in a naive
effective-theory treatment: the connected preactivation vertex and the
covariance of the empirical random-feature kernel. The error expansions depend
on the latter. The following standard relation is included only to motivate the
fixed-width power counting.
\begin{proposition}[Connected preactivation vertex under Gaussian initialization]
\label{prop:leading-nongaussian-building-block}
Let \(z^{(\ell)}_{i;\alpha}\) denote the preactivation of neuron \(i\) at layer \(\ell\) evaluated on the sample \(x_\alpha\in D\), and define the stochastic kernel
\begin{equation}
\hat G^{(\ell)}_{\alpha_1\alpha_2}
:=
C_b^{(\ell)}
+
C_W^{(\ell)}
\frac{1}{n_{\ell-1}}
\sum_{j=1}^{n_{\ell-1}}
h^{(\ell-1)}_{j;\alpha_1}h^{(\ell-1)}_{j;\alpha_2},
\end{equation}
with mean kernel
\begin{equation}
G^{(\ell)}_{\alpha_1\alpha_2}
:=
\mathbb E\!\left[\hat G^{(\ell)}_{\alpha_1\alpha_2}\right]
=
\mathbb E\!\left[z^{(\ell)}_{i;\alpha_1}z^{(\ell)}_{i;\alpha_2}\right]. \label{mean-metric}
\end{equation}
Let
\begin{equation}
\Delta \hat G^{(\ell)}_{\alpha_1\alpha_2}
:=
\hat G^{(\ell)}_{\alpha_1\alpha_2}
-
G^{(\ell)}_{\alpha_1\alpha_2},
\end{equation}
and define the connected conditional-kernel vertex by
\begin{equation}
\frac{1}{n_{\ell-1}}
V_{\mathrm c,(\alpha_1\alpha_2)(\alpha_3\alpha_4)}^{(\ell)}
:=
\mathbb E\!\left[
\Delta \hat G^{(\ell)}_{\alpha_1\alpha_2}
\Delta \hat G^{(\ell)}_{\alpha_3\alpha_4}
\right].
\end{equation}
Under the independent centered Gaussian initialization in
\eqref{ensemble}, the connected four-point correlator, with the subscript
\({\mathrm c}\) denoting the joint cumulant, has the form
\begin{equation}
\begin{split}
&\mathbb E\!\left[
z^{(\ell)}_{i_1;\alpha_1}
z^{(\ell)}_{i_2;\alpha_2}
z^{(\ell)}_{i_3;\alpha_3}
z^{(\ell)}_{i_4;\alpha_4}
\right]_{\mathrm c}
\\
&\qquad=
\frac{1}{n_{\ell-1}}
\Big[
\delta_{i_1 i_2}\delta_{i_3 i_4}\,
V_{\mathrm c,(\alpha_1\alpha_2)(\alpha_3\alpha_4)}^{(\ell)}
+
\delta_{i_1 i_3}\delta_{i_2 i_4}\,
V_{\mathrm c,(\alpha_1\alpha_3)(\alpha_2\alpha_4)}^{(\ell)}
\\
&\qquad\qquad\qquad
+
\delta_{i_1 i_4}\delta_{i_2 i_3}\,
V_{\mathrm c,(\alpha_1\alpha_4)(\alpha_2\alpha_3)}^{(\ell)}
\Big].
\end{split}
\label{eq:connected-4pt-vertex}
\end{equation}
\end{proposition}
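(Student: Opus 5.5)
Conditional on the previous-layer activations \(h^{(\ell-1)}\), the preactivations \(z^{(\ell)}_{i;\alpha}=\sum_j (W_\ell)_{ij}h^{(\ell-1)}_{j;\alpha}+(b_\ell)_i\) are jointly Gaussian and centered. Since row \(i\) of \(W_\ell\) and the entry \((b_\ell)_i\) are independent of \(h^{(\ell-1)}\) and across neurons, the conditional covariance is
\[
\mathbb E\!\left[z^{(\ell)}_{i;\alpha}z^{(\ell)}_{k;\beta}\,\middle|\,h^{(\ell-1)}\right]=\delta_{ik}\,\hat G^{(\ell)}_{\alpha\beta}.
\]
The proof therefore conditions on \(h^{(\ell-1)}\), applies Wick's theorem to the conditional fourth moment, and then averages over \(h^{(\ell-1)}\). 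Conditionally on \(h^{(\ell-1)}\), Wick's theorem gives
\[
\mathbb E\!\left[z_{i_1;\alpha_1}z_{i_2;\alpha_2}z_{i_3;\alpha_3}z_{i_4;\alpha_4}\,\middle|\,h^{(\ell-1)}\right]=\delta_{i_1i_2}\delta_{i_3i_4}\hat G_{\alpha_1\alpha_2}\hat G_{\alpha_3\alpha_4}+(\text{two other pairings}).
\]
Taking the outer expectation and writing \(\mathbb E[\hat G_{\alpha\beta}\hat G_{\gamma\delta}]=G_{\alpha\beta}G_{\gamma\delta}+\mathbb E[\Delta\hat G_{\alpha\beta}\Delta\hat G_{\gamma\delta}]\), the unconditional fourth moment equals the sum over the three pairings of \(\delta\delta\,(GG+\tfrac1{n_{\ell-1}}V_{\mathrm c})\), using the definition of \(V_{\mathrm c}\) in the statement.

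Next I identify the joint cumulant. Each \(z\) is centered, since \(W_\ell\) and \(b_\ell\) are centered and independent of \(h^{(\ell-1)}\). Hence all odd moments vanish, and the fourth cumulant reduces to
\[
\kappa_4=\mathbb E[z_1z_2z_3z_4]-\mathbb E[z_1z_2]\mathbb E[z_3z_4]-\mathbb E[z_1z_3]\mathbb E[z_2z_4]-\mathbb E[z_1z_4]\mathbb E[z_2z_3].
\]
The two-point function is \(\mathbb E[z_{i;\alpha}z_{k;\beta}]=\delta_{ik}G_{\alpha\beta}\), by the tower property and \eqref{mean-metric}. So the Gaussian products \(\delta\delta GG\) cancel exactly against the disconnected terms. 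What remains is \(\frac1{n_{\ell-1}}\) times the three \(\delta\delta V_{\mathrm c}\) terms, with index pairs \((\alpha_1\alpha_2)(\alpha_3\alpha_4)\), \((\alpha_1\alpha_3)(\alpha_2\alpha_4)\) and \((\alpha_1\alpha_4)(\alpha_2\alpha_3)\). This is exactly \eqref{eq:connected-4pt-vertex}.

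The main point needing care, though it is not really an obstacle, is justifying the conditioning step. I would note that \(h^{(\ell-1)}\) is measurable with respect to the parameters of layers \(1,\ldots,\ell-1\), which are independent of \((W_\ell,b_\ell)\). I would also check integrability: \(\mathbb E[\hat G^2]<\infty\) requires finite fourth moments of \(h^{(\ell-1)}\), which holds for activations of polynomial growth under Gaussian inputs. I would state this moment condition as implicit in the definition of \(V_{\mathrm c}\). The \(1/n_{\ell-1}\) prefactor is purely a normalization built into the definition of \(V_{\mathrm c}\), so no asymptotics are needed; the identity is exact at finite width.
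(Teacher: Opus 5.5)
Your proof is correct, and it is the standard derivation behind this proposition. The paper gives no proof of its own and attributes the relation to Roberts et al., where it is obtained exactly as you do it: condition on \(h^{(\ell-1)}\), Wick-contract using the conditional covariance \(\delta_{ik}\hat G^{(\ell)}_{\alpha\beta}\), then subtract the disconnected pairings built from \(\mathbb E[z_{i;\alpha}z_{k;\beta}]=\delta_{ik}G^{(\ell)}_{\alpha\beta}\), which leaves only \(\mathbb E[\Delta\hat G\,\Delta\hat G]=\frac{1}{n_{\ell-1}}V_{\mathrm c}\). As a minor remark, reducing the fourth cumulant to the moment-minus-pairings form only needs the first moments to vanish, not all odd moments; the identity is indeed exact at finite width.
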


This is the connected-preactivation relation adapted from the Gaussian
initialization analysis of \citet{Roberts22}. It does not identify
\(V_{\mathrm c}^{(\ell)}\) with the covariance of the empirical feature
kernel defined below.

For notational simplicity, in the sequel we assume that all hidden-layer widths
and the output feature dimension are equal to \(n\), namely
\begin{equation}
n_1=\cdots=n_{L}=n.
\end{equation}
Accordingly, \(n\) denotes the common width parameter controlling the
finite-width expansion.

The proposition concerns preactivations. Error observables instead depend on
the empirical feature kernel, whose covariance contains both connected and
within-feature sampling contributions.

For a fixed feature draw, define
\begin{equation}
\begin{split}
K_\Theta&:=\frac1n\Phi\Phi^\top,\\
k_\Theta(x)&:=\frac1n\Phi\phi(x;\Theta).
\end{split}
\end{equation}
The predictor is
\begin{equation}
f_{\Theta,D_N}(x)
=k_\Theta(x)^\top(K_\Theta+\gamma I_N)^{-1}\mathbf y,
\qquad \gamma:=\frac{N\lambda}{n}.
\end{equation}

Write the exact finite-width mean and centered fluctuation as
\begin{equation}
K_\Theta=G+\Delta_K,
\qquad
G:=\mathbb E_\Theta[K_\Theta].
\end{equation}
The empirical-kernel covariance vertex is defined, at each \(n\), by
\begin{equation}
V^{(n)}_{(\alpha\beta)(\gamma\delta)}
:=
n\,\mathbb E_\Theta
\bigl[(\Delta_K)_{\alpha\beta}(\Delta_K)_{\gamma\delta}\bigr].
\label{eq:KV-identification}
\end{equation}
We suppress the superscript \(n\) below. This \(V\) is defined by the
empirical-kernel covariance and is not the connected preactivation vertex
\(V_{\mathrm c}^{(L)}\). For conditionally Gaussian preactivation features,
the law of total covariance gives, schematically,
\begin{equation}
V_{(\alpha\beta)(\gamma\delta)}
=G_{\alpha\gamma}G_{\beta\delta}
+G_{\alpha\delta}G_{\beta\gamma}
+V_{\mathrm c,(\alpha\beta)(\gamma\delta)}^{(L)}
+O(n^{-1}).
\label{eq:empirical-connected-relation}
\end{equation}
The first two terms are the within-feature Wick contributions. In particular,
the empirical kernel fluctuates even when the connected conditional-kernel
vertex vanishes. Equation~\eqref{eq:empirical-connected-relation} concerns
preactivation features, equivalently an identity final activation; for a
nonlinear final activation, \(V\) remains defined by
\eqref{eq:KV-identification}, but no Gaussian Wick reduction is assumed.

For \(A\in\mathbb R^{N\times N}\), define
\begin{equation}
(V\star A)_{\alpha\delta}
:=
\sum_{\beta,\gamma=1}^N
V_{(\alpha\beta)(\gamma\delta)}A_{\beta\gamma}.
\label{eq:V-star-A}
\end{equation}
Then
\begin{equation}
\mathbb E_\Theta[\Delta_K A\Delta_K]
=
\frac{1}{n}(V\star A).
\label{eq:delta-A-delta}
\end{equation}

Throughout Sections~3--4, \(D_N\), \(N\), and \(L\) are fixed,
\(\gamma\geq\gamma_0>0\), and \(n\to\infty\). Thus a fixed-\(\gamma\)
width limit uses \(\lambda_n=\gamma n/N\). We assume
\begin{equation}
\mathbb E_\Theta[\|\Delta_K\|_2^p]
\leq c_p n^{-p/2},
\qquad p=2,\ldots,6,
\label{eq:kernel-moment-assumption}
\end{equation}
with constants allowed to depend on \(D_N,N,L,\gamma_0\). Analogous
integrable mixed-moment assumptions are stated when test points are
introduced. No claim is made that the bounds are uniform in depth, sample
size, or vanishing regularization. In particular, depth-dependent vertices
can produce an effective \(L/n\) correction for scale-invariant activations
\citep{Roberts22,Hanin2024}.

Because \(G\) is the exact finite-\(n\) mean, the expansion below isolates
centered covariance effects. A complete expansion relative to an
infinite-width kernel would additionally require the finite-width shifts of
all mean objects.

\subsection{Expansion of the training error}

We first derive a perturbative expansion of the training error around the
mean kernel.
For a fixed parameter \(\Theta\), the ridge solution yields
\begin{equation}
\begin{split}
   & \mathbf w^*(\Theta;D_N)
\\&=
\Phi(\Theta)^{\top}
\bigl(\Phi(\Theta)\Phi(\Theta)^{\top}+N\lambda I_N\bigr)^{-1}\mathbf y
\\&=
\frac1n\Phi(\Theta)^\top
\bigl(K_\Theta+\gamma I_N\bigr)^{-1}\mathbf y,
\qquad
\gamma:=\frac{N\lambda}{n}.
\end{split}
\end{equation}
and hence
\begin{equation}
f_{\Theta,D_N}(X)
=
K_\Theta (K_\Theta+\frac{N\lambda}{n} I_N)^{-1}\mathbf y,
\qquad
K_\Theta:=\frac{1}{n}\Phi(\Theta)\Phi(\Theta)^{\top}.
\end{equation}
Accordingly, the training error is given by
\begin{equation}
\mathcal{E}_{\text{tr}}(\Theta)
:=
\frac{1}{N}\|\Phi(\Theta)\mathbf w^*(\Theta;D_N)-\mathbf y\|^2.
\end{equation}

\begin{proposition}[Controlled centered expansion of the training error]
\label{prop:loop-expansion-training-error}
Let
\begin{equation}
K_\Theta=G+\Delta_K,
\qquad
\mathbb E_\Theta[\Delta_K]=0,
\end{equation}
and define
\begin{equation}
R:=(G+\gamma I_N)^{-1}.
\end{equation}
Then the expected training error admits the expansion
\begin{equation}
\begin{split}
\mathbb E_\Theta[\mathcal E_{\mathrm{tr}}]
&=
\frac{\gamma^2}{N}\mathbf y^\top
\left(R^2+L_{\mathrm{tr}}+\mathcal R_{\mathrm{tr}}\right)\mathbf y,
\end{split}
\end{equation}
where
\begin{equation}
L_{\mathrm{tr}}
=
R^2\mathbb E_\Theta[\Delta_KR\Delta_K]R
+R\mathbb E_\Theta[\Delta_KR^2\Delta_K]R
+R\mathbb E_\Theta[\Delta_KR\Delta_K]R^2.
\end{equation}
For \(\delta_\Theta:=\|\Delta_K\|_2\), the remainder satisfies
\begin{equation}
\|\mathcal R_{\mathrm{tr}}\|_2
\leq
\mathbb E_\Theta\!\left[
\frac{4\delta_\Theta^3}{\gamma^5}
+\frac{3\delta_\Theta^4}{\gamma^6}
+\frac{2\delta_\Theta^5}{\gamma^7}
+\frac{\delta_\Theta^6}{\gamma^8}
\right].
\label{eq:training-remainder-bound}
\end{equation}
Consequently, \(\|\mathcal R_{\mathrm{tr}}\|_2=O(n^{-3/2})\) under
\eqref{eq:kernel-moment-assumption}. No realization-wise condition
\(\|R\Delta_K\|_2<1\) is required.
\end{proposition}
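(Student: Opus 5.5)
The plan is to rewrite the training residual exactly in resolvent form and then expand the random resolvent with a \emph{finite} resolvent identity. No series is summed, so no realization-wise smallness condition is needed.

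\textbf{Step 1: exact resolvent form.} Since \(f_{\Theta,D_N}(X)=K_\Theta(K_\Theta+\gamma I_N)^{-1}\mathbf y\), the residual is
\[
\mathbf y-f_{\Theta,D_N}(X)=\gamma S\mathbf y,
\qquad
S:=(K_\Theta+\gamma I_N)^{-1}.
\]
Because \(S\) is symmetric, \(\mathcal E_{\mathrm{tr}}=\frac{\gamma^2}{N}\mathbf y^\top S^2\mathbf y\). Both \(K_\Theta\) and \(G\) are positive semidefinite, so \(\|S\|_2\le 1/\gamma\) and \(\|R\|_2\le 1/\gamma\) hold for every realization. This deterministic bound is what replaces the Neumann-series condition.

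\textbf{Step 2: finite resolvent identity.} From \(S=R-R\Delta_K S\), I iterate exactly three times to obtain
\[
S=A+B,\qquad A:=R-R\Delta_KR+R\Delta_KR\Delta_KR,\qquad B:=-R\Delta_KR\Delta_KR\Delta_KS.
\]
This identity holds pathwise. With \(\delta_\Theta=\|\Delta_K\|_2\), it gives
\[
\|A\|_2\le \frac1\gamma+\frac{\delta_\Theta}{\gamma^2}+\frac{\delta_\Theta^2}{\gamma^3},
\qquad
\|B\|_2\le \frac{\delta_\Theta^3}{\gamma^4}.
\]

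\textbf{Step 3: extract the orders.} I write \(S^2=A^2+AB+BA+B^2\) and expand \(A^2\) by degree in \(\Delta_K\).
\begin{itemize}
\item The degree-zero term is \(R^2\).
\item The degree-one terms are \(-R^2\Delta_KR-R\Delta_KR^2\); they vanish in expectation because \(\mathbb E_\Theta[\Delta_K]=0\).
\item The degree-two terms are \(R^2\Delta_KR\Delta_KR+R\Delta_KR^2\Delta_KR+R\Delta_KR\Delta_KR^2\). Their expectation is exactly \(L_{\mathrm{tr}}\), which can equivalently be written through \eqref{eq:delta-A-delta}.
\end{itemize}
Everything else goes into \(\mathcal R_{\mathrm{tr}}:=\mathbb E_\Theta[\,\cdot\,]\) of the remaining terms. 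These are:
\begin{itemize}
\item the two cubic cross terms of \(A^2\), \(-(R\Delta_KR)(R\Delta_KR\Delta_KR)\) and its transpose, each of norm at most \(\delta_\Theta^3/\gamma^5\);
\item the quartic term \((R\Delta_KR\Delta_KR)^2\), of norm at most \(\delta_\Theta^4/\gamma^6\);
\item \(AB+BA\), of norm at most \(2\|A\|_2\|B\|_2\le 2(\delta_\Theta^3/\gamma^5+\delta_\Theta^4/\gamma^6+\delta_\Theta^5/\gamma^7)\);
\item \(B^2\), of norm at most \(\delta_\Theta^6/\gamma^8\).
\end{itemize}
Summing the coefficients gives \(4,3,2,1\) for \(\delta_\Theta^3,\dots,\delta_\Theta^6\), matching \eqref{eq:training-remainder-bound}. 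Finally, \(\|\mathbb E_\Theta[M]\|_2\le\mathbb E_\Theta\|M\|_2\) by convexity of the norm.

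\textbf{Step 4: rate.} Integrability of all terms and the \(O(n^{-3/2})\) rate follow from \eqref{eq:kernel-moment-assumption} with \(p=3,\dots,6\) and \(\gamma\ge\gamma_0\). The \(\delta_\Theta^3\) term dominates, contributing \(c_3n^{-3/2}\), while the higher powers are \(O(n^{-2})\) or smaller.

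The main point requiring care is Step 2. One must stop the iteration at a term that still contains the exact resolvent \(S\), and bound it only through \(\|S\|_2\le1/\gamma\). One must also verify that every degree-\(\le 2\) piece is placed in \(R^2+L_{\mathrm{tr}}\), so that the remainder contains only cubic and higher terms. This bookkeeping is what produces the stated constants.
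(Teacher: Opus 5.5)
Your proposal is correct and follows the paper's proof essentially step for step. It uses the same three-step finite resolvent identity with the exact resolvent kept in the tail $-R\Delta_KR\Delta_KR\Delta_K S$, and the same deterministic bounds $\|R\|_2,\|S\|_2\le\gamma^{-1}$. It splits $S^2=(A+B)^2$ the same way into $R^2$, mean-zero linear terms, $L_{\mathrm{tr}}$, and a cubic-and-higher remainder, which you bound by submultiplicativity to get the same constants $4,3,2,1$. The only cosmetic difference is that you read off $\mathcal E_{\mathrm{tr}}=\frac{\gamma^2}{N}\mathbf y^\top S^2\mathbf y$ directly from the residual $\gamma S\mathbf y$, where the paper expands the square.
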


\begin{proof}
Starting from the definition of the training error, we have
\begin{equation}
\begin{split}
\mathcal{E}_{\text{tr}}(\Theta)
&=
\frac{1}{N}
\|\Phi(\Theta)\mathbf w^*(\Theta;D_N)-\mathbf y\|^2 \\
&=
\frac{1}{N}
\Bigl(
\mathbf w^{* \top}\Phi(\Theta)^\top\Phi(\Theta)\mathbf w^*
-2\mathbf y^\top \Phi(\Theta)\mathbf w^*
+\|\mathbf y\|^2
\Bigr).
\end{split}
\end{equation}
Substituting
\begin{equation}
\mathbf w^*
=
\frac1n\Phi(\Theta)^\top
(K_\Theta+\gamma I_N)^{-1}\mathbf y,
\end{equation}
we obtain
\begin{equation}
\begin{split}
\mathcal{E}_{\text{tr}}(\Theta)
=
\frac{1}{N}
\Bigl[
&\mathbf y^\top
K_\Theta (K_\Theta+\gamma I_N)^{-1}
K_\Theta (K_\Theta+\gamma I_N)^{-1}\mathbf y \\
&-2\mathbf y^\top K_\Theta(K_\Theta+\gamma I_N)^{-1}\mathbf y
+\|\mathbf y\|^2
\Bigr].
\end{split}
\end{equation}
Using the identity
\begin{equation}
K_\Theta(K_\Theta+\gamma I_N)^{-1}
=
I_N-\gamma (K_\Theta+\gamma I_N)^{-1},
\end{equation}
the above expression simplifies to
\begin{equation}
\mathcal{E}_{\text{tr}}(\Theta)
=
\frac{\gamma^2}{N}\,
\mathbf y^\top (K_\Theta+\gamma I_N)^{-2}\mathbf y.
\end{equation}
Since \(\gamma=N\lambda/n\), equivalently
\begin{equation}
\mathcal{E}_{\text{tr}}(\Theta)
=
\frac{N\lambda^2}{n^2}\,
\mathbf y^\top (K_\Theta+\gamma I_N)^{-2}\mathbf y.
\end{equation}
Taking expectation over \(\Theta\), we get
\begin{equation}
\mathbb E_\Theta[\mathcal{E}_{\text{tr}}]
=
\frac{\gamma^2}{N}\,
\mathbf y^\top
\mathbb E_\Theta\big[(K_\Theta+\gamma I_N)^{-2}\big]
\mathbf y.
\end{equation}

Set \(R_\Theta=(K_\Theta+\gamma I_N)^{-1}\). Repeated use of the
resolvent identity gives the finite, exact formula
\begin{equation}
R_\Theta
=R-R\Delta_KR+R\Delta_KR\Delta_KR
-R\Delta_KR\Delta_KR\Delta_KR_\Theta.
\label{eq:finite-resolvent-identity}
\end{equation}
Write the four terms on the right as \(A_0,A_1,A_2,E_3\), respectively,
and let \(Q=A_0+A_1+A_2\). Since both \(G\) and \(K_\Theta\) are positive
semidefinite,
\begin{equation}
\|R\|_2\leq\gamma^{-1},
\qquad
\|R_\Theta\|_2\leq\gamma^{-1}.
\end{equation}
For \(\delta_\Theta=\|\Delta_K\|_2\), it follows that
\begin{equation}
\|A_j\|_2\leq\gamma^{-(j+1)}\delta_\Theta^j,
\quad j=0,1,2,
\qquad
\|E_3\|_2\leq\gamma^{-4}\delta_\Theta^3.
\end{equation}
In \(R_\Theta^2=(Q+E_3)^2\), the terms of total degree at most two
are
\begin{equation}
A_0^2+A_0A_1+A_1A_0+A_0A_2+A_2A_0+A_1^2.
\end{equation}
The expectation of the linear terms vanishes because
\(\mathbb E_\Theta\Delta_K=0\); the quadratic terms are precisely
\(L_{\mathrm{tr}}\). The remaining terms are
\begin{equation}
A_1A_2+A_2A_1+A_2^2+QE_3+E_3Q+E_3^2.
\end{equation}
Submultiplicativity and the preceding bounds give
\begin{equation}
\begin{split}
\|A_1A_2+A_2A_1+A_2^2+QE_3+E_3Q+E_3^2\|_2
\leq{}&
\frac{4\delta_\Theta^3}{\gamma^5}
+\frac{3\delta_\Theta^4}{\gamma^6}\\
&+\frac{2\delta_\Theta^5}{\gamma^7}
+\frac{\delta_\Theta^6}{\gamma^8}.
\end{split}
\end{equation}
Taking expectations proves both the expansion and
\eqref{eq:training-remainder-bound}.
\end{proof}

\begin{corollary}[Training correction in terms of \(G\) and \(V\)]
\label{cor:training-error-GV}
Under \eqref{eq:KV-identification}, let
\begin{equation}
R:=(G+\gamma I_N)^{-1}.
\end{equation}
Then the covariance-level correction is
\begin{equation}
\begin{split}
L_{\mathrm{tr}}
=
\frac{1}{n}\Big(
&R^2(V\star R)R
+R(V\star R^2)R
+R(V\star R)R^2
\Big).
\end{split}
\label{eq:Ttr2-GV}
\end{equation}
Define
\begin{equation}
\mathcal E_{\mathrm{tr}}^{(\mathrm{mean})}
:=\frac{\gamma^2}{N}\mathbf y^\top R^2\mathbf y,
\qquad
\mathcal E_{\mathrm{tr}}^{(\mathrm{cov})}
:=\frac{\gamma^2}{N}\mathbf y^\top L_{\mathrm{tr}}\mathbf y.
\end{equation}
Therefore,
\begin{equation}
\mathbb E_\Theta[\mathcal E_{\mathrm{tr}}]
=
\frac{\gamma^2}{N}\,
\mathbf y^\top
\left[
R^2
+
\frac{1}{n}\Big(
R^2(V\star R)R
+R(V\star R^2)R
+R(V\star R)R^2
\Big)
\right]\mathbf y
+
O\Big(n^{-3/2}\Big).
\end{equation}
For fixed \(D_N,N,L,\gamma\), the displayed centered correction is
\(O(n^{-1})\). This statement is not uniform in \(N\), depth, or
\(\gamma\).
\end{corollary}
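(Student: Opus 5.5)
The corollary follows by substituting the contraction identity \eqref{eq:delta-A-delta} into the three terms of \(L_{\mathrm{tr}}\) from Proposition~\ref{prop:loop-expansion-training-error}, and then using the remainder bound \eqref{eq:training-remainder-bound} together with the moment assumption \eqref{eq:kernel-moment-assumption}.

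\textbf{Step 1: contraction identity.} First I will check that \eqref{eq:delta-A-delta} holds for any deterministic \(A\in\mathbb R^{N\times N}\). Writing the product entrywise,
\[
(\Delta_K A\Delta_K)_{\alpha\delta}=\sum_{\beta,\gamma}(\Delta_K)_{\alpha\beta}A_{\beta\gamma}(\Delta_K)_{\gamma\delta}.
\]
Because \(A\) does not depend on \(\Theta\), expectation passes through the finite sum. By the definition \eqref{eq:KV-identification}, each factor \(\mathbb E_\Theta[(\Delta_K)_{\alpha\beta}(\Delta_K)_{\gamma\delta}]\) equals \(n^{-1}V_{(\alpha\beta)(\gamma\delta)}\). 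This gives \(n^{-1}(V\star A)_{\alpha\delta}\), as in \eqref{eq:V-star-A}.

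\textbf{Step 2: apply it to \(L_{\mathrm{tr}}\).} The matrices \(R\) and \(R^2\) are deterministic, since \(G\) is the exact mean. The outer factors \(R\) and \(R^2\) are also deterministic, so they pass outside each expectation in \(L_{\mathrm{tr}}\). Applying Step 1 with \(A=R\) and \(A=R^2\) gives \eqref{eq:Ttr2-GV} directly. Substituting this into the expansion of Proposition~\ref{prop:loop-expansion-training-error} yields the displayed formula for \(\mathbb E_\Theta[\mathcal E_{\mathrm{tr}}]\). The remainder term is \(\gamma^2N^{-1}\mathbf y^\top\mathcal R_{\mathrm{tr}}\mathbf y\), which is bounded by \(\gamma^2N^{-1}\|\mathbf y\|^2\|\mathcal R_{\mathrm{tr}}\|_2\).

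\textbf{Step 3: orders in \(n\).} For the remainder, Jensen's inequality gives \(\mathbb E[\delta_\Theta^p]\le c_p n^{-p/2}\) for \(p=3,\dots,6\). Together with \(\gamma\ge\gamma_0\), every term in \eqref{eq:training-remainder-bound} is therefore \(O(n^{-3/2})\), and \(\|\mathbf y\|\) is fixed because the design \(D_N\) is fixed.

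For the \(O(n^{-1})\) claim about the correction, I will not try to control \(V\) entrywise. Instead I will bound the expectation form directly:
\[
\|\mathbb E[\Delta_K A\Delta_K]\|_2\le \mathbb E\|\Delta_K\|_2^2\,\|A\|_2\le c_2 n^{-1}\|A\|_2,
\]
using \(\|R\|_2\le\gamma^{-1}\). This yields \(\|L_{\mathrm{tr}}\|_2\le 3c_2\gamma^{-4}n^{-1}\). Equivalently, each \(V\star R^k\) has operator norm at most \(c_2\gamma^{-k}\) uniformly in \(n\), so \(n^{-1}(\cdots)\) is genuinely \(O(n^{-1})\) even though \(V\) carries the superscript \(n\).

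The main obstacle is this last point. The constants \(c_p\) depend on \(D_N,N,L,\gamma_0\), and the bound only holds for \(\gamma\ge\gamma_0>0\). That is exactly why the statement explicitly disclaims uniformity in \(N\), depth, or \(\gamma\), and the proof only needs to note this dependence rather than remove it.
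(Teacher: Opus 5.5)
Your proposal is correct and follows the paper's route: insert the contraction identity $\mathbb E_\Theta[\Delta_K A\Delta_K]=n^{-1}(V\star A)$ with $A=R$ and $A=R^2$ into $L_{\mathrm{tr}}$ from Proposition~\ref{prop:loop-expansion-training-error}, then control the remainder by \eqref{eq:training-remainder-bound} and \eqref{eq:kernel-moment-assumption}. Jensen is not needed there, since the assumption already covers $p=3,\dots,6$ directly. Your explicit bound $\|V\star A\|_2\le c_2\|A\|_2$, which gives $\|L_{\mathrm{tr}}\|_2\le 3c_2\gamma^{-4}n^{-1}$, is a useful sharpening of the $O(n^{-1})$ claim that the paper leaves implicit.
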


\subsection{Expansion of the test error}

We now derive the loop expansion for the test error.
Using the kernel representation of the predictor,
\begin{equation}
f_{\Theta,D_N}(x)
:=
k_\Theta(x)^\top (K_\Theta+\frac{N\lambda}{n} I_N)^{-1}\mathbf y,
\end{equation}
the test error is written as
\begin{equation}
\mathcal E_{\mathrm{test}}(\Theta;D_N)
=
\int_{\mathcal X}
\Big(
k_\Theta(x)^\top (K_\Theta+\gamma I_N)^{-1}\mathbf y
-y(x)
\Big)^2\,d\mu(x).
\end{equation}

To simplify the notation, define
\begin{equation}
\gamma:=\frac{N\lambda}{n},
\qquad
R_\Theta:=(K_\Theta+\gamma I_N)^{-1},
\end{equation}
and introduce
\begin{equation}
C_\Theta
:=
\int_{\mathcal X}
k_\Theta(x)k_\Theta(x)^\top\,d\mu(x),
\end{equation}
\begin{equation}
\mathbf b_\Theta
:=
\int_{\mathcal X}
y(x)\,k_\Theta(x)\,d\mu(x),
\qquad
c
:=
\int_{\mathcal X} y(x)^2\,d\mu(x).
\end{equation}
Then the test error admits the quadratic form
\begin{equation}
\mathcal E_{\mathrm{test}}(\Theta;D_N)
=
\mathbf y^\top R_\Theta C_\Theta R_\Theta \mathbf y
-2\mathbf b_\Theta^\top R_\Theta \mathbf y
+c.
\end{equation}

\begin{proposition}[Centered expansion of the test error]
\label{prop:loop-expansion-test-error}
Let
\begin{equation}
K_\Theta=G+\Delta_K,
\qquad
C_\Theta=\bar C+\Delta_C,
\qquad
\mathbf b_\Theta=\bar{\mathbf b}+\Delta_b,
\end{equation}
where
\begin{equation}
G:=\mathbb E_\Theta[K_\Theta],
\qquad
\bar C:=\mathbb E_\Theta[C_\Theta],
\qquad
\bar{\mathbf b}:=\mathbb E_\Theta[\mathbf b_\Theta],
\end{equation}
and assume
\begin{equation}
\mathbb E_\Theta[\Delta_K]=0,
\qquad
\mathbb E_\Theta[\Delta_C]=0,
\qquad
\mathbb E_\Theta[\Delta_b]=0.
\end{equation}
Define further
\begin{equation}
R:=(G+\gamma I_N)^{-1}.
\end{equation}
Assume that \(\|\bar C\|_2\) and \(\|\bar{\mathbf b}\|_2\) remain
bounded and that the joint centered products of \(\Delta_K,\Delta_C,\Delta_b\)
appearing at total degree \(q=2,\ldots,6\) have integrable operator-norm moments of
order \(O(n^{-q/2})\). Then the expected test error admits the expansion
\begin{equation}
\mathbb E_\Theta[\mathcal E_{\mathrm{test}}]
=
\mathcal E_{\mathrm{test}}^{(\mathrm{mean})}
+
\mathcal E_{\mathrm{test}}^{(\mathrm{cov})}
+
\mathcal R_{\mathrm{test}},
\qquad
|\mathcal R_{\mathrm{test}}|=O(n^{-3/2}),
\end{equation}
where the mean-object baseline is
\begin{equation}
\mathcal E_{\mathrm{test}}^{(\mathrm{mean})}
=
\mathbf y^\top R\bar C R\mathbf y
-2\bar{\mathbf b}^\top R\mathbf y
+c,
\end{equation}
and the covariance-level correction is
\begin{equation}
\begin{split}
\mathcal E_{\mathrm{test}}^{(\mathrm{cov})}
&=
\mathbf y^\top
\Big[
R\,\mathbb E_\Theta[\Delta_K R\Delta_K]\,R\bar C R
+R\bar C R\,\mathbb E_\Theta[\Delta_K R\Delta_K]\,R \\
&\qquad\quad
+R\,\mathbb E_\Theta[\Delta_K R\bar C R\Delta_K]\,R
-R\,\mathbb E_\Theta[\Delta_K R\Delta_C]\,R
-R\,\mathbb E_\Theta[\Delta_C R\Delta_K]\,R
\Big]\mathbf y \\
&\qquad
-2\bar{\mathbf b}^\top R\,\mathbb E_\Theta[\Delta_K R\Delta_K]\,R\mathbf y
+2\,\mathbb E_\Theta[\Delta_b^\top R\Delta_K R]\mathbf y.
\end{split}
\end{equation}
The last term contains a resolvent on both sides of \(\Delta_K\); this
factor is retained in the contracted form below.
\end{proposition}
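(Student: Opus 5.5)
The plan is to follow the template of Proposition~\ref{prop:loop-expansion-training-error}: substitute the finite resolvent identity \eqref{eq:finite-resolvent-identity} into the quadratic form
\[
\mathcal E_{\mathrm{test}}=\mathbf y^\top R_\Theta C_\Theta R_\Theta\mathbf y-2\mathbf b_\Theta^\top R_\Theta\mathbf y+c .
\]
I then sort the resulting products by total degree in \((\Delta_K,\Delta_C,\Delta_b)\), take expectations, and bound everything of degree at least three. As before, I write \(R_\Theta=A_0+A_1+A_2+E_3\) with \(A_0=R\), \(A_1=-R\Delta_KR\), \(A_2=R\Delta_KR\Delta_KR\) and \(E_3=-R\Delta_KR\Delta_KR\Delta_KR_\Theta\). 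Positive semidefiniteness gives \(\|R\|_2,\|R_\Theta\|_2\le\gamma^{-1}\) for every realization, so no Neumann-series condition is needed.

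\textbf{Degree-zero and degree-two terms.} The degree-zero part is \(\mathbf y^\top R\bar CR\mathbf y-2\bar{\mathbf b}^\top R\mathbf y+c\), which is \(\mathcal E_{\mathrm{test}}^{(\mathrm{mean})}\). Every degree-one term is linear in a single centered fluctuation, so it vanishes in expectation. For the quadratic form, the degree-two contributions are
\[
A_2\bar CA_0,\quad A_0\bar CA_2,\quad A_1\bar CA_1,\quad A_1\Delta_CA_0,\quad A_0\Delta_CA_1 .
\]
Taking expectations, these give in order
\[
R\,\mathbb E[\Delta_KR\Delta_K]R\bar CR,\quad R\bar CR\,\mathbb E[\Delta_KR\Delta_K]R,\quad R\,\mathbb E[\Delta_KR\bar CR\Delta_K]R,
\]
\[
-R\,\mathbb E[\Delta_KR\Delta_C]R,\quad -R\,\mathbb E[\Delta_CR\Delta_K]R .
\]
The deterministic outer factors \(R\) and \(\bar C\) are pulled through the expectation. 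For the linear term, the degree-two part of \(-2(\bar{\mathbf b}+\Delta_b)^\top(A_0+A_1+A_2)\mathbf y\) is \(-2\bar{\mathbf b}^\top A_2\mathbf y-2\Delta_b^\top A_1\mathbf y\). This gives \(-2\bar{\mathbf b}^\top R\,\mathbb E[\Delta_KR\Delta_K]R\mathbf y+2\,\mathbb E[\Delta_b^\top R\Delta_KR]\mathbf y\), which matches the displayed \(\mathcal E_{\mathrm{test}}^{(\mathrm{cov})}\) term by term, including the double resolvent around \(\Delta_K\) in the last term.

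\textbf{Remainder.} This is the step needing care. Naively expanding both resolvents to third order in \((A_0+A_1+A_2+E_3)(\bar C+\Delta_C)(A_0+A_1+A_2+E_3)\) produces terms of degree up to seven, e.g.\ \(E_3\Delta_CE_3\), which exceeds the assumed moment range \(q\le6\). To avoid this, I split \(C_\Theta=\bar C+\Delta_C\) and use truncations of different length.
\begin{itemize}
\item For the \(\bar C\) part I use the full identity and bound the remainder exactly as in Proposition~\ref{prop:loop-expansion-training-error}. This gives a polynomial in \(\delta_\Theta/\gamma\) of degrees three to six, times \(\|\bar C\|_2\gamma^{-2}\).
\item For the \(\Delta_C\) part I use the shorter exact identity \(R_\Theta=R-R\Delta_KR+R\Delta_KR\Delta_KR_\Theta\) on each side. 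All discarded terms then have degree between three and five and take the form \(\|\Delta_C\|_2\delta_\Theta^{k}\gamma^{-k-2}\) with \(k=2,3,4\).
\item For the \(\mathbf b\) term I write \(R_\Theta=Q+E_3\) when multiplied by \(\bar{\mathbf b}\), and \(R_\Theta=R-R\Delta_KR+R\Delta_KR\Delta_KR_\Theta\) when multiplied by \(\Delta_b\). The remainders are then \(\|\bar{\mathbf b}\|_2\delta_\Theta^3\gamma^{-4}\|\mathbf y\|\) and \(\|\Delta_b\|_2\delta_\Theta^2\gamma^{-3}\|\mathbf y\|\).
\end{itemize}
Each discarded term is a product of norms of total fluctuation degree \(q\in\{3,\dots,6\}\). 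By the assumed joint moment bounds and the boundedness of \(\|\bar C\|_2\) and \(\|\bar{\mathbf b}\|_2\), each has expectation \(O(n^{-q/2})\le O(n^{-3/2})\), with constants depending on \(\gamma_0\), \(\|\mathbf y\|\) and \(D_N\).

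I expect the main obstacle to be this bookkeeping: choosing truncation lengths so that every residual term, including the random right resolvent \(R_\Theta\) inside the error terms, stays within the assumed moment range. Its absolute integrability must follow from the mixed-moment hypothesis, which I would invoke via H\"older if a product is not literally covered. The algebraic identification of the covariance term is routine once the degrees are sorted.
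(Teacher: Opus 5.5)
Your proposal is correct and follows the paper's proof. Both substitute the finite resolvent identity into $\mathbf y^\top R_\Theta C_\Theta R_\Theta\mathbf y-2\mathbf b_\Theta^\top R_\Theta\mathbf y+c$, drop the degree-one terms by centering, read off the degree-two terms, and bound the rest using $\|R\|_2,\|R_\Theta\|_2\le\gamma^{-1}$ and the joint moment hypothesis. Your one departure, the shorter identity $R_\Theta=R-R\Delta_KR+R\Delta_KR\Delta_KR_\Theta$ beside $\Delta_C$ and $\Delta_b$, improves the bookkeeping: the paper's uniform third-order substitution creates degree-seven terms such as $E_{3,\Theta}\Delta_C E_{3,\Theta}$, which it covers only by a brief appeal to H\"older, whereas your truncation keeps every discarded term within the assumed degree range $3$--$6$.
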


\begin{proof}
Starting from the definition of the test error, we expand the square:
\begin{equation}
\begin{split}
\mathcal E_{\mathrm{test}}(\Theta;D_N)
&=
\int_{\mathcal X}
\Big(
k_\Theta(x)^\top R_\Theta \mathbf y
-y(x)
\Big)^2\,d\mu(x) \\
&=
\int_{\mathcal X}
\Big(
\mathbf y^\top R_\Theta k_\Theta(x)k_\Theta(x)^\top R_\Theta \mathbf y
-2y(x)\,k_\Theta(x)^\top R_\Theta \mathbf y
+y(x)^2
\Big)\,d\mu(x).
\end{split}
\end{equation}
Using the definitions of \(C_\Theta\), \(\mathbf b_\Theta\), and \(c\), this becomes
\begin{equation}
\mathcal E_{\mathrm{test}}(\Theta;D_N)
=
\mathbf y^\top R_\Theta C_\Theta R_\Theta \mathbf y
-2\mathbf b_\Theta^\top R_\Theta \mathbf y
+c.
\end{equation}

Next, write
\begin{equation}
K_\Theta=G+\Delta_K,
\qquad
C_\Theta=\bar C+\Delta_C,
\qquad
\mathbf b_\Theta=\bar{\mathbf b}+\Delta_b,
\end{equation}
so that
\begin{equation}
R_\Theta
=
(K_\Theta+\gamma I_N)^{-1}
=
(R^{-1}+\Delta_K)^{-1}.
\end{equation}
The finite resolvent identity \eqref{eq:finite-resolvent-identity} gives
\begin{equation}
R_\Theta
=
R-R\Delta_KR+R\Delta_KR\Delta_KR+E_{3,\Theta},
\end{equation}
where
\(\|E_{3,\Theta}\|_2\leq
\gamma^{-4}\|\Delta_K\|_2^3\).

Substituting this into
\(
\mathbf y^\top R_\Theta C_\Theta R_\Theta \mathbf y
-2\mathbf b_\Theta^\top R_\Theta \mathbf y
+c
\)
and expanding order by order in the fluctuations, the linear terms vanish after taking expectation because
\(
\mathbb E_\Theta[\Delta_K]
=
\mathbb E_\Theta[\Delta_C]
=
\mathbb E_\Theta[\Delta_b]
=
0
\).
Hence the mean-object contribution is
\begin{equation}
\mathcal E_{\mathrm{test}}^{(\mathrm{mean})}
=
\mathbf y^\top R\bar C R\mathbf y
-2\bar{\mathbf b}^\top R\mathbf y
+c.
\end{equation}

Collecting all quadratic terms in the fluctuations yields
\begin{equation}
\begin{split}
\mathcal E_{\mathrm{test}}^{(\mathrm{cov})}
&=
\mathbf y^\top
\Big[
R\,\mathbb E_\Theta[\Delta_K R\Delta_K]\,R\bar C R
+R\bar C R\,\mathbb E_\Theta[\Delta_K R\Delta_K]\,R \\
&\qquad\quad
+R\,\mathbb E_\Theta[\Delta_K R\bar C R\Delta_K]\,R
-R\,\mathbb E_\Theta[\Delta_K R\Delta_C]\,R
-R\,\mathbb E_\Theta[\Delta_C R\Delta_K]\,R
\Big]\mathbf y \\
&\qquad
-2\bar{\mathbf b}^\top R\,\mathbb E_\Theta[\Delta_K R\Delta_K]\,R\mathbf y
+2\,\mathbb E_\Theta[\Delta_b^\top R\Delta_K R]\mathbf y.
\end{split}
\end{equation}
Every omitted product has total centered degree at least three or contains
\(E_{3,\Theta}\). Since
\(\|R_\Theta\|_2,\|R\|_2\leq\gamma^{-1}\), the stated joint
moment assumptions and Hölder's inequality bound their expected contribution
by \(O(n^{-3/2})\). This proves the claim without an infinite Neumann series.
\end{proof}

The exact means \(\bar C\) and \(\bar{\mathbf b}\) are retained in the
baseline. Their finite-width shifts must therefore not be added a second time
to the covariance correction.

For \(x\in\mathcal X\), define the train--test mean kernel vector
\begin{equation}
g(x):=\mathbb E_\Theta[k_\Theta(x)],
\qquad
G_{x\alpha}:=g_\alpha(x).
\end{equation}
With \(\eta(x):=k_\Theta(x)-g(x)\), define the mixed empirical-kernel
vertices by
\begin{equation}
\begin{split}
V_{(x\alpha)(x'\beta)}
&:=n\,\mathbb E_\Theta[\eta_\alpha(x)\eta_\beta(x')],\\
V_{(\alpha\beta)(x\gamma)}
&:=n\,\mathbb E_\Theta[(\Delta_K)_{\alpha\beta}\eta_\gamma(x)].
\end{split}
\label{eq:mixed-effective-vertices}
\end{equation}
As in \eqref{eq:KV-identification}, these are effective covariance tensors,
not connected preactivation vertices.

\begin{proposition}[Population operators and mixed fluctuation tensors in terms of \(G\) and \(V\)]
\label{cor:Cb-GV}
Let
\begin{equation}
C_\Theta
=
\int_{\mathcal X} k_\Theta(x)k_\Theta(x)^\top\,d\mu(x),
\qquad
\mathbf b_\Theta
=
\int_{\mathcal X} y(x)k_\Theta(x)\,d\mu(x),
\end{equation}
and write
\begin{equation}
C_\Theta=\bar C+\Delta_C,
\qquad
\mathbf b_\Theta=\bar{\mathbf b}+\Delta_b,
\end{equation}
where
\begin{equation}
\bar C:=\mathbb E_\Theta[C_\Theta],
\qquad
\bar{\mathbf b}:=\mathbb E_\Theta[\mathbf b_\Theta].
\end{equation}
By definition, the mixed train--test kernel fluctuations satisfy
\begin{equation}
\mathbb E_\Theta[\eta_\alpha(x)\eta_\beta(x')]
=
\frac{1}{n}V_{(x\alpha)(x'\beta)},
\label{eq:mixed-eta-eta}
\end{equation}
and
\begin{equation}
\mathbb E_\Theta[(\Delta_K)_{\alpha\beta}\eta_\gamma(x)]
=
\frac{1}{n}V_{(\alpha\beta)(x\gamma)}.
\label{eq:mixed-delta-eta}
\end{equation}
Under the integrable-envelope condition needed to interchange the population
and ensemble expectations, the following mean identities are exact:
\begin{equation}
\bar C_{\alpha\beta}
=
\int_{\mathcal X}
G_{x\alpha}G_{x\beta}\,d\mu(x)
+
\frac{1}{n}
\int_{\mathcal X}V_{(x\alpha)(x\beta)}\,d\mu(x),
\label{eq:Cbar-GV}
\end{equation}
and
\begin{equation}
\bar b_\alpha
=
\int_{\mathcal X}y(x)\,G_{x\alpha}\,d\mu(x).
\label{eq:bbar-G}
\end{equation}
Assume in addition that the mixed centered third moments involving one
\(\Delta_K\) and two \(\eta\)'s are \(O(n^{-3/2})\), uniformly under the
same envelope. Then, for every \(A\in\mathbb R^{N\times N}\),
\begin{equation}
\mathbb E_\Theta[\Delta_K A\Delta_C]_{\alpha\delta}
=
\frac{1}{n}
\sum_{\beta,\gamma=1}^N
A_{\beta\gamma}
\int_{\mathcal X}
\Big(
V_{(\alpha\beta)(x\gamma)}G_{x\delta}
+
V_{(\alpha\beta)(x\delta)}G_{x\gamma}
\Big)\,d\mu(x)
+
O(n^{-3/2}),
\label{eq:delta-A-deltaC}
\end{equation}
\begin{equation}
\mathbb E_\Theta[\Delta_C A\Delta_K]_{\alpha\delta}
=
\frac{1}{n}
\sum_{\beta,\gamma=1}^N
A_{\beta\gamma}
\int_{\mathcal X}
\Big(
G_{x\alpha}V_{(x\beta)(\gamma\delta)}
+
V_{(x\alpha)(\gamma\delta)}G_{x\beta}
\Big)\,d\mu(x)
+
O(n^{-3/2}),
\label{eq:deltaC-A-delta}
\end{equation}
and
\begin{equation}
\mathbb E_\Theta[\Delta_b^\top A\Delta_K]_\delta
=
\frac{1}{n}
\sum_{\alpha,\beta=1}^N
A_{\alpha\beta}
\int_{\mathcal X}
y(x)\,
V_{(x\alpha)(\beta\delta)}\,d\mu(x).
\label{eq:deltab-A-delta}
\end{equation}
\end{proposition}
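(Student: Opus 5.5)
The plan is to substitute the decomposition \(k_\Theta(x)=g(x)+\eta(x)\) into the definitions of \(C_\Theta\) and \(\mathbf b_\Theta\). This gives each fluctuation as an explicit polynomial in \(\eta\). Every identity then follows from the definitions \eqref{eq:mixed-eta-eta}--\eqref{eq:mixed-delta-eta} of the mixed vertices, together with \(\mathbb E_\Theta[\eta(x)]=0\) and \(\mathbb E_\Theta[\Delta_K]=0\). The envelope condition is used only to justify Fubini, i.e.\ to exchange \(\int d\mu\) with \(\mathbb E_\Theta\); I would state this once and use it throughout.

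\textbf{Mean identities.} Expanding the outer product gives
\[
k_\Theta k_\Theta^\top=gg^\top+g\eta^\top+\eta g^\top+\eta\eta^\top .
\]
Taking \(\mathbb E_\Theta\) kills the two cross terms. Definition \eqref{eq:mixed-eta-eta} with \(x'=x\) turns the last term into \(\frac1n V_{(x\alpha)(x\beta)}\). Integrating against \(\mu\) gives \eqref{eq:Cbar-GV}. For \eqref{eq:bbar-G}, note that \(y(x)\) is deterministic, so linearity gives \(\mathbb E_\Theta[\mathbf b_\Theta]=\int y\,g\,d\mu\).

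\textbf{Mixed products.} From the expansion above,
\[
\Delta_C=\int\bigl(g\eta^\top+\eta g^\top\bigr)\,d\mu+\int\bigl(\eta\eta^\top-\mathbb E\,\eta\eta^\top\bigr)\,d\mu,
\qquad
\Delta_b=\int y\,\eta\,d\mu .
\]
For \eqref{eq:delta-A-deltaC}, write the entry
\((\Delta_KA\Delta_C)_{\alpha\delta}=\sum_{\beta\gamma}(\Delta_K)_{\alpha\beta}A_{\beta\gamma}(\Delta_C)_{\gamma\delta}\) and treat the two parts of \(\Delta_C\) separately.
\begin{itemize}
\item The linear part contributes \(\mathbb E[(\Delta_K)_{\alpha\beta}\eta_\gamma(x)]G_{x\delta}+\mathbb E[(\Delta_K)_{\alpha\beta}\eta_\delta(x)]G_{x\gamma}\), which by \eqref{eq:mixed-delta-eta} is \(\frac1n\bigl(V_{(\alpha\beta)(x\gamma)}G_{x\delta}+V_{(\alpha\beta)(x\delta)}G_{x\gamma}\bigr)\).
\item The centered quadratic part gives \(\mathbb E[(\Delta_K)_{\alpha\beta}\eta_\gamma\eta_\delta]\), because the subtracted mean multiplies \(\mathbb E\Delta_K=0\). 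This is a mixed centered third moment, which is \(O(n^{-3/2})\) uniformly in \(x\) by assumption.
\end{itemize}
Integrating over \(\mu\) (finite \(N\), fixed \(A\)) yields \eqref{eq:delta-A-deltaC}.

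\Cref{eq:deltaC-A-delta} follows by the transposed computation: the \(\Delta_C\) index now sits on the left, and the symmetry \(V_{(\alpha\beta)(x\gamma)}=V_{(x\gamma)(\alpha\beta)}\) of the covariance tensor is used to write the result in the stated index order.

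For \eqref{eq:deltab-A-delta}, \(\Delta_b\) is exactly linear in \(\eta\), so no remainder appears:
\[
\mathbb E\bigl[(\Delta_b)_\alpha A_{\alpha\beta}(\Delta_K)_{\beta\delta}\bigr]=\frac1n\int y\,V_{(x\alpha)(\beta\delta)}\,d\mu .
\]
This is why that identity is stated as exact.

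\textbf{Main obstacle.} The algebra is routine; the delicate point is rigor. First, one has to check that the envelope condition indeed dominates \(|\Delta_K|\,|\eta|\,|g|\) and \(|\Delta_K|\,|\eta|^2\) uniformly enough for Fubini. Second, the pointwise \(O(n^{-3/2})\) bound on the third moments must survive integration, which requires the uniformity in \(x\) assumed in the statement together with \(\mu\) being a probability measure. I would state these conditions explicitly rather than derive them.
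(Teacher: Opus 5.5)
Your proposal is correct and follows the paper's proof essentially step by step. You decompose \(k_\Theta=g+\eta\) and use \(\mathbb E_\Theta[\eta]=0\) together with the vertex definitions to get the mean identities; you split \(\Delta_C\) into its linear part and its centered quadratic part and discard the latter by the third-moment assumption; and you note that \(\Delta_b=\int y\,\eta\,d\mu\) is exactly linear, so its contraction has no remainder. Your remarks that \(\mathbb E_\Theta[\Delta_K]=0\) kills the subtracted mean, and that the symmetry \(V_{(\alpha\beta)(x\gamma)}=V_{(x\gamma)(\alpha\beta)}\) fixes the index order, only make explicit what the paper uses implicitly.
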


\begin{proof}
For \(x\in\mathcal X\), define the train--test mean kernel vector by
\begin{equation}
g(x):=\mathbb E_\Theta[k_\Theta(x)],
\qquad
g_\alpha(x)=G_{x\alpha}.
\end{equation}
Then
\begin{equation}
\eta_\alpha(x):=k_{\Theta,\alpha}(x)-G_{x\alpha}
\end{equation}
satisfies
\begin{equation}
\mathbb E_\Theta[\eta_\alpha(x)]=0
\end{equation}
by definition. Hence
\begin{equation}
C_{\Theta,\alpha\beta}
=
\int_{\mathcal X}
\bigl(g_\alpha(x)+\eta_\alpha(x)\bigr)
\bigl(g_\beta(x)+\eta_\beta(x)\bigr)
\,d\mu(x).
\end{equation}
Taking the ensemble expectation gives
\begin{equation}
\bar C_{\alpha\beta}
=
\mathbb E_\Theta[C_{\Theta,\alpha\beta}]
=
\int_{\mathcal X}
g_\alpha(x)g_\beta(x)\,d\mu(x)
+
\int_{\mathcal X}
\mathbb E_\Theta[\eta_\alpha(x)\eta_\beta(x)]
\,d\mu(x).
\end{equation}
Using
\begin{equation}
\mathbb E_\Theta[\eta_\alpha(x)\eta_\beta(x)]
=
\frac{1}{n}V_{(x\alpha)(x\beta)},
\end{equation}
we obtain
\begin{equation}
\bar C_{\alpha\beta}
=
\int_{\mathcal X}
G_{x\alpha}G_{x\beta}\,d\mu(x)
+
\frac{1}{n}
\int_{\mathcal X}
V_{(x\alpha)(x\beta)}
\,d\mu(x).
\end{equation}
Similarly,
\begin{equation}
b_{\Theta,\alpha}
=
\int_{\mathcal X}
y(x)\bigl(g_\alpha(x)+\eta_\alpha(x)\bigr)\,d\mu(x).
\end{equation}
Therefore,
\begin{equation}
\bar b_\alpha
=
\mathbb E_\Theta[b_{\Theta,\alpha}]
=
\int_{\mathcal X}
y(x)G_{x\alpha}\,d\mu(x)
+
\int_{\mathcal X}
y(x)\mathbb E_\Theta[\eta_\alpha(x)]\,d\mu(x).
\end{equation}
Since \(\mathbb E_\Theta[\eta_\alpha(x)]=0\) exactly, we get
\begin{equation}
\bar b_\alpha
=
\int_{\mathcal X}y(x)G_{x\alpha}\,d\mu(x).
\end{equation}

It remains to compute the mixed contractions. Since
\begin{equation}
\Delta_C=C_\Theta-\bar C,
\end{equation}
we have
\begin{equation}
\Delta_{C,\gamma\delta}
=
\int_{\mathcal X}
\Bigl(
G_{x\gamma}\eta_\delta(x)
+
\eta_\gamma(x)G_{x\delta}
+
\eta_\gamma(x)\eta_\delta(x)
-
\mathbb E_\Theta[\eta_\gamma(x)\eta_\delta(x)]
\Bigr)
\,d\mu(x).
\end{equation}
The quadratic centered term
\begin{equation}
\eta_\gamma(x)\eta_\delta(x)
-
\mathbb E_\Theta[\eta_\gamma(x)\eta_\delta(x)]
\end{equation}
has an \(O(n^{-3/2})\) expected contraction with one additional
\(\Delta_K\) by assumption. Therefore only the two terms linear in
\(\eta\) contribute to the contraction at order \(n^{-1}\).

Now
\begin{equation}
(\Delta_K A\Delta_C)_{\alpha\delta}
=
\sum_{\beta,\gamma=1}^N
(\Delta_K)_{\alpha\beta}A_{\beta\gamma}\Delta_{C,\gamma\delta}.
\end{equation}
Substituting the leading expression for \(\Delta_C\), we obtain
\begin{equation}
\begin{split}
\mathbb E_\Theta[(\Delta_K A\Delta_C)_{\alpha\delta}]
={}&
\sum_{\beta,\gamma=1}^N
A_{\beta\gamma}
\int_{\mathcal X}
\Bigl(
G_{x\gamma}\mathbb E_\Theta[(\Delta_K)_{\alpha\beta}\eta_\delta(x)]\\
&\qquad\qquad
+
G_{x\delta}\mathbb E_\Theta[(\Delta_K)_{\alpha\beta}\eta_\gamma(x)]
\Bigr)
\,d\mu(x)
+
O(n^{-3/2}).
\end{split}
\end{equation}
Using
\begin{equation}
\mathbb E_\Theta[(\Delta_K)_{\alpha\beta}\eta_\gamma(x)]
=
\frac{1}{n}V_{(\alpha\beta)(x\gamma)},
\end{equation}
we get
\begin{equation}
\mathbb E_\Theta[\Delta_K A\Delta_C]_{\alpha\delta}
=
\frac{1}{n}
\sum_{\beta,\gamma=1}^N
A_{\beta\gamma}
\int_{\mathcal X}
\Big(
V_{(\alpha\beta)(x\gamma)}G_{x\delta}
+
V_{(\alpha\beta)(x\delta)}G_{x\gamma}
\Big)
\,d\mu(x)
+
O(n^{-3/2}).
\end{equation}

The proof of the second contraction is analogous. Since
\begin{equation}
(\Delta_C A\Delta_K)_{\alpha\delta}
=
\sum_{\beta,\gamma=1}^N
\Delta_{C,\alpha\beta}A_{\beta\gamma}(\Delta_K)_{\gamma\delta},
\end{equation}
and retaining its terms linear in \(\eta\), we obtain
\begin{equation}
\begin{split}
\mathbb E_\Theta[(\Delta_C A\Delta_K)_{\alpha\delta}]
={}&
\sum_{\beta,\gamma=1}^N
A_{\beta\gamma}
\int_{\mathcal X}
\Bigl(
G_{x\alpha}\mathbb E_\Theta[\eta_\beta(x)(\Delta_K)_{\gamma\delta}]\\
&\qquad\qquad
+
G_{x\beta}\mathbb E_\Theta[\eta_\alpha(x)(\Delta_K)_{\gamma\delta}]
\Bigr)
\,d\mu(x)
+
O(n^{-3/2}).
\end{split}
\end{equation}
Therefore,
\begin{equation}
\mathbb E_\Theta[\Delta_C A\Delta_K]_{\alpha\delta}
=
\frac{1}{n}
\sum_{\beta,\gamma=1}^N
A_{\beta\gamma}
\int_{\mathcal X}
\Big(
G_{x\alpha}V_{(x\beta)(\gamma\delta)}
+
V_{(x\alpha)(\gamma\delta)}G_{x\beta}
\Big)
\,d\mu(x)
+
O(n^{-3/2}).
\end{equation}

Finally, since
\begin{equation}
\Delta_b=\mathbf b_\Theta-\bar{\mathbf b},
\end{equation}
we have
\begin{equation}
\Delta_{b,\alpha}
=
\int_{\mathcal X}y(x)\eta_\alpha(x)\,d\mu(x).
\end{equation}
Thus
\begin{equation}
(\Delta_b^\top A\Delta_K)_\delta
=
\sum_{\alpha,\beta=1}^N
\Delta_{b,\alpha}A_{\alpha\beta}(\Delta_K)_{\beta\delta}.
\end{equation}
Taking the expectation gives
\begin{equation}
\mathbb E_\Theta[(\Delta_b^\top A\Delta_K)_\delta]
=
\sum_{\alpha,\beta=1}^N
A_{\alpha\beta}
\int_{\mathcal X}
y(x)\mathbb E_\Theta[\eta_\alpha(x)(\Delta_K)_{\beta\delta}]
\,d\mu(x).
\end{equation}
Using the mixed covariance relation,
\begin{equation}
\mathbb E_\Theta[\eta_\alpha(x)(\Delta_K)_{\beta\delta}]
=
\frac{1}{n}V_{(x\alpha)(\beta\delta)},
\end{equation}
we obtain
\begin{equation}
\mathbb E_\Theta[\Delta_b^\top A\Delta_K]_\delta
=
\frac{1}{n}
\sum_{\alpha,\beta=1}^N
A_{\alpha\beta}
\int_{\mathcal X}
y(x)
V_{(x\alpha)(\beta\delta)}
\,d\mu(x).
\end{equation}
This completes the proof.
\end{proof}

\begin{corollary}[Test covariance correction in terms of \(G\) and \(V\)]
\label{cor:test-error-GV}
Under the assumptions of Proposition~\ref{cor:Cb-GV}, the mean-object
baseline is
\begin{equation}
\mathcal E_{\mathrm{test}}^{(\mathrm{mean})}
=
\mathbf y^\top R\bar C R\mathbf y
-
2\bar{\mathbf b}^\top R\mathbf y
+
c,
\qquad
R=(G+\gamma I_N)^{-1},
\end{equation}
with \(\bar C\) and \(\bar{\mathbf b}\) given by
\eqref{eq:Cbar-GV} and \eqref{eq:bbar-G}. Its covariance-level correction is
\begin{equation}
\begin{split}
\mathcal E_{\mathrm{test}}^{(\mathrm{cov})}
=
\frac{1}{n}\Big[&
\mathbf y^\top R(V\star R)R\bar C R\mathbf y
+
\mathbf y^\top R\bar C R(V\star R)R\mathbf y \\
&+
\mathbf y^\top R(V\star(R\bar C R))R\mathbf y
-
\mathbf y^\top R\,\mathcal M_{KC}[R]\,R\mathbf y
-
\mathbf y^\top R\,\mathcal M_{CK}[R]\,R\mathbf y \\
&-
2\bar{\mathbf b}^\top R(V\star R)R\mathbf y
+
2\,\mathcal M_{bK}[R]R\mathbf y
\Big]
+
O(n^{-3/2}).
\end{split}
\label{eq:Ttest2-GV}
\end{equation}
where
\begin{equation}
\bigl(\mathcal M_{KC}[A]\bigr)_{\alpha\delta}
:=
\sum_{\beta,\gamma=1}^N
A_{\beta\gamma}
\int_{\mathcal X}
\Big(
V_{(\alpha\beta)(x\gamma)}G_{x\delta}
+
V_{(\alpha\beta)(x\delta)}G_{x\gamma}
\Big)\,d\mu(x),
\end{equation}
\begin{equation}
\bigl(\mathcal M_{CK}[A]\bigr)_{\alpha\delta}
:=
\sum_{\beta,\gamma=1}^N
A_{\beta\gamma}
\int_{\mathcal X}
\Big(
G_{x\alpha}V_{(x\beta)(\gamma\delta)}
+
V_{(x\alpha)(\gamma\delta)}G_{x\beta}
\Big)\,d\mu(x),
\end{equation}
and
\begin{equation}
\bigl(\mathcal M_{bK}[A]\bigr)_\delta
:=
\sum_{\alpha,\beta=1}^N
A_{\alpha\beta}
\int_{\mathcal X}
y(x)\,V_{(x\alpha)(\beta\delta)}\,d\mu(x).
\end{equation}
Thus, for fixed \(D_N,N,L,\gamma\), the centered correction is
\(O(n^{-1})\) and is determined by the mean objects together with the
train--train and mixed empirical-kernel covariance vertices. The final
\(R\) in the \(\mathcal M_{bK}\) term is essential.
\end{corollary}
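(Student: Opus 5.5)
The plan is to obtain Corollary~\ref{cor:test-error-GV} by substituting the contraction identities of Proposition~\ref{cor:Cb-GV} and \eqref{eq:delta-A-delta} into the centered expansion of Proposition~\ref{prop:loop-expansion-test-error}. No new analysis should be needed beyond tracking which contraction fills which slot and where the resolvents sit.

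\textbf{Baseline.} The mean-object baseline is copied verbatim from Proposition~\ref{prop:loop-expansion-test-error}. The exact identities \eqref{eq:Cbar-GV} and \eqref{eq:bbar-G} then express \(\bar C\) and \(\bar{\mathbf b}\) through \(G_{x\alpha}\) and the diagonal mixed vertex \(V_{(x\alpha)(x\beta)}\). Since these identities are exact, no error is incurred here, and the \(1/n\) shift of \(\bar C\) stays inside the baseline rather than being counted again in the correction.

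\textbf{Covariance correction.} I will go through the seven terms of \(\mathcal E_{\mathrm{test}}^{(\mathrm{cov})}\) in Proposition~\ref{prop:loop-expansion-test-error} one at a time.
\begin{itemize}
\item The three terms with \(\mathbb E_\Theta[\Delta_K A\Delta_K]\), taking \(A=R\) twice and \(A=R\bar CR\) once, become \(n^{-1}(V\star A)\) exactly by \eqref{eq:delta-A-delta}.
\item The term \(-2\bar{\mathbf b}^\top R\,\mathbb E_\Theta[\Delta_KR\Delta_K]R\mathbf y\) likewise becomes \(-2n^{-1}\bar{\mathbf b}^\top R(V\star R)R\mathbf y\).
\item The two mixed terms \(\mathbb E_\Theta[\Delta_KR\Delta_C]\) and \(\mathbb E_\Theta[\Delta_CR\Delta_K]\) are replaced using \eqref{eq:delta-A-deltaC} and \eqref{eq:deltaC-A-delta} with \(A=R\). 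This gives \(n^{-1}\mathcal M_{KC}[R]\) and \(n^{-1}\mathcal M_{CK}[R]\), each up to \(O(n^{-3/2})\).
\item The last term \(2\,\mathbb E_\Theta[\Delta_b^\top R\Delta_KR]\mathbf y\) is rewritten as \(2\,\mathbb E_\Theta[\Delta_b^\top R\Delta_K]\,R\mathbf y\), pulling the deterministic right factor \(R\) out of the expectation. Applying \eqref{eq:deltab-A-delta} with \(A=R\) then gives \(2n^{-1}\mathcal M_{bK}[R]R\mathbf y\). This step is why the final \(R\) is essential.
\end{itemize}

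\textbf{Error control.} Each contraction error is an \(O(n^{-3/2})\) matrix or vector. It is sandwiched between deterministic factors \(\mathbf y\), \(R\), \(\bar C\), \(\bar{\mathbf b}\), whose norms are bounded by \(\|R\|_2\le\gamma^{-1}\le\gamma_0^{-1}\) and by the boundedness of \(\|\bar C\|_2\) and \(\|\bar{\mathbf b}\|_2\). With \(N\) fixed, so that finite sums over indices are harmless, each error therefore contributes \(O(n^{-3/2})\) to the scalar. Adding \(\mathcal R_{\mathrm{test}}=O(n^{-3/2})\) from Proposition~\ref{prop:loop-expansion-test-error} yields the stated remainder.

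The \(O(n^{-1})\) claim for the bracket follows because \(V\), the mixed vertices and \(G_{x\alpha}\) are \(O(1)\) under the moment and envelope assumptions. By Cauchy--Schwarz, \(n\,\mathbb E|\Delta\,\eta|\) is \(O(1)\) given the second-moment bounds, and the envelope permits integrating against \(\mu\).

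\textbf{Main obstacle.} The hardest part is bookkeeping rather than analysis. The index placement in \(\mathcal M_{KC}\) and \(\mathcal M_{CK}\) must match the order of the factors \(\Delta_K,A,\Delta_C\), and the transposition in the \(\Delta_b\) term needs care. I will check this by writing each term as an explicit index sum and comparing it with the definitions of the \(\mathcal M\)'s.
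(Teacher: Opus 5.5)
Your proposal is correct and follows the paper's route: the paper gives no separate proof of this corollary, which is just the term-by-term substitution of \eqref{eq:delta-A-delta} and the contractions of Proposition~\ref{cor:Cb-GV} into Proposition~\ref{prop:loop-expansion-test-error} (with $A=R$, or $A=R\bar C R$ in the third term), together with pulling the deterministic trailing $R$ out of $\mathbb E_\Theta[\Delta_b^\top R\Delta_K R]$. One small point: for the $O(n^{-1})$ claim it is cleaner to note that the bracket divided by $n$ matches the degree-two expectations of Proposition~\ref{prop:loop-expansion-test-error} up to $O(n^{-3/2})$, and those are $O(n^{-1})$ directly by that proposition's joint-moment hypotheses, so you do not need pointwise second-moment bounds on $\eta(x)$, which the paper never states explicitly.
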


\subsection{Expansion of the generalization gap}
We now turn to the generalization gap, defined as the difference between the test error
and the training error:
\begin{equation}
\mathcal E_{\mathrm{gen}}(\Theta;D_N)
:=
\mathcal E_{\mathrm{test}}(\Theta;D_N)
-
\mathcal E_{\mathrm{tr}}(\Theta;D_N).
\end{equation}
Using the kernel representations derived above, we have
\begin{equation}
\mathcal E_{\mathrm{tr}}(\Theta;D_N)
=
\frac{\gamma^{2}}{N}\,\mathbf y^\top (K_\Theta+\gamma I_N)^{-2}\mathbf y,
\end{equation}
and
\begin{equation}
\mathcal E_{\mathrm{test}}(\Theta;D_N)
=
\mathbf y^\top R_\Theta C_\Theta R_\Theta \mathbf y
-2\mathbf b_\Theta^\top R_\Theta \mathbf y
+c,
\end{equation}
where
\begin{equation}
R_\Theta:=(K_\Theta+\gamma I_N)^{-1}.
\end{equation}
\begin{equation}
C_\Theta:=\int_{\mathcal X} k_\Theta(x)k_\Theta(x)^\top\,d\mu(x),
\qquad
\mathbf b_\Theta:=\int_{\mathcal X} y(x)\,k_\Theta(x)\,d\mu(x),
\qquad
c:=\int_{\mathcal X} y(x)^2\,d\mu(x).
\end{equation}
Therefore,
\begin{equation}
\mathcal E_{\mathrm{gen}}(\Theta;D_N)
=
\mathbf y^\top R_\Theta C_\Theta R_\Theta \mathbf y
-2\mathbf b_\Theta^\top R_\Theta \mathbf y
+c
-
\frac{\gamma^2}{N}\,\mathbf y^\top R_\Theta^2 \mathbf y.
\end{equation}

\begin{proposition}[Centered expansion of the conditional generalization gap]
\label{prop:loop-expansion-generalization-gap}
Let
\begin{equation}
K_\Theta=G+\Delta_K,
\qquad
C_\Theta=\bar C+\Delta_C,
\qquad
\mathbf b_\Theta=\bar{\mathbf b}+\Delta_b,
\end{equation}
where
\begin{equation}
G:=\mathbb E_\Theta[K_\Theta],
\qquad
\bar C:=\mathbb E_\Theta[C_\Theta],
\qquad
\bar{\mathbf b}:=\mathbb E_\Theta[\mathbf b_\Theta],
\end{equation}
and assume
\begin{equation}
\mathbb E_\Theta[\Delta_K]=0,
\qquad
\mathbb E_\Theta[\Delta_C]=0,
\qquad
\mathbb E_\Theta[\Delta_b]=0.
\end{equation}
Define further
\begin{equation}
\gamma:=\frac{N\lambda}{n},
\qquad
R:=(G+\gamma I_N)^{-1}.
\end{equation}
Then the expected generalization gap admits the expansion
\begin{equation}
\mathbb E_\Theta[\mathcal E_{\mathrm{gen}}]
=
\mathcal E_{\mathrm{gen}}^{(\mathrm{mean})}
+
\mathcal E_{\mathrm{gen}}^{(\mathrm{cov})}
+
O(n^{-3/2}),
\end{equation}
where
\begin{equation}
\mathcal E_{\mathrm{gen}}^{(\mathrm{mean})}
:=
\mathcal E_{\mathrm{test}}^{(\mathrm{mean})}
-\mathcal E_{\mathrm{tr}}^{(\mathrm{mean})},
\end{equation}
and
\begin{equation}
\mathcal E_{\mathrm{gen}}^{(\mathrm{cov})}
:=
\mathcal E_{\mathrm{test}}^{(\mathrm{cov})}
-\mathcal E_{\mathrm{tr}}^{(\mathrm{cov})}.
\end{equation}
The training contribution depends only on the train--train covariance vertex.
The test contribution also contains \(\mathcal M_{KC}\),
\(\mathcal M_{CK}\), and \(\mathcal M_{bK}\), and therefore requires
off-training fluctuation information.
\end{proposition}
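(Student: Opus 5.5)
The proof will rely on linearity of the ensemble expectation together with the two expansions already established. Since \(\mathcal E_{\mathrm{gen}}(\Theta;D_N)=\mathcal E_{\mathrm{test}}(\Theta;D_N)-\mathcal E_{\mathrm{tr}}(\Theta;D_N)\) holds realization-wise, we have \(\mathbb E_\Theta[\mathcal E_{\mathrm{gen}}]=\mathbb E_\Theta[\mathcal E_{\mathrm{test}}]-\mathbb E_\Theta[\mathcal E_{\mathrm{tr}}]\) whenever both expectations are finite. Finiteness is immediate: \(\|R_\Theta\|_2\leq\gamma^{-1}\) bounds the training error deterministically, and the test error is controlled by the integrability assumptions on \(C_\Theta\) and \(\mathbf b_\Theta\) made in Proposition~\ref{prop:loop-expansion-test-error}.

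Next, I will substitute Proposition~\ref{prop:loop-expansion-training-error}, or equivalently Corollary~\ref{cor:training-error-GV}, which gives
\[
\mathbb E_\Theta[\mathcal E_{\mathrm{tr}}]=\mathcal E_{\mathrm{tr}}^{(\mathrm{mean})}+\mathcal E_{\mathrm{tr}}^{(\mathrm{cov})}+\tfrac{\gamma^2}{N}\mathbf y^\top\mathcal R_{\mathrm{tr}}\mathbf y.
\]
The remainder term satisfies \(|\tfrac{\gamma^2}{N}\mathbf y^\top\mathcal R_{\mathrm{tr}}\mathbf y|\leq \tfrac{\gamma^2}{N}\|\mathbf y\|^2\|\mathcal R_{\mathrm{tr}}\|_2=O(n^{-3/2})\) by \eqref{eq:training-remainder-bound} and \eqref{eq:kernel-moment-assumption}. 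I will then substitute Proposition~\ref{prop:loop-expansion-test-error}, which gives \(\mathbb E_\Theta[\mathcal E_{\mathrm{test}}]=\mathcal E_{\mathrm{test}}^{(\mathrm{mean})}+\mathcal E_{\mathrm{test}}^{(\mathrm{cov})}+\mathcal R_{\mathrm{test}}\) with \(|\mathcal R_{\mathrm{test}}|=O(n^{-3/2})\).

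Subtracting the two expansions and regrouping terms yields the claimed decomposition. The remainder is a difference of two \(O(n^{-3/2})\) quantities and is therefore \(O(n^{-3/2})\). Because \(N\), \(\gamma\geq\gamma_0\), and \(D_N\) are fixed, the constants do not depend on \(n\). One point needs care here. Both expansions are centered at the same exact finite-width mean \(G\) and use the same resolvent \(R\). The baselines therefore subtract consistently, and no finite-width mean shift is counted twice. In particular, \(\bar C\) already contains its \(n^{-1}\int V_{(x\alpha)(x\beta)}\,d\mu\) piece.

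For the structural statements, I will read off the correction terms directly. By \eqref{eq:Ttr2-GV}, \(\mathcal E_{\mathrm{tr}}^{(\mathrm{cov})}\) involves only \(V\star R\) and \(V\star R^2\), so it depends only on the train--train vertex. By Corollary~\ref{cor:test-error-GV}, \(\mathcal E_{\mathrm{test}}^{(\mathrm{cov})}\) contains \(\mathcal M_{KC}\), \(\mathcal M_{CK}\), and \(\mathcal M_{bK}\). These are built from the mixed vertices \eqref{eq:mixed-effective-vertices}, which involve \(\eta(x)\) at off-training points. Subtracting the training correction cannot cancel them, because it contains no \(\eta\)-dependence.

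The only real obstacle is bookkeeping of hypotheses. The \(O(n^{-3/2})\) claim requires the union of the assumptions of Propositions~\ref{prop:loop-expansion-training-error}, \ref{prop:loop-expansion-test-error}, and \ref{cor:Cb-GV}: the moment bound \eqref{eq:kernel-moment-assumption}, the joint mixed moments up to degree six, and the envelope condition. I will state explicitly that these are assumed. The statement that the mixed terms are "required" should be read as appearance in the formula. Genuine non-identifiability is a separate question, deferred to Proposition~\ref{prop:nonidentifiability}.
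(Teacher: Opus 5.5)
Your proposal is correct and is essentially the paper's argument. The paper gives no separate proof: the result follows from linearity of $\mathbb E_\Theta$ applied to Propositions~\ref{prop:loop-expansion-training-error} and \ref{prop:loop-expansion-test-error}, with the structural claims read off from Corollaries~\ref{cor:training-error-GV} and \ref{cor:test-error-GV}, exactly as you do; your explicit bookkeeping of hypotheses and your remark that $\bar C$ already contains its finite-width shift are reasonable additions rather than a different route.
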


\begin{corollary}[Conditional gap in terms of \(G\) and \(V\)]
\label{cor:gap-GV}
Under the assumptions of Corollaries~\ref{cor:training-error-GV} and \ref{cor:test-error-GV}, the expected generalization gap admits the expansion
\begin{equation}
\mathbb E_\Theta[\mathcal E_{\mathrm{gen}}]
=
\mathcal E_{\mathrm{gen}}^{(\mathrm{mean})}
+
\mathcal E_{\mathrm{gen}}^{(\mathrm{cov})}
+
O\Big(n^{-3/2}\Big),
\end{equation}
where
\begin{equation}
\mathcal E_{\mathrm{gen}}^{(\mathrm{mean})}
=
\mathbf y^\top R\bar C R\mathbf y
-
2\bar{\mathbf b}^\top R\mathbf y
+
c
-
\frac{\gamma^2}{N}\mathbf y^\top R^2\mathbf y,
\end{equation}
and
\begin{equation}
\mathcal E_{\mathrm{gen}}^{(\mathrm{cov})}
=
\mathcal E_{\mathrm{test}}^{(\mathrm{cov})}
-\mathcal E_{\mathrm{tr}}^{(\mathrm{cov})}.
\label{eq:Tgen2-GV}
\end{equation}
The symbol \(V\) here includes both train--train and mixed empirical-kernel
covariance tensors. Train-restricted \(G\) and \(V\) alone do not determine
the test or gap correction.
\end{corollary}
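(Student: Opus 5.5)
The plan is to obtain Corollary~\ref{cor:gap-GV} by linearity of the ensemble expectation, combining the two earlier expansions. For each fixed \(\Theta\) we have \(\mathcal E_{\mathrm{gen}}=\mathcal E_{\mathrm{test}}-\mathcal E_{\mathrm{tr}}\). Both terms are integrable under the moment assumptions; indeed \(\mathcal E_{\mathrm{tr}}\le \|\mathbf y\|^2/N\) deterministically, since \(\|R_\Theta\|_2\le\gamma^{-1}\). Hence
\[
\mathbb E_\Theta[\mathcal E_{\mathrm{gen}}]=\mathbb E_\Theta[\mathcal E_{\mathrm{test}}]-\mathbb E_\Theta[\mathcal E_{\mathrm{tr}}].
\]

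\textbf{Training term.} I will substitute Corollary~\ref{cor:training-error-GV}. This gives \(\mathcal E_{\mathrm{tr}}^{(\mathrm{mean})}=\frac{\gamma^2}{N}\mathbf y^\top R^2\mathbf y\) and \(\mathcal E_{\mathrm{tr}}^{(\mathrm{cov})}\) written through \(V\star R\) and \(V\star R^2\). The remainder is \(\frac{\gamma^2}{N}\mathbf y^\top\mathcal R_{\mathrm{tr}}\mathbf y\), and by \eqref{eq:training-remainder-bound} together with \eqref{eq:kernel-moment-assumption} its absolute value is at most \(\frac{\gamma^2}{N}\|\mathbf y\|^2\|\mathcal R_{\mathrm{tr}}\|_2=O(n^{-3/2})\). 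Fixing \(N\), \(D_N\) and \(\gamma\ge\gamma_0\) makes the prefactor a constant.

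\textbf{Test term.} I will substitute Proposition~\ref{prop:loop-expansion-test-error} in its contracted form, Corollary~\ref{cor:test-error-GV}. This yields \(\mathcal E_{\mathrm{test}}^{(\mathrm{mean})}\), with \(\bar C\) and \(\bar{\mathbf b}\) given by \eqref{eq:Cbar-GV}--\eqref{eq:bbar-G}. It also yields the covariance correction \eqref{eq:Ttest2-GV} plus an \(O(n^{-3/2})\) error. Subtracting the two expansions gives the displayed \(\mathcal E_{\mathrm{gen}}^{(\mathrm{mean})}\) and \(\mathcal E_{\mathrm{gen}}^{(\mathrm{cov})}\) in \eqref{eq:Tgen2-GV}. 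The combined remainder is the difference of two \(O(n^{-3/2})\) quantities, and I must check that both are controlled with constants depending only on \(D_N,N,L,\gamma_0\).

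\textbf{Main obstacle.} The delicate point is the remainder bookkeeping in the test part. Proposition~\ref{prop:loop-expansion-test-error} gives exact expectations \(\mathbb E_\Theta[\Delta_KR\Delta_C]\), \(\mathbb E_\Theta[\Delta_CR\Delta_K]\) and \(\mathbb E_\Theta[\Delta_b^\top R\Delta_K R]\). Proposition~\ref{cor:Cb-GV} replaces the first two by \(n^{-1}\mathcal M_{KC}[R]\) and \(n^{-1}\mathcal M_{CK}[R]\) only up to \(O(n^{-3/2})\), using the mixed third-moment assumption. I will take \(A=R\), whose norm is at most \(\gamma^{-1}\), and sandwich these errors by \(R\mathbf y\) on both sides. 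Because the error terms are entrywise in fixed dimension \(N\), they stay \(O(n^{-3/2})\) in norm. The \(\Delta_b\) contraction \eqref{eq:deltab-A-delta} is exact, so it adds no error. The \(\bar C\) appearing inside the covariance terms is the exact mean, and by the remark following Proposition~\ref{prop:loop-expansion-test-error} its \(n^{-1}\) shift is not counted twice.

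\textbf{Final assertions.} The two closing statements of the corollary follow by inspection. The training correction involves only the train--train vertex \(V\). The gap correction contains \(\mathcal M_{KC}\), \(\mathcal M_{CK}\) and \(\mathcal M_{bK}\), which depend on the mixed vertices \eqref{eq:mixed-effective-vertices}. Therefore train-restricted \(G\) and \(V\) alone do not fix the gap correction; this is formalized later by Proposition~\ref{prop:nonidentifiability}.
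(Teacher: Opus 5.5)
Your proposal is correct and follows the paper's own route. The paper gives no separate proof: the corollary follows by linearity, subtracting the expansion of Corollary~\ref{cor:training-error-GV} from that of Corollary~\ref{cor:test-error-GV}. Your bookkeeping of the two $O(n^{-3/2})$ remainders makes explicit what the paper leaves implicit: the deterministic training bound, and the mixed third-moment replacements from Proposition~\ref{cor:Cb-GV}, taken with $A=R$ and sandwiched by $R\mathbf y$.

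For the closing non-identifiability sentence, inspecting which vertices appear is only heuristic. The rigorous argument is the one you point to in Proposition~\ref{prop:nonidentifiability}: the test correction is not determined by train-restricted data, while the training correction is, so their difference cannot be determined either.
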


\section{Spectral Form and Fixed-Design Bounds}

We now express the mean-object and covariance terms in the eigenbasis of
\(G\). This exposes the resolvent denominators and the empirical-kernel
covariance contractions. The formulas describe fixed-\(N\), fixed-depth
scaling in \(n\); they do not by themselves imply a joint \(N,n\) law.
Let
\begin{equation}
G = U D U^\top,
\qquad
D=\mathrm{diag}(\rho_1,\dots,\rho_N),
\qquad
R:=(G+\gamma I_N)^{-1}=U \Lambda U^\top,
\end{equation}
where
\begin{equation}
\Lambda=\mathrm{diag}\!\left(\frac{1}{\rho_1+\gamma},\dots,\frac{1}{\rho_N+\gamma}\right).
\end{equation}
For any vector \(a\in\mathbb R^N\) and matrix \(A\in\mathbb R^{N\times N}\), define
\begin{equation}
\tilde a := U^\top a,
\qquad
\tilde A := U^\top AU.
\end{equation}
We also define the vertex in the eigenbasis of \(G\) by
\begin{equation}
\tilde V_{(ij)(k\ell)}
:=
\sum_{\alpha,\beta,\gamma,\delta=1}^N
U_{\alpha i}U_{\beta j}U_{\gamma k}U_{\delta \ell}
V_{(\alpha\beta)(\gamma\delta)}.
\end{equation}

\begin{proposition}[Spectral form of the training error]
\label{prop:spectral-training}
Let \(G=UD U^\top \) with \(D_{ii}=\rho_i\), and let \(\tilde y=U^\top y\).
For compactness in this section, write
\(\mathcal E_{\mathrm{tr}}^{(0)}
:=\mathcal E_{\mathrm{tr}}^{(\mathrm{mean})}\) and
\(\mathcal E_{\mathrm{tr}}^{(1)}
:=\mathcal E_{\mathrm{tr}}^{(\mathrm{cov})}\).
Then
\begin{equation}
\mathbb E_\Theta[\mathcal E_{\mathrm{tr}}]
=
\mathcal E_{\mathrm{tr}}^{(0)}
+
\mathcal E_{\mathrm{tr}}^{(1)}
+
O\Big(n^{-3/2}\Big),
\end{equation}
where
\begin{equation}
\mathcal E_{\mathrm{tr}}^{(0)}
=
\frac{\gamma^2}{N}\sum_{i=1}^N
\frac{\tilde y_i^2}{(\rho_i+\gamma)^2},
\end{equation}
and
\begin{equation}
\begin{split}
  &  \mathcal E_{\mathrm{tr}}^{(1)}
=
\frac{\gamma^2}{Nn}
\sum_{i,j,k=1}^N
\tilde y_i\tilde y_j\,
\tilde V_{(ik)(kj)}
\\&\left[
\frac{1}{(\rho_i+\gamma)^2(\rho_j+\gamma)(\rho_k+\gamma)}
+
\frac{1}{(\rho_i+\gamma)(\rho_j+\gamma)^2(\rho_k+\gamma)}
+
\frac{1}{(\rho_i+\gamma)(\rho_j+\gamma)(\rho_k+\gamma)^2}
\right].
\end{split}
\end{equation}
\end{proposition}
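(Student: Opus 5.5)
The plan is to take Corollary~\ref{cor:training-error-GV} as given and only rewrite its two explicit terms in the eigenbasis of \(G\). Its \(O(n^{-3/2})\) remainder is carried over unchanged, since an orthogonal change of basis does not affect it. Write \(G=UDU^\top\) with \(U\) orthogonal. Then \(R=U\Lambda U^\top\) and \(R^2=U\Lambda^2U^\top\), and \(\mathbf y^\top M\mathbf y=\tilde y^\top(U^\top MU)\tilde y\) for any \(N\times N\) matrix \(M\).

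For the mean term, \(U^\top R^2U=\Lambda^2\) gives
\(\frac{\gamma^2}{N}\mathbf y^\top R^2\mathbf y=\frac{\gamma^2}{N}\sum_i\tilde y_i^2/(\rho_i+\gamma)^2\) immediately.

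The key lemma for the covariance term is a covariance law for the star contraction \eqref{eq:V-star-A} under orthogonal conjugation:
\begin{equation}
U^\top (V\star A)U=\tilde V\star\tilde A,
\qquad
(\tilde V\star \tilde A)_{ij}:=\sum_{k,\ell}\tilde V_{(ik)(\ell j)}\tilde A_{k\ell}.
\end{equation}
To prove it, insert \(A_{\beta\gamma}=\sum_{k,\ell}U_{\beta k}\tilde A_{k\ell}U_{\gamma\ell}\), which is valid because \(UU^\top=I\), into \(\sum_{\alpha,\delta}U_{\alpha i}(V\star A)_{\alpha\delta}U_{\delta j}\). The four factors of \(U\) then assemble exactly into the definition of \(\tilde V_{(ik)(\ell j)}\). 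The only point needing care, and the place I expect errors to arise, is the index placement: \(V\star A\) contracts the second index of the first pair with the first index of the second pair. This is what produces the pattern \((ik)(kj)\) rather than \((ij)(kk)\).

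Applying the lemma with \(A=R\) and \(A=R^2\) gives \(\tilde A=\Lambda\) and \(\tilde A=\Lambda^2\), which are diagonal. Hence
\begin{equation}
\bigl(U^\top(V\star R)U\bigr)_{ij}=\sum_k\frac{\tilde V_{(ik)(kj)}}{\rho_k+\gamma},
\qquad
\bigl(U^\top(V\star R^2)U\bigr)_{ij}=\sum_k\frac{\tilde V_{(ik)(kj)}}{(\rho_k+\gamma)^2}.
\end{equation}
Next, conjugate each of the three products in \eqref{eq:Ttr2-GV}, namely \(R^2(V\star R)R\), \(R(V\star R^2)R\) and \(R(V\star R)R^2\), by \(U\). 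Their outer factors become the diagonal matrices \(\Lambda^2,\Lambda\) and \(\Lambda,\Lambda\) and \(\Lambda,\Lambda^2\), respectively. Sandwiching between \(\tilde y\) then yields the three denominators \((\rho_i+\gamma)^2(\rho_j+\gamma)(\rho_k+\gamma)\), \((\rho_i+\gamma)(\rho_j+\gamma)(\rho_k+\gamma)^2\) and \((\rho_i+\gamma)(\rho_j+\gamma)^2(\rho_k+\gamma)\). Multiplying by \(\gamma^2/(Nn)\) gives \(\mathcal E^{(1)}_{\mathrm{tr}}\) as stated.

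Finally, the expansion \(\mathbb E_\Theta[\mathcal E_{\mathrm{tr}}]=\mathcal E^{(0)}_{\mathrm{tr}}+\mathcal E^{(1)}_{\mathrm{tr}}+O(n^{-3/2})\) follows from Corollary~\ref{cor:training-error-GV}, whose remainder bound comes from Proposition~\ref{prop:loop-expansion-training-error} under \eqref{eq:kernel-moment-assumption}.
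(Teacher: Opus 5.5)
Your proposal is correct and follows essentially the same route as the paper. Both start from Corollary~\ref{cor:training-error-GV}, carry its remainder over unchanged, conjugate by $U$, and evaluate $U^\top(V\star R)U$ and $U^\top(V\star R^2)U$ by inserting the spectral decomposition of $R$. The one difference is that you first prove the conjugation identity $U^\top(V\star A)U=\tilde V\star\tilde A$ for general $A$ and then specialize to diagonal $\tilde A$; this harmless generalization is exactly what the paper later uses, without stating it, for $V\star(R\bar C R)$ in the test-error spectral form.
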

\begin{proof}
By Corollary~\ref{cor:training-error-GV}, the expected training error up to the leading
finite-width correction is
\begin{equation}
\mathbb E_\Theta[\mathcal E_{\mathrm{tr}}]
=
\frac{\gamma^2}{N}
\mathbf y^\top
\left[
R^2
+
\frac1n
\left\{
R^2(V\star R)R
+
R(V\star R^2)R
+
R(V\star R)R^2
\right\}
\right]\mathbf y
+
O(n^{-3/2}),
\end{equation}
where
\begin{equation}
R:=(G+\gamma I_N)^{-1}.
\end{equation}
Let
\begin{equation}
G=UDU^\top,
\qquad
D=\mathrm{diag}(\rho_1,\ldots,\rho_N).
\end{equation}
Then
\begin{equation}
R=U\Lambda U^\top,
\qquad
\Lambda=\mathrm{diag}(r_1,\ldots,r_N),
\qquad
r_i:=\frac{1}{\rho_i+\gamma}.
\end{equation}
Writing \(\tilde y:=U^\top y\), the mean-object term is
\begin{equation}
\mathcal E_{\mathrm{tr}}^{(0)}
=
\frac{\gamma^2}{N}
\mathbf y^\top R^2\mathbf y
=
\frac{\gamma^2}{N}
\tilde{\mathbf y}^{\top}\Lambda^2\tilde{\mathbf y}
=
\frac{\gamma^2}{N}
\sum_{i=1}^N
\frac{\tilde y_i^2}{(\rho_i+\gamma)^2}.
\end{equation}

It remains to compute the spectral form of the covariance term. Recall that
\begin{equation}
(V\star A)_{\alpha\delta}
=
\sum_{\beta,\gamma=1}^N
V_{(\alpha\beta)(\gamma\delta)}A_{\beta\gamma}.
\end{equation}
In the eigenbasis of \(G\), define
\begin{equation}
\widetilde{(V\star A)}
:=
U^\top (V\star A)U.
\end{equation}
Then, using \(A=R\), we obtain
\begin{equation}
\begin{split}
\widetilde{(V\star R)}_{ij}
&=
\sum_{\alpha,\delta=1}^N
U_{\alpha i}(V\star R)_{\alpha\delta}U_{\delta j} \\
&=
\sum_{\alpha,\beta,\gamma,\delta=1}^N
U_{\alpha i}
V_{(\alpha\beta)(\gamma\delta)}
R_{\beta\gamma}
U_{\delta j}.
\end{split}
\end{equation}
Since
\begin{equation}
R_{\beta\gamma}
=
\sum_{k=1}^N
U_{\beta k}r_k U_{\gamma k},
\end{equation}
we get
\begin{equation}
\widetilde{(V\star R)}_{ij}
=
\sum_{k=1}^N
r_k
\sum_{\alpha,\beta,\gamma,\delta=1}^N
U_{\alpha i}U_{\beta k}U_{\gamma k}U_{\delta j}
V_{(\alpha\beta)(\gamma\delta)}.
\end{equation}
By the definition of the vertex in the eigenbasis, this is
\begin{equation}
\widetilde{(V\star R)}_{ij}
=
\sum_{k=1}^N
r_k \tilde V_{(ik)(kj)}.
\end{equation}
Similarly,
\begin{equation}
\widetilde{(V\star R^2)}_{ij}
=
\sum_{k=1}^N
r_k^2 \tilde V_{(ik)(kj)}.
\end{equation}

Now consider the three one-loop contributions separately. First,
\begin{equation}
\begin{split}
\mathbf y^\top R^2(V\star R)R\mathbf y
&=
\tilde{\mathbf y}^{\top}
\Lambda^2
\widetilde{(V\star R)}
\Lambda
\tilde{\mathbf y} \\
&=
\sum_{i,j=1}^N
\tilde y_i\tilde y_j
r_i^2r_j
\widetilde{(V\star R)}_{ij} \\
&=
\sum_{i,j,k=1}^N
\tilde y_i\tilde y_j
r_i^2r_jr_k
\tilde V_{(ik)(kj)}.
\end{split}
\end{equation}
Second,
\begin{equation}
\begin{split}
\mathbf y^\top R(V\star R^2)R\mathbf y
&=
\tilde{\mathbf y}^{\top}
\Lambda
\widetilde{(V\star R^2)}
\Lambda
\tilde{\mathbf y} \\
&=
\sum_{i,j,k=1}^N
\tilde y_i\tilde y_j
r_ir_jr_k^2
\tilde V_{(ik)(kj)}.
\end{split}
\end{equation}
Third,
\begin{equation}
\begin{split}
\mathbf y^\top R(V\star R)R^2\mathbf y
&=
\tilde{\mathbf y}^{\top}
\Lambda
\widetilde{(V\star R)}
\Lambda^2
\tilde{\mathbf y} \\
&=
\sum_{i,j,k=1}^N
\tilde y_i\tilde y_j
r_ir_j^2r_k
\tilde V_{(ik)(kj)}.
\end{split}
\end{equation}
Combining the three terms and substituting
\begin{equation}
r_i=\frac{1}{\rho_i+\gamma}
\end{equation}
gives
\begin{equation}
\begin{split}
\mathcal E_{\mathrm{tr}}^{(1)}
=
\frac{\gamma^2}{Nn}
\sum_{i,j,k=1}^N
\tilde y_i\tilde y_j
\tilde V_{(ik)(kj)}
\Bigg[
&
\frac{1}{(\rho_i+\gamma)^2(\rho_j+\gamma)(\rho_k+\gamma)}
\\
&+
\frac{1}{(\rho_i+\gamma)(\rho_j+\gamma)^2(\rho_k+\gamma)}
\\
&+
\frac{1}{(\rho_i+\gamma)(\rho_j+\gamma)(\rho_k+\gamma)^2}
\Bigg].
\end{split}
\end{equation}
This proves the claimed spectral representation.
\end{proof}
\begin{corollary}[Resolvent scaling of the training error]
\label{cor:training-resolvent-scaling}
Let the contraction-map norm be
\begin{equation}
\kappa_V
:=
\sup_{A\ne0}
\frac{\|V\star A\|_2}{\|A\|_2}.
\end{equation}
Since \(G\) is positive semidefinite and \(\gamma>0\), the denominators are
bounded below by \(\rho_{\min}+\gamma\), where
\begin{equation}
\rho_{\min}:=\min_i \rho_i .
\end{equation}
Therefore,
\begin{equation}
\mathcal E_{\mathrm{tr}}^{(0)}
\le
\frac{\gamma^2}{N(\rho_{\min}+\gamma)^2}
\|y\|^2.
\end{equation}
The one-loop correction satisfies
\begin{equation}
\left|\mathcal E_{\mathrm{tr}}^{(1)}\right|
\le
\frac{3\gamma^2\kappa_V\|y\|^2}
{Nn(\rho_{\min}+\gamma)^4}.
\end{equation}
The dependence on \(N\) is therefore explicit except for any dependence
carried by \(\kappa_V\); no dimension-dependent factor is hidden in the
big-\(O\) notation.
\end{corollary}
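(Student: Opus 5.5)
The plan is to bound the two quantities separately, working in the matrix form of Corollary~\ref{cor:training-error-GV} rather than the spectral sum, so that every resolvent factor is controlled by its operator norm.

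For the mean-object term, write \(\mathcal E_{\mathrm{tr}}^{(0)}=\frac{\gamma^2}{N}\mathbf y^\top R^2\mathbf y\). Since \(G\) is positive semidefinite, \(R=(G+\gamma I_N)^{-1}\) is symmetric positive definite with eigenvalues \((\rho_i+\gamma)^{-1}\le(\rho_{\min}+\gamma)^{-1}\). Hence \(\mathbf y^\top R^2\mathbf y\le\|R\|_2^2\|\mathbf y\|^2\le(\rho_{\min}+\gamma)^{-2}\|\mathbf y\|^2\). Equivalently, in the spectral form of Proposition~\ref{prop:spectral-training}, bound each denominator from below and use \(\sum_i\tilde y_i^2=\|\mathbf y\|^2\), which holds because \(U\) is orthogonal.

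For the one-loop term, take
\[
\mathcal E_{\mathrm{tr}}^{(1)}=\frac{\gamma^2}{Nn}\,\mathbf y^\top\bigl(R^2(V\star R)R+R(V\star R^2)R+R(V\star R)R^2\bigr)\mathbf y
\]
and apply the triangle inequality to the three summands. For each summand, Cauchy--Schwarz gives \(|\mathbf y^\top M\mathbf y|\le\|M\|_2\|\mathbf y\|^2\). Submultiplicativity then bounds \(\|M\|_2\) by the product of the outer resolvent norms and \(\|V\star A\|_2\). The definition of \(\kappa_V\) gives \(\|V\star A\|_2\le\kappa_V\|A\|_2\), with \(A=R\) or \(A=R^2\). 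Counting resolvent factors, each summand carries exactly four powers of \(R\) in total:
\begin{itemize}
\item the first term has \(R^2\), \(R\) inside the contraction, and \(R\);
\item the second has \(R\), \(R^2\) inside the contraction, and \(R\);
\item the third has \(R\), \(R\) inside the contraction, and \(R^2\).
\end{itemize}
So each is bounded by \(\kappa_V(\rho_{\min}+\gamma)^{-4}\|\mathbf y\|^2\). Summing the three and multiplying by \(\gamma^2/(Nn)\) gives the stated constant \(3\).

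The only point needing care is the step where \(\kappa_V\) enters. Working directly with the spectral triple sum over \(\tilde V_{(ik)(kj)}\) would invite entrywise bounds that introduce hidden factors of \(N\). The matrix route avoids this because it uses only the operator-norm contraction constant \(\kappa_V\). Finiteness of \(\kappa_V\) is automatic for fixed \(N\), since \(V\star\cdot\) is a linear map on a finite-dimensional space. No further assumptions are needed, and the \(N\)-dependence is then explicit apart from whatever is carried by \(\kappa_V\), as the statement claims.
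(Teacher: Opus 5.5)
Your proposal is correct and takes essentially the paper's route. The paper also works in the matrix form of Corollary~\ref{cor:training-error-GV}, uses \(\|V\star A\|_2\le\kappa_V\|A\|_2\) together with \(\|R\|_2=(\rho_{\min}+\gamma)^{-1}\), and counts four resolvent factors in each of the three summands to get the constant \(3\); you simply make explicit the triangle-inequality, Cauchy--Schwarz, and submultiplicativity steps that the paper leaves implicit.
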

The bound follows directly from
\(\|V\star A\|_2\leq\kappa_V\|A\|_2\) and contains four resolvent
factors. It is informative only when \(\kappa_V\) is itself controlled.

To express the test error in the same eigenbasis, project the exact mean
population objects onto the eigenvectors of \(G\):
\begin{equation}
\tilde g_i(x):=\sum_{\alpha=1}^N U_{\alpha i}G_{x\alpha},
\qquad
\tilde b_i
:=
\int_{\mathcal X} y(x)\tilde g_i(x)\,d\mu(x),
\end{equation}
and
\begin{equation}
\tilde C_{ij}
:=(U^\top\bar C U)_{ij}
=
\int_{\mathcal X}\tilde g_i(x)\tilde g_j(x)\,d\mu(x)
+\frac1n\int_{\mathcal X}\tilde V_{(xi)(xj)}\,d\mu(x).
\end{equation}
For mixed vertices, we project only the training indices onto the eigenbasis of
\(G\). For example,
\begin{equation}
\tilde V_{(xi)(x'j)}
:=
\sum_{\alpha,\beta=1}^N
U_{\alpha i}U_{\beta j}
V_{(x\alpha)(x'\beta)},
\end{equation}
and
\begin{equation}
\tilde V_{(xi)(jk)}
:=
\sum_{\alpha,\beta,\gamma=1}^N
U_{\alpha i}U_{\beta j}U_{\gamma k}
V_{(x\alpha)(\beta\gamma)}.
\end{equation}
\begin{proposition}[Spectral structure of the test error]
\label{prop:spectral-test}
Let \(R=(G+\gamma I_N)^{-1}\). Then
\begin{equation}
\mathbb E_\Theta[\mathcal E_{\mathrm{test}}]
=
\mathcal E_{\mathrm{test}}^{(\mathrm{mean})}
+
\mathcal E_{\mathrm{test}}^{(\mathrm{cov})}
+
O\Big(n^{-3/2}\Big).
\end{equation}
where
\begin{equation}
\mathcal E_{\mathrm{test}}^{(\mathrm{mean})}
=
\sum_{i,j=1}^N
\frac{\tilde y_i\tilde C_{ij}\tilde y_j}
{(\rho_i+\gamma)(\rho_j+\gamma)}
-
2\sum_{i=1}^N
\frac{\tilde b_i\tilde y_i}{\rho_i+\gamma}
+
c.
\end{equation}
The covariance term is \(O(n^{-1})\). Its components consist of
sums whose resolvent factors take the form
\begin{equation}
\frac{1}{
(\rho_{i_1}+\gamma)\cdots(\rho_{i_m}+\gamma)
},
\qquad
m=2,3,4,
\end{equation}
with coefficients determined by contractions of \(\tilde C\), \(\tilde b\),
and the mixed empirical-kernel covariance vertices. Hence the spectral
amplification is again governed by small denominators \((\rho_i+\gamma)^{-1}\).
Unlike the training correction, however, the coefficient structure is not
determined solely by the train--train vertex, but also by population and
mixed train--test fluctuation geometry.
\end{proposition}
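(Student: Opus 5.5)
The plan is to deduce the statement from Proposition~\ref{prop:loop-expansion-test-error} and Corollary~\ref{cor:test-error-GV} by rotating every object into the eigenbasis of \(G\), exactly as in the proof of Proposition~\ref{prop:spectral-training}. No new analytic estimate is needed: the expansion and the \(O(n^{-3/2})\) remainder are inherited verbatim from Corollary~\ref{cor:test-error-GV}. What remains is algebra, plus the order count for the covariance term.

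\textbf{Mean term.} Write \(R=U\Lambda U^\top\) with \(\Lambda=\mathrm{diag}(r_i)\) and \(r_i=(\rho_i+\gamma)^{-1}\). Then
\[
\mathbf y^\top R\bar CR\mathbf y=\tilde{\mathbf y}^\top\Lambda\tilde C\Lambda\tilde{\mathbf y},
\qquad
\bar{\mathbf b}^\top R\mathbf y=\sum_i \tilde b_i r_i\tilde y_i .
\]
The second identity uses \(U^\top\bar{\mathbf b}=\tilde{\mathbf b}\), which follows from \eqref{eq:bbar-G} and the definition of \(\tilde g_i\). The expression for \(\tilde C\) follows from \eqref{eq:Cbar-GV} by conjugating with \(U\) and exchanging the finite sums with the \(\mu\)-integral. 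Together these give the displayed formula for \(\mathcal E_{\mathrm{test}}^{(\mathrm{mean})}\).

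\textbf{Covariance term.} Take the seven terms of \eqref{eq:Ttest2-GV} one at a time and insert \(UU^\top=I\) around each \(R\) and each contraction. The computation in the proof of Proposition~\ref{prop:spectral-training} gives \(\widetilde{(V\star R)}_{ij}=\sum_k r_k\tilde V_{(ik)(kj)}\). In the same way,
\[
\widetilde{(V\star(R\bar CR))}_{ij}=\sum_{k,l} r_kr_l\tilde C_{kl}\tilde V_{(ik)(lj)} .
\]
Counting factors of \(r\) term by term gives the following.
\begin{itemize}
\item \(\mathbf y^\top R(V\star R)R\bar CR\mathbf y\), and its mirror image, carries \(r_ir_kr_jr_l\) times \(\tilde C\). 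This is four factors.
\item \(\mathbf y^\top R(V\star(R\bar CR))R\mathbf y\) carries \(r_ir_jr_kr_l\). This is four factors.
\item \(\bar{\mathbf b}^\top R(V\star R)R\mathbf y\) carries \(\tilde b_i r_ir_kr_j\tilde y_j\). This is three factors.
\end{itemize}

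For the mixed maps, project only the training indices. The integrals then involve \(\tilde g_i(x)\) together with \(\tilde V_{(ik)(xl)}\) or \(\tilde V_{(xi)(kl)}\). Conjugating \(\mathcal M_{KC}[R]\) by \(U\) gives a sum over \(k\) with weight \(r_k\); it is flanked by \(r_i\) and \(r_j\), so it carries three factors. The same holds for \(\mathcal M_{CK}[R]\).

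The \(\mathcal M_{bK}[R]R\mathbf y\) term carries \(r_k\) from \(A=R\) and \(r_j\) from the trailing \(R\), so it has two factors. This is the \(m=2\) case, and it is where the final \(R\) emphasized in Corollary~\ref{cor:test-error-GV} matters. Collecting the terms exhibits every denominator as a product of \(m\in\{2,3,4\}\) factors \((\rho_{i}+\gamma)\). Each coefficient is a contraction of \(\tilde C\), \(\tilde{\mathbf b}\) or \(\tilde g\), \(\tilde y\), and a train--train or mixed vertex.

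\textbf{Order \(O(n^{-1})\) and main obstacle.} The prefactor \(1/n\) is explicit in \eqref{eq:Ttest2-GV}. It therefore suffices that the bracket stays bounded in \(n\). Each \(r_i\le\gamma_0^{-1}\), \(\|\bar C\|_2\) and \(\|\bar{\mathbf b}\|_2\) are bounded by assumption, and the vertices \(V=n\,\mathbb E[\Delta\Delta]\) are bounded by the second-moment assumptions through Cauchy--Schwarz: \(|V|\le n\,(\mathbb E\|\Delta\|^2\,\mathbb E\|\Delta'\|^2)^{1/2}=O(1)\).

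The delicate point is the \(\mu\)-integrals in the mixed terms. I would handle them with the integrable-envelope condition of Proposition~\ref{cor:Cb-GV}, which supplies Fubini and dominated bounds for \(\int \tilde g_i(x)\tilde V_{(xj)(kl)}\,d\mu\) and \(\int y(x)\tilde V_{(xj)(kl)}\,d\mu\). This step, rather than the linear algebra, is where the assumptions must be checked.

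The last sentence of the statement is structural. The mixed maps enter through \(\tilde V_{(xi)(jk)}\) and \(\tilde g(x)\), and these are not functions of the train--train \(\tilde V_{(ij)(kl)}\). Proposition~\ref{prop:nonidentifiability} makes this rigorous later, so here I would only note the dependence.
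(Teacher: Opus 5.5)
Your proposal is correct and follows the paper's own proof: rotate every object into the eigenbasis of \(G\), inherit the \(O(n^{-3/2})\) remainder from Corollary~\ref{cor:test-error-GV}, and count resolvent factors term by term, giving four factors for the \(\bar C\)-containing train--train terms, three for \(\mathcal M_{KC}[R]\) and \(\mathcal M_{CK}[R]\), and two for \(\mathcal M_{bK}[R]R\mathbf y\). You also make explicit two points the paper leaves implicit: the \(\bar{\mathbf b}^\top R(V\star R)R\mathbf y\) term carries three factors, and the vertices stay bounded, which you show via Cauchy--Schwarz and the integrable-envelope condition.
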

\begin{proof}
The mean-object formula follows from
\(R=U\Lambda U^\top\), \(\tilde C=U^\top\bar C U\), and
\(\tilde b=U^\top\bar b\). For the covariance terms, the basic
train--train contraction becomes
\begin{equation}
\widetilde{(V\star R)}_{ij}
=
\sum_{k=1}^N
\frac{\tilde V_{(ik)(kj)}}{\rho_k+\gamma},
\end{equation}
while
\begin{equation}
\widetilde{(V\star(R\bar C R))}_{ij}
=
\sum_{k,\ell=1}^N
\frac{\tilde C_{k\ell}\tilde V_{(ik)(\ell j)}}
{(\rho_k+\gamma)(\rho_\ell+\gamma)}.
\end{equation}
Thus the three pure train--train terms contain four resolvent factors.
The maps \(\mathcal M_{KC}[R]\) and \(\mathcal M_{CK}[R]\) are linear
in \(R\) and are multiplied by a resolvent on each side, so their terms
contain three factors. The term
\(\mathcal M_{bK}[R]R\mathbf y\) contains two; the trailing \(R\) is
required by \(\mathbb E[\Delta_b^\top R\Delta_KR]\mathbf y\).
Orthogonal changes of basis do not alter the order of the moment remainder.
This proves the stated spectral structure.
\end{proof}
\begin{corollary}[Spectral decomposition of the conditional gap]
\label{thm:spectral-gap}
The expected conditional generalization gap satisfies
\begin{equation}
\mathbb E_\Theta[\mathcal E_{\mathrm{gen}}]
=
\mathcal E_{\mathrm{gen}}^{(\mathrm{mean})}
+
\mathcal E_{\mathrm{gen}}^{(\mathrm{cov})}
+
O(n^{-3/2}),
\end{equation}
where
\begin{equation}
\begin{split}
\mathcal E_{\mathrm{gen}}^{(\mathrm{mean})}
={}&
\sum_{i,j=1}^N
\frac{\tilde y_i\tilde C_{ij}\tilde y_j}
{(\rho_i+\gamma)(\rho_j+\gamma)}
-2\sum_{i=1}^N
\frac{\tilde b_i\tilde y_i}{\rho_i+\gamma}
+c\\
&-\frac{\gamma^2}{N}\sum_{i=1}^N
\frac{\tilde y_i^2}{(\rho_i+\gamma)^2},
\end{split}
\end{equation}
and
\begin{equation}
\mathcal E_{\mathrm{gen}}^{(\mathrm{cov})}
=
\mathcal E_{\mathrm{test}}^{(\mathrm{cov})}
-\mathcal E_{\mathrm{tr}}^{(\mathrm{cov})}.
\end{equation}
This identity is a decomposition, not a comparison theorem: without an
additional norm or ordering assumption it does not imply that the gap is more
sensitive than either constituent error.
\end{corollary}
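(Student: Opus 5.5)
The statement to prove is Corollary~\ref{thm:spectral-gap}. It follows by linearity from results already established, so no new estimate is needed.

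\emph{Expansion.} Since \(\mathcal E_{\mathrm{gen}}(\Theta;D_N)=\mathcal E_{\mathrm{test}}(\Theta;D_N)-\mathcal E_{\mathrm{tr}}(\Theta;D_N)\) holds realization-wise, and both errors are integrable under the moment assumptions, I take expectations and subtract.
\begin{itemize}
\item Proposition~\ref{prop:spectral-training} supplies the training expansion, with its remainder controlled by Proposition~\ref{prop:loop-expansion-training-error} through the bound \eqref{eq:training-remainder-bound}.
\item Proposition~\ref{prop:spectral-test} supplies the test expansion.
\end{itemize}
Each remainder is \(O(n^{-3/2})\) for fixed \(D_N,N,L,\gamma\). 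Their difference is therefore also \(O(n^{-3/2})\), by the triangle inequality and because the implied constants are fixed. This gives the displayed expansion, with \(\mathcal E_{\mathrm{gen}}^{(\mathrm{cov})}=\mathcal E_{\mathrm{test}}^{(\mathrm{cov})}-\mathcal E_{\mathrm{tr}}^{(\mathrm{cov})}\). This is also the content of Proposition~\ref{prop:loop-expansion-generalization-gap} and Corollary~\ref{cor:gap-GV}, which I can cite directly.

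\emph{Spectral form of the mean term.} I rewrite \(\mathcal E_{\mathrm{gen}}^{(\mathrm{mean})}\) in the eigenbasis of \(G\). From \(G=UDU^\top\) we get \(R=U\Lambda U^\top\). Then
\[
\mathbf y^\top R\bar C R\mathbf y=\tilde{\mathbf y}^\top\Lambda\tilde C\Lambda\tilde{\mathbf y},
\qquad
\bar{\mathbf b}^\top R\mathbf y=\tilde{\mathbf b}^\top\Lambda\tilde{\mathbf y},
\qquad
\mathbf y^\top R^2\mathbf y=\tilde{\mathbf y}^\top\Lambda^2\tilde{\mathbf y}.
\]
Writing out the diagonal entries \(\Lambda_{ii}=(\rho_i+\gamma)^{-1}\) gives the three sums. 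The constant \(c\) is carried unchanged. One consistency point needs checking: \(\tilde b_i\), as defined via \(\tilde g_i\), equals \((U^\top\bar{\mathbf b})_i\). This follows from \eqref{eq:bbar-G} and linearity of the integral. Likewise \(\tilde C=U^\top\bar C U\) by definition.

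\emph{Final remark.} The closing sentence of the corollary is interpretive and needs no proof. To justify that no comparison follows, I would note that \(\mathcal E_{\mathrm{test}}^{(\mathrm{cov})}\) and \(\mathcal E_{\mathrm{tr}}^{(\mathrm{cov})}\) can have either sign and can partially cancel. Hence \(|\mathcal E_{\mathrm{gen}}^{(\mathrm{cov})}|\) is bounded above by the sum of the two constituent magnitudes, but not below by either one.

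\emph{Main obstacle.} The only real care is bookkeeping: ensure the remainder constants from the two propositions are uniform for the fixed design. They are, since both depend only on \(D_N,N,L,\gamma_0\) and the moment constants.
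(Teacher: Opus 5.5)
Your proposal is correct and follows the paper's route: the paper gives no separate proof, and the corollary is obtained by subtracting the spectral training expansion of Proposition~\ref{prop:spectral-training} from the spectral test expansion of Proposition~\ref{prop:spectral-test}, with the mean term rewritten via $R=U\Lambda U^\top$, $\tilde C=U^\top\bar C U$, and $\tilde b=U^\top\bar{\mathbf b}$, exactly as you do. Your check that $\tilde b_i=(U^\top\bar{\mathbf b})_i$ follows from \eqref{eq:bbar-G} makes explicit a step the paper uses without comment.
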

\begin{proposition}[Non-identifiability from train-restricted data]
\label{prop:nonidentifiability}
There is no universal functional of
\begin{equation}
\left(G|_{D_N^2},\,V|_{D_N^4},\,\mathbf y,\,\gamma\right)
\end{equation}
that recovers \(\mathcal E_{\mathrm{test}}^{(\mathrm{cov})}\) for every
admissible extension of the random feature map from \(D_N\) to
\(\mathcal X\). Consequently, the conditional gap correction is not
identifiable from train-restricted spectral data alone.
\end{proposition}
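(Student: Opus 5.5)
The plan is to prove non-identifiability by exhibiting two admissible frozen-feature ensembles whose train-restricted data \((G|_{D_N^2},V|_{D_N^4},\mathbf y,\gamma)\) coincide exactly, but whose covariance-level test corrections \(\mathcal E_{\mathrm{test}}^{(\mathrm{cov})}\) from Corollary~\ref{cor:test-error-GV} differ. Any putative universal functional would have to return the same value on both, which is a contradiction. The construction starts from a single ensemble \(\phi(\cdot;\Theta)\) and modifies only its extension off \(D_N\). A convenient choice is a one-point population measure \(\mu=\delta_{x_*}\) with \(x_*\notin D_N\), which is allowed since \(\mu\) is arbitrary. I then define two feature maps that agree on \(D_N\) but differ at \(x_*\). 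Because the law of \(\Phi\) is unchanged, \(K_\Theta\), \(G\), \(V\), and \(R\) are identical in both models, while \(k_\Theta(x_*)\) changes.

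To isolate the mixed tensors, I would take the two test features to be
\begin{equation}
\phi^{(1)}(x_*;\Theta)=\phi(x_1;\Theta),
\qquad
\phi^{(2)}(x_*;\Theta)=\phi(x_1;\Theta')
\end{equation}
with \(\Theta'\) an independent copy. Equivalently, in the second model the features at \(x_*\) are an independent draw with the same marginal law. Then \(g(x_*)\) is the same in both models, namely the first column of \(G\). Hence \(\bar{\mathbf b}\) is the same, and \(c\) is the same once \(y(x_*)\) is fixed. The mixed tensors differ:
\begin{itemize}
\item In model~1, \(V_{(\alpha\beta)(x_*\gamma)}=V_{(\alpha\beta)(1\gamma)}\), inherited from the train vertex.
\item In model~2, \(V_{(\alpha\beta)(x_*\gamma)}=0\) and \(V_{(x_*\alpha)(x_*\beta)}\) is a different tensor. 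In model~2, \(k_\Theta(x_*)=\frac1n\Phi\phi(x_1;\Theta')\) still correlates with \(\Phi\). To get an exactly vanishing cross-covariance, I would instead use a sign flip, \(\phi^{(2)}(x_*)=\epsilon\,\phi(x_1;\Theta)\) with an independent Rademacher \(\epsilon\), and set the target to zero at \(x_*\) as needed, or compare against \(\epsilon\) carried through. Alternatively, it is simplest to work with a Gaussian preactivation example where everything is explicit.
\end{itemize}

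The core computation substitutes both models into \eqref{eq:Ttest2-GV}. The first three train--train terms differ only through \(\bar C\), which contains \(\frac1n\int V_{(xx)}\,d\mu\). If this is matched, or is shown to be lower order, the difference reduces to
\begin{equation}
-\mathbf y^\top R\,(\mathcal M_{KC}+\mathcal M_{CK})[R]\,R\mathbf y+2\,\mathcal M_{bK}[R]R\mathbf y .
\end{equation}
In model~1 this is a nonzero polynomial expression in \(V\), \(G\), and \(R\). In model~2 it vanishes, because the mixed \((\alpha\beta)(x\gamma)\) vertex vanishes. The remaining task is to show that this expression is nonzero for at least one instance. I would do that with the smallest example: \(N=1\), a single training point, linear preactivation features with Gaussian weights, so that \(V=2G^2\) by \eqref{eq:empirical-connected-relation}. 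The relevant terms are then explicit rational functions of \(G\) and \(\gamma\), and I would check that they are nonzero for generic \(y_1,y(x_*)\).

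The main obstacle is designing the second extension so that every train-restricted object and the mean objects \(g,\bar{\mathbf b},c\) match exactly while the mixed cross-covariance changes, and also ensuring \(\bar C\) does not introduce a compensating difference. My first attempt would use the random-sign decoupling above, \(k^{(2)}(x_*)=\epsilon\,k^{(1)}(x_*)\), paired with \(y(x_*)=0\). This choice gives \(g=0\) and \(\bar{\mathbf b}=0\) in model~2, so the two models cannot share the same \(g\). If that mismatch breaks the argument, the fallback is to allow different \(g(x_*)\) as well. This is legitimate because \(g(x_*)\) is itself not a train-restricted quantity. The conclusion still follows, since only the train-restricted tuple must coincide and \(\mathcal E_{\mathrm{test}}^{(\mathrm{cov})}\) must differ, which the \(N=1\) computation verifies directly. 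The statement about the gap then follows because \(\mathcal E_{\mathrm{tr}}^{(\mathrm{cov})}\) is identical in both models.
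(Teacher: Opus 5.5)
Your high-level plan is the paper's. You keep the training features fixed, so $G|_{D_N^2}$, $V|_{D_N^4}$, $\mathbf y$, $\gamma$ and $\mathcal E^{(\mathrm{cov})}_{\mathrm{tr}}$ are unchanged. You alter only the feature at an extra point $x_*$ charged by $\mu$, and show that $\mathcal E^{(\mathrm{cov})}_{\mathrm{test}}$ changes. However, the construction you commit to fails.

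With $\phi^{(2)}(x_*)=\epsilon\,\phi(x_1;\Theta)$ and $y(x_*)=0$, both models have $C_\Theta=\epsilon^2K_\Theta e_1e_1^\top K_\Theta=K_\Theta e_1e_1^\top K_\Theta$ and $\mathbf b_\Theta=0$. Hence $\mathcal E_{\mathrm{test}}$ coincides realization by realization. The hypotheses of Proposition~\ref{prop:loop-expansion-test-error} hold here, because $\Delta_C$ and $\Delta_b$ are exactly those of model~1, and its correction is therefore identical in the two models.

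The difference you would get by inserting model~2 into \eqref{eq:Ttest2-GV} is spurious. In that model $\eta(x_*)=\epsilon K_\Theta e_1$ is $O(1)$. So $V_{(x_*\alpha)(x_*\beta)}=nG_{\alpha1}G_{\beta1}+V_{(\alpha1)(\beta1)}$ is $O(n)$, and $\mathbb E_\Theta[(\Delta_K)_{\alpha\beta}(\eta_\gamma\eta_\delta-\mathbb E_\Theta[\eta_\gamma\eta_\delta])]$ is $O(n^{-1})$ rather than $O(n^{-3/2})$. This violates the hypothesis of Proposition~\ref{cor:Cb-GV}, so the sign-flip extension is not admissible for the contracted formula.

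Your independent-copy variant also rests on a false claim. There $g(x_*)=\frac1n\mathbb E_\Theta[\Phi]\,\mathbb E_\Theta[\phi(x_1)]$, not $Ge_1$. Finally, the fallback (``allow different $g(x_*)$'') names no specific admissible pair, so the promised $N=1$ check has nothing definite to verify.

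The obstacle you wrestle with is self-imposed: only the train-restricted tuple must coincide, not $g$, $\bar{\mathbf b}$, or $\bar C$. The paper pairs your model~1, $\phi(x_*)=\phi(x_1)$, with $\phi(x_*)=0$. That extension is deterministic at $x_*$, trivially admissible, and has $\mathcal E^{(\mathrm{cov})}_{\mathrm{test}}=0$. In your linear Gaussian example the independent copy would also work, since there $g(x_*)=0$ and the cross vertex vanishes.

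Your $N=1$ example then closes the argument. With $V=2g^2$ and $r=(g+\gamma)^{-1}$, the copy model gives
\[
\mathcal E^{(\mathrm{cov})}_{\mathrm{test}}=\frac{2g^2r^4}{n}\bigl[2\gamma(g+\gamma)\,y_1\,y(x_*)-g(g+4\gamma)\,y_1^2\bigr]+O(n^{-2}),
\]
which is nonzero, e.g., for $y(x_*)=0$ and $y_1\neq0$. Carried out this way, your route is more explicit than the paper, which asserts the difference ``for generic targets'' without computing it.
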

\begin{proof}
Consider an input space containing the training points and an additional
population point \(x_\star\). Keep the random feature vectors at every
training point identical in two models. In the first model set
\(\phi(x_\star;\Theta)=0\); in the second set
\(\phi(x_\star;\Theta)=\phi(x_1;\Theta)\). The two models have the same
train-restricted \(G\), \(V\), labels, and training correction, but their
\(k_\Theta(x_\star)\), and hence \(C_\Theta\), \(b_\Theta\), and the mixed
covariance maps, differ. Choosing a population measure with positive mass at
\(x_\star\) yields different test corrections for generic targets. Therefore
no functional of the train-restricted quantities can recover the test
correction for all admissible extensions.
\end{proof}

\section{Numerical Evaluation and Mechanisms}
\label{sec:numerics}

The numerical study is designed to evaluate three aspects that are specific to
the observable-level expansion. First, we assess whether the covariance term
removes the leading finite-width discrepancy rather than merely exhibiting the
expected explicit factor of $1/n$. Second, we examine the numerical necessity
of the mixed train--test tensors appearing in Corollary~\ref{cor:test-error-GV}. Third, we
investigate whether the resolvent-amplified fluctuation provides a reliable
diagnostic for the regime in which the second-order truncation remains
accurate. These aspects are evaluated in a controlled regularization regime,
with the weakly regularized setting retained as a stress test.

\subsection{Setup, paired estimator, and uncertainty}
\label{sec:numerical-setup}

We consider one-dimensional inputs. A training design with
\(N_{\mathrm{train}}=32\) and a test grid with \(N_{\mathrm{test}}=1024\)
are drawn from the standard Gaussian distribution and held fixed across all
feature realizations in a setting. The targets are
\begin{equation}
y(x)=\sin(2x),\qquad
y(x)=0.4x^3-0.6x^2+0.2x,\qquad
y(x)=|x|.
\end{equation}
To keep the label scale fixed when \(N\) changes, the three targets are
standardized by reference means and standard deviations estimated once from
\(2^{16}\) standard-Gaussian inputs, rather than by the training-sample
statistics.

The frozen feature maps are fully connected networks with the Gaussian
initialization in \eqref{ensemble}. The main setting uses \(C_W=1\),
\(C_b=2.5\times10^{-3}\), and either \(\tanh\) or ReLU activations at every
feature layer, including the last layer. Here \(C_b\) is a variance; the value
corresponds to a bias standard deviation of \(0.05\). We use
\begin{equation}
n\in\{256,362,512,724,1024,1448,2048\},
\qquad L\in\{1,2,3,4,5\}.
\end{equation}
The controlled comparisons use the kernel-level regularization
\(\gamma=0.1\), while \(\gamma=5\times10^{-3}\) is reported only as a stress
test. The \(N\)--\(n\) diagnostic in Subsection~\ref{sec:joint-scaling} uses
\(\gamma=10^{-2}\) and \(0.2\).

Directly estimating the post-correction residual from independent Monte Carlo
blocks is inefficient because the ensemble fluctuation of the observable is
larger than the residual being measured. We therefore use a paired estimator.
For \(M=6000\) feature draws, let \(\widehat G\) and the analogous test objects
denote the sample means of the corresponding kernel objects over those draws,
and write the fluctuation of draw \(m\) as \(\delta_m\). For any of the three
error observables, we evaluate the exact value together with its first- and
second-order terms around the same sample mean:
\begin{equation}
r_m
=\mathcal E_m-\mathcal E^{(0)}
-D\mathcal E[\delta_m]
-\frac12D^2\mathcal E[\delta_m,\delta_m].
\label{eq:paired-residual}
\end{equation}
Because \(\sum_m\delta_m=0\), the sample mean of the linear term vanishes
identically. The sample mean of the quadratic term is the plug-in covariance
correction, and \(\overline r\) estimates the remainder after that correction.
We define
\begin{equation}
R_0=\left|\overline{\mathcal E}-\mathcal E^{(0)}\right|,
\qquad
R_1=\left|\overline{\mathcal E}-\mathcal E^{(0)}
-\mathcal E^{(\mathrm{cov})}\right|.
\label{eq:R0R1}
\end{equation}

All displayed bands are pointwise \(95\%\) percentile intervals from \(2000\)
bootstrap resamples of the feature draws. The bootstrap retains the paired
structure and treats the sample-mean expansion point as fixed. It therefore
does not include the \(O(M^{-1})\) bias caused by estimating that point.
Likewise, the finite test grid approximates the population integral, and its
integration error is not included in the bands. The design, feature, and
bootstrap seeds are \(1234\), \(7777\), and \(20240\), respectively, and all
linear algebra uses float64 with adaptive jitter at \(10^{-12}\). Six internal
identities---including the matrix and contracted forms of the covariance terms
and the spectral form of the training error---agree to within
\(5.7\times10^{-15}\) in the reported self-checks.

\subsection{Direct width scaling of the fluctuation vertex}
\label{sec:vertex-scaling}

Figure~\ref{fig:loop-scaling-controlled} shows the covariance corrections
themselves at \(\gamma=0.1\). Over the seven widths at \(L=2\), their fitted
width exponents are \(-1.012\), \(-1.017\), and \(-1.017\) for training error,
test error, and the absolute conditional gap.
Because the formulas already contain an explicit \(1/n\), this observation
alone would only show that the remaining contractions vary slowly.

We therefore also measure the kernel fluctuation before inserting it into any
error formula. Across the four bias variances in
Figure~\ref{fig:bias-vertex}, the fitted slopes of
\(n\,\mathbb E[\|\Delta_K\|_F^2]\) lie between \(-0.017\) and \(-0.006\), with
mean \(-0.013\). This is a direct numerical check that the empirical-kernel
covariance vertex remains \(O(1)\) on the displayed width grid. The depth
panels also separate two effects that the correction formula combines: the
measured vertex norm and the amplification by the mean resolvent.

\begin{figure}[t]
    \centering
    \includegraphics[width=\linewidth]{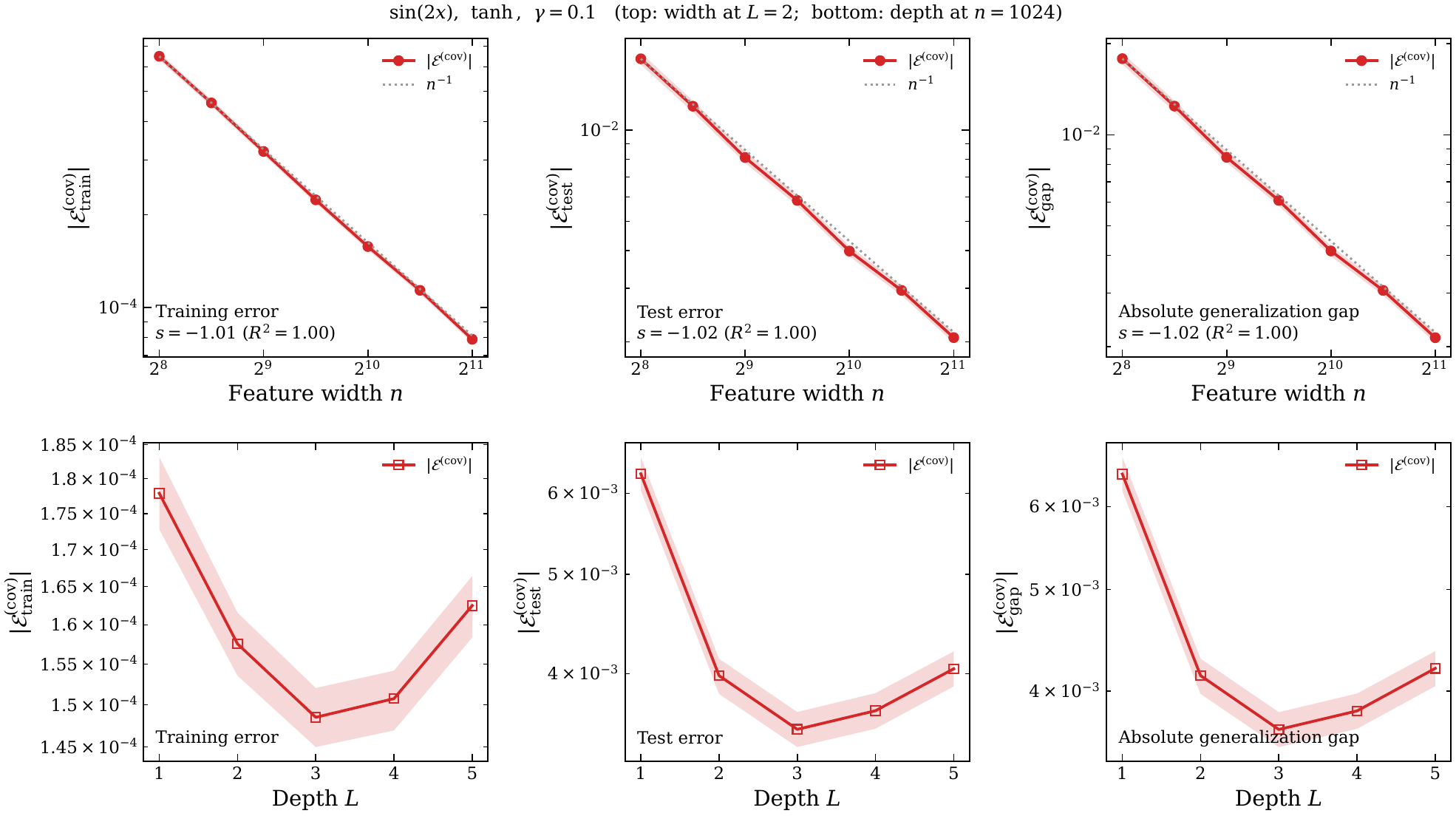}
    \caption{Width and depth dependence of the covariance corrections for the
    \(\sin(2x)\) target with \(\tanh\) features at the controlled value
    \(\gamma=0.1\). Bands are pointwise paired-bootstrap intervals. The
    near-\(-1\) width slopes are interpreted together with the direct vertex
    measurement in Figure~\ref{fig:bias-vertex}.}
    \label{fig:loop-scaling-controlled}
\end{figure}

\begin{figure}[t]
    \centering
    \includegraphics[width=\linewidth]{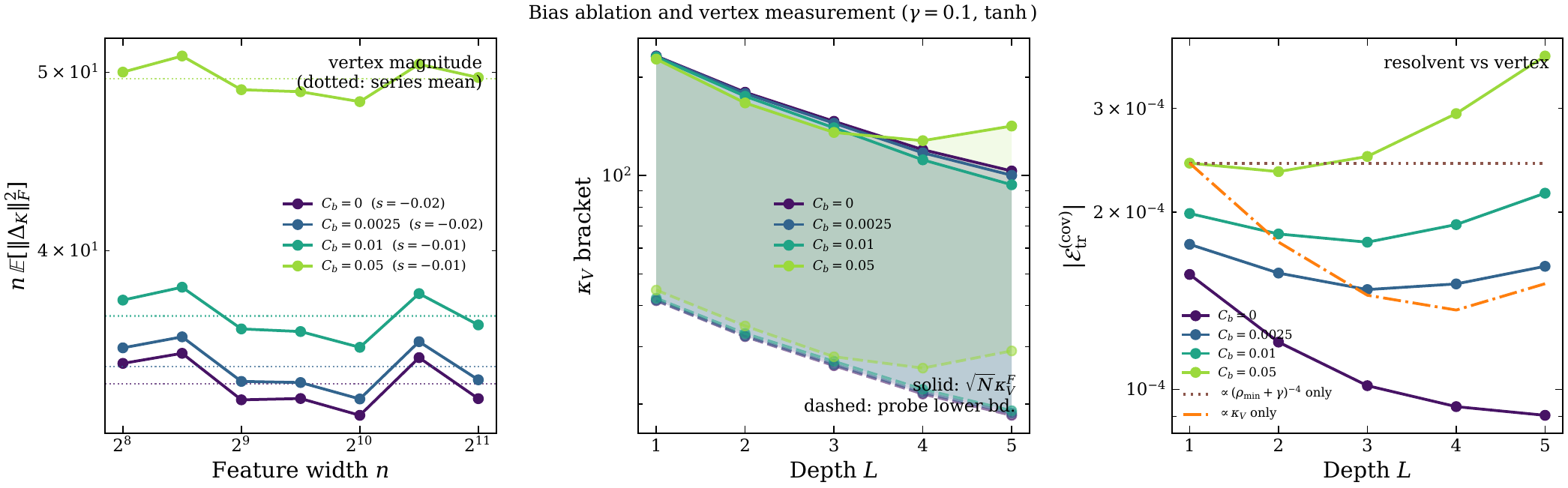}
    \caption{Bias ablation and direct vertex diagnostics at \(\gamma=0.1\).
    Left: \(n\mathbb E[\|\Delta_K\|_F^2]\) is approximately width-independent.
    Middle: bounds on the contraction-map norm as depth varies. Right: the
    training correction reflects both resolvent and vertex effects, so depth
    dependence cannot be assigned to either factor alone.}
    \label{fig:bias-vertex}
\end{figure}

\subsection{Removal of the leading finite-width discrepancy}
\label{sec:residual-hierarchy}

Figure~\ref{fig:representative-width-sweep} gives a representative controlled
comparison. The mean-object prediction misses the ensemble average by a
visible \(O(n^{-1})\) term, while the mean-plus-covariance prediction is
indistinguishable from it at the scale of the upper panels. The lower panels
show the same comparison through \(R_0\) and \(R_1\).

This behavior is not confined to the representative target. At
\(\gamma=0.1\), we test six target--architecture settings, seven widths, and
three observables, for a total of \(126\) pointwise comparisons. In all \(126\)
cases, the reported upper confidence bound for \(R_1\) is below the lower
confidence bound for \(R_0\). The conservative ratio of these two bounds has
minimum \(10.7\) and median \(72.4\) over the \(126\) comparisons. In \(95\)
of them the signed \(R_1\) interval contains zero, so ratios based only on the
point estimate would overstate the resolved accuracy; we quote the bound-based
ratios instead.

The scaling of the residuals provides a less circular test of the hierarchy
than the ratios themselves.
Across \(\gamma\in\{0.05,0.1,0.2\}\), the fitted \(R_0\) slopes for all three
observables lie between \(-1.018\) and \(-1.013\), with \(R^2>0.998\).
After the covariance term is removed, the central \(R_1\) slopes range from
\(-2.55\) to \(-1.59\). Their bootstrap intervals are broad because \(R_1\)
approaches the Monte Carlo floor, but they are compatible with the
\(O(n^{-3/2})\) remainder bound and are inconsistent with an unremoved
leading \(n^{-1}\) term. We therefore interpret Figure~\ref{fig:residual-scaling}
as evidence that the covariance term removes the leading discrepancy, not as
a precise measurement of the next exponent.

\begin{figure}[t]
    \centering
    \includegraphics[width=\linewidth]{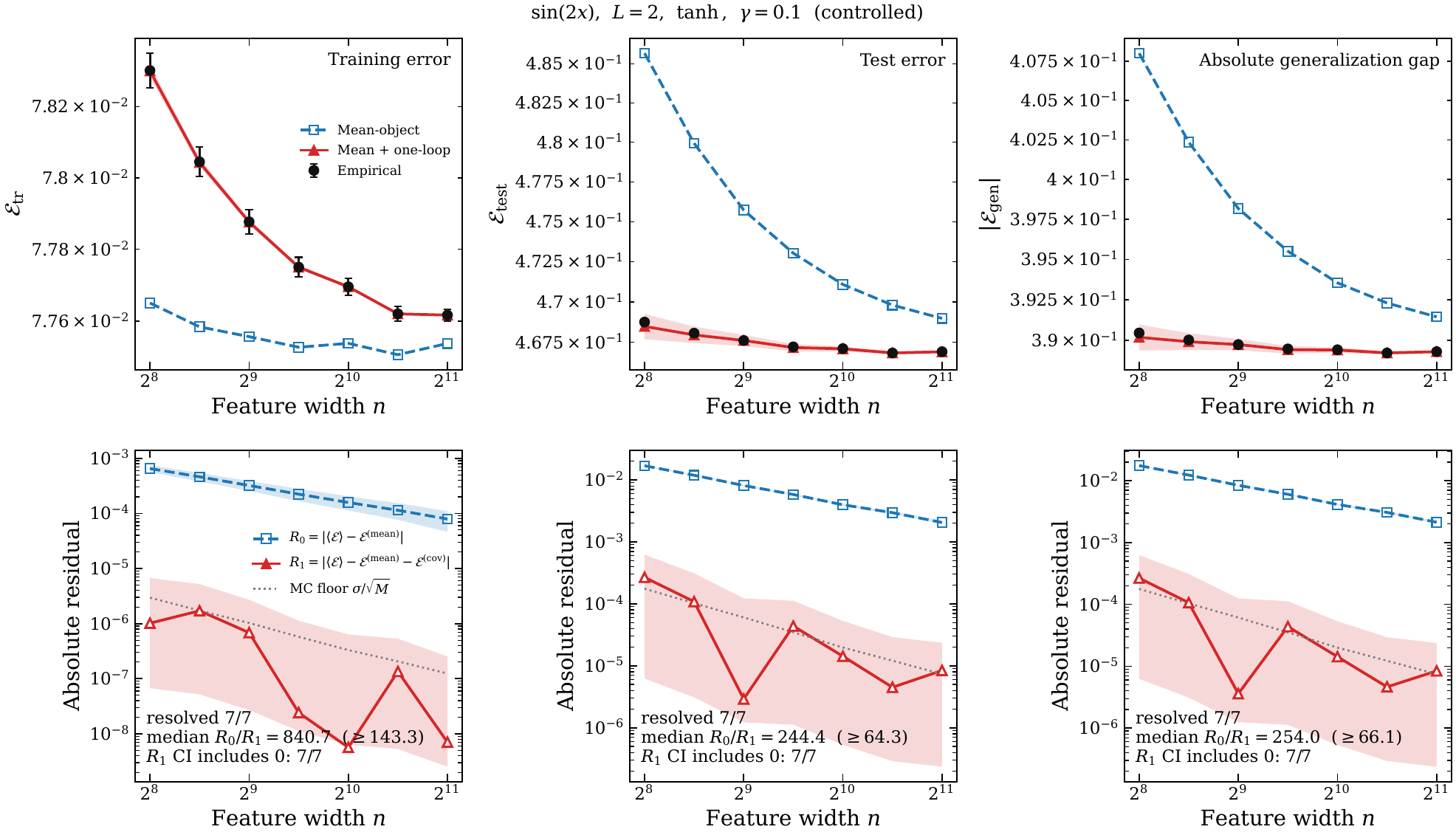}
    \caption{Representative width sweep for \(y(x)=\sin(2x)\), \(L=2\),
    \(\tanh\), and \(\gamma=0.1\). Top: empirical ensemble means, mean-object
    predictions, and mean-plus-covariance predictions. Bottom: the residuals
    \(R_0\) and \(R_1\) from \eqref{eq:R0R1}. The gap panels display the
    magnitude of the signed ensemble-mean gap.}
    \label{fig:representative-width-sweep}
\end{figure}

\begin{figure}[t]
    \centering
    \includegraphics[width=\linewidth]{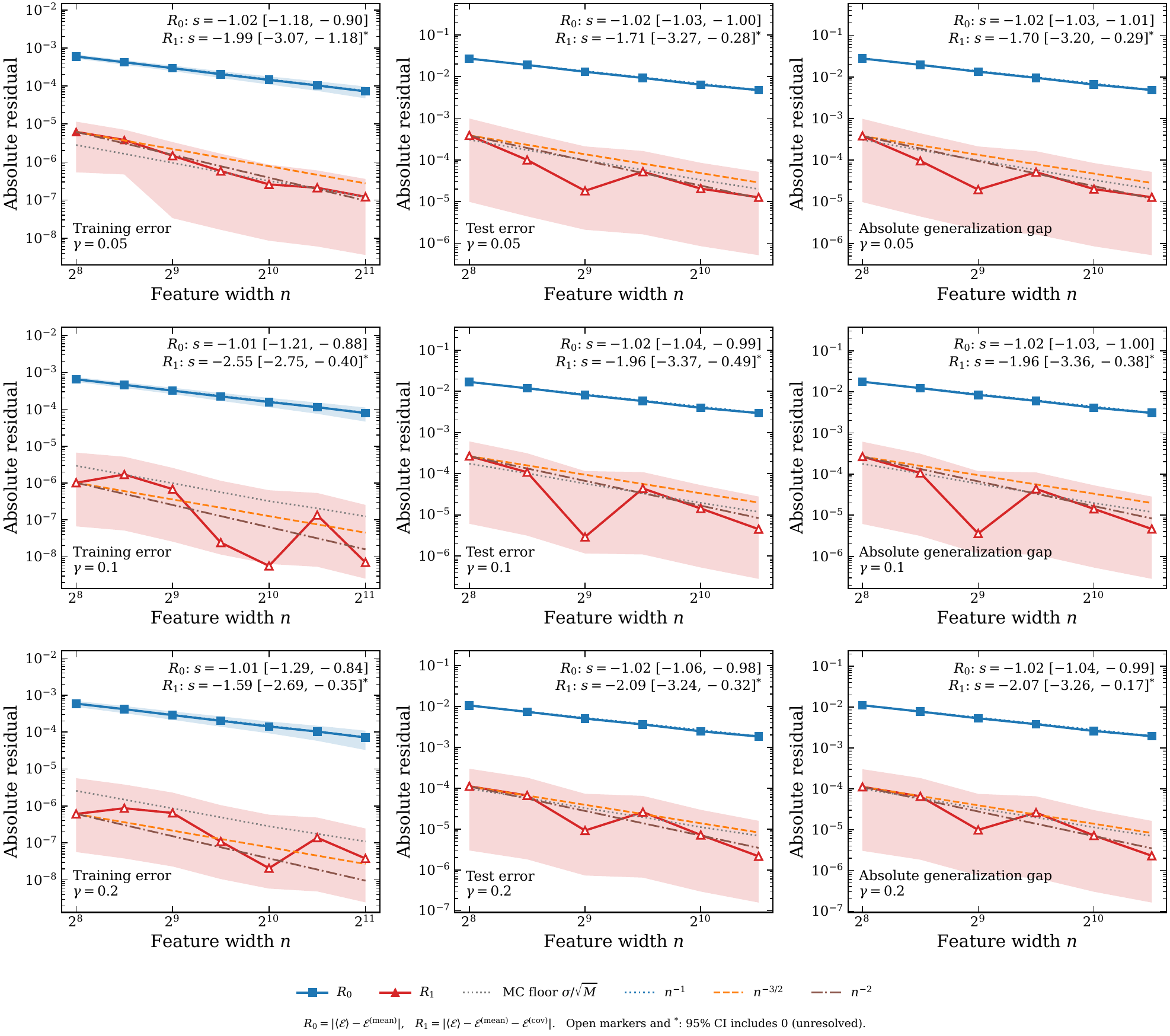}
    \caption{Residual scaling for \(\gamma=0.05,0.1,0.2\). The mean-object
    residual \(R_0\) follows \(n^{-1}\). The post-correction residual \(R_1\)
    is much smaller and approaches the Monte Carlo floor; the printed
    bootstrap intervals do not support distinguishing \(n^{-3/2}\) from
    \(n^{-2}\) uniformly across observables.}
    \label{fig:residual-scaling}
\end{figure}

\subsection{Mixed train--test covariance is an observable-level effect}
\label{sec:mixed-ablation}

The following ablation turns the formal distinction between training and test
error into a qualitative one. We keep the training design and feature draws fixed but
evaluate test error under three populations:
\(\mathcal N(0,1)\), \(\mathcal N(1,1)\), and \(\mathcal N(0,2^2)\).
Consequently, the training correction is identical in the three experiments,
up to numerical precision, whereas the mixed train--test covariance maps
change.

Figure~\ref{fig:mixed-ablation-test} compares the full correction with an
ablation that retains only train--train kernel covariance. The train-restricted
approximation moves the test prediction in the wrong direction for all three
populations, while the full correction tracks the empirical mean. Over
\(n\in\{256,512,1024,2048\}\), the magnitude of the mixed component is
\(2.56\)--\(4.09\) times the magnitude of the final test correction. The ratio
exceeds one because the mixed and train-restricted contributions partially
cancel; it should not be read as a positive fractional decomposition. This
ablation is the numerical counterpart of
Proposition~\ref{prop:nonidentifiability}: identical train-restricted data do
not determine the test correction.

\begin{figure}[t]
    \centering
    \includegraphics[width=\linewidth]{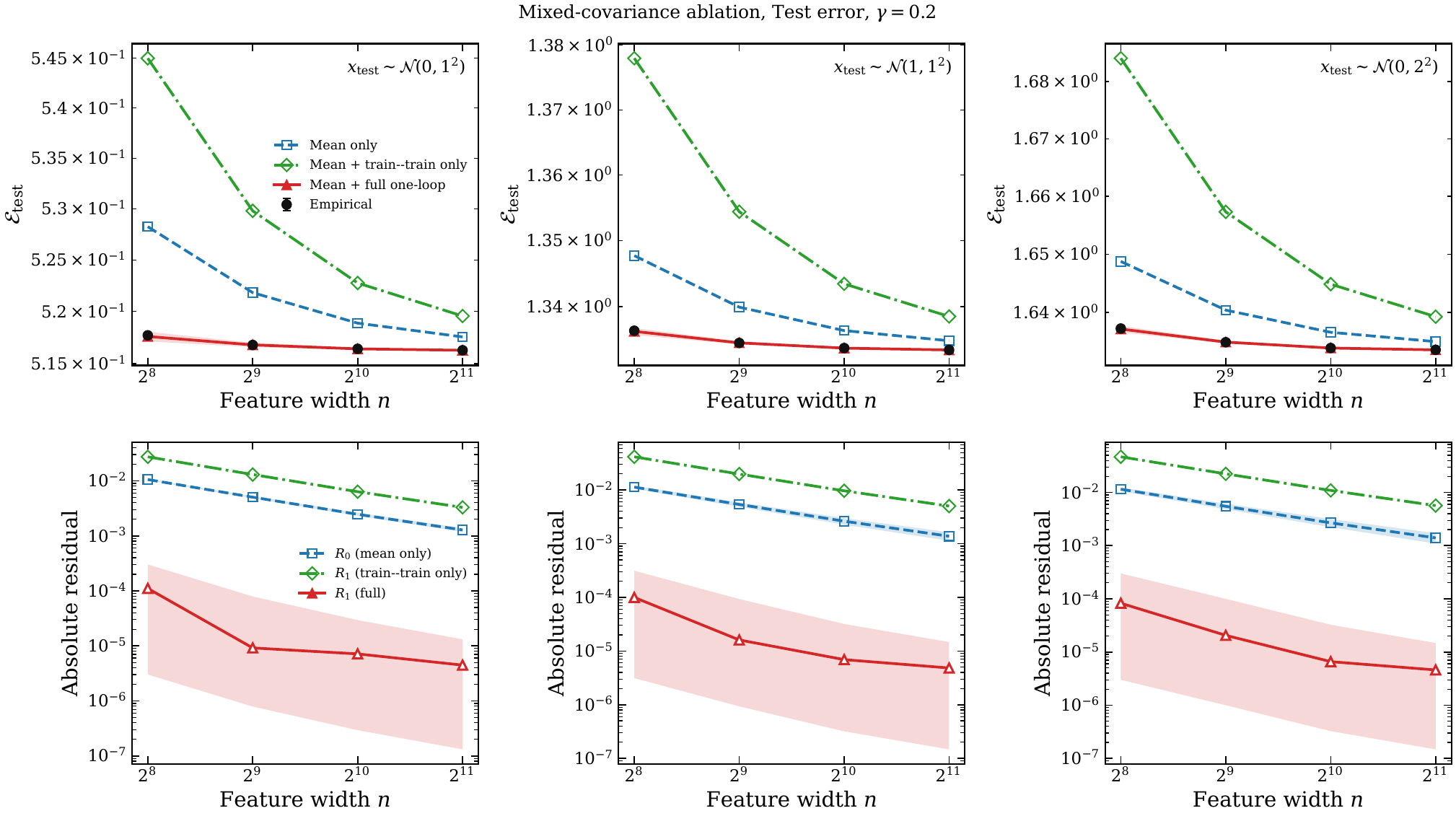}
    \caption{Mixed-covariance ablation for test error at \(\gamma=0.2\).
    Columns correspond to three test populations with an identical training
    design. Retaining only train--train covariance leaves a residual \(2.6\)--\(4.1\)
    times the mean-object baseline residual; including the mixed train--test
    tensors removes that discrepancy.}
    \label{fig:mixed-ablation-test}
\end{figure}

\subsection{A width--regularization boundary for perturbative accuracy}
\label{sec:validity-map}

The finite resolvent identity remains exact without a Neumann condition, but a
second-order truncation need not be accurate when
\begin{equation}
q_\Theta=\left\|(G+\gamma I)^{-1}\Delta_K\right\|_2
\end{equation}
is of order one. Figure~\ref{fig:validity-map} measures its mean, \(90\)th
percentile, and exceedance probability on
\(n\in\{256,512,1024,2048\}\) and
\(\gamma\in\{10^{-3},3\times10^{-3},10^{-2},3\times10^{-2},0.1,0.3,1\}\).
At the smallest regularization, the maximum observed
\(\Pr(q_\Theta\geq1)\) is \(1\); at \(\gamma=1\) it is \(0\). Increasing width
shifts the crossing to smaller \(\gamma\), as expected from shrinking kernel
fluctuations.

More importantly, \(q_\Theta\) predicts truncation error rather than merely
kernel variability. Across the full phase grid, the Spearman correlations
between \(\mathbb E [q_\Theta]\) and the relative post-correction errors are
\(0.82\), \(0.95\), and \(0.95\) for training error, test error, and the
conditional gap. This identifies the role of regularization: finite width can
be perturbative at moderate \(n\) when the resolvent is controlled, and
nonperturbative at larger \(n\) when small-eigenvalue modes are strongly
amplified. The value \(\gamma=0.1\) used in the main comparisons lies on the
controlled side of the displayed boundary; \(\gamma=5\times10^{-3}\) does not
lie uniformly on that side.

\begin{figure}[t]
    \centering
    \includegraphics[width=\linewidth]{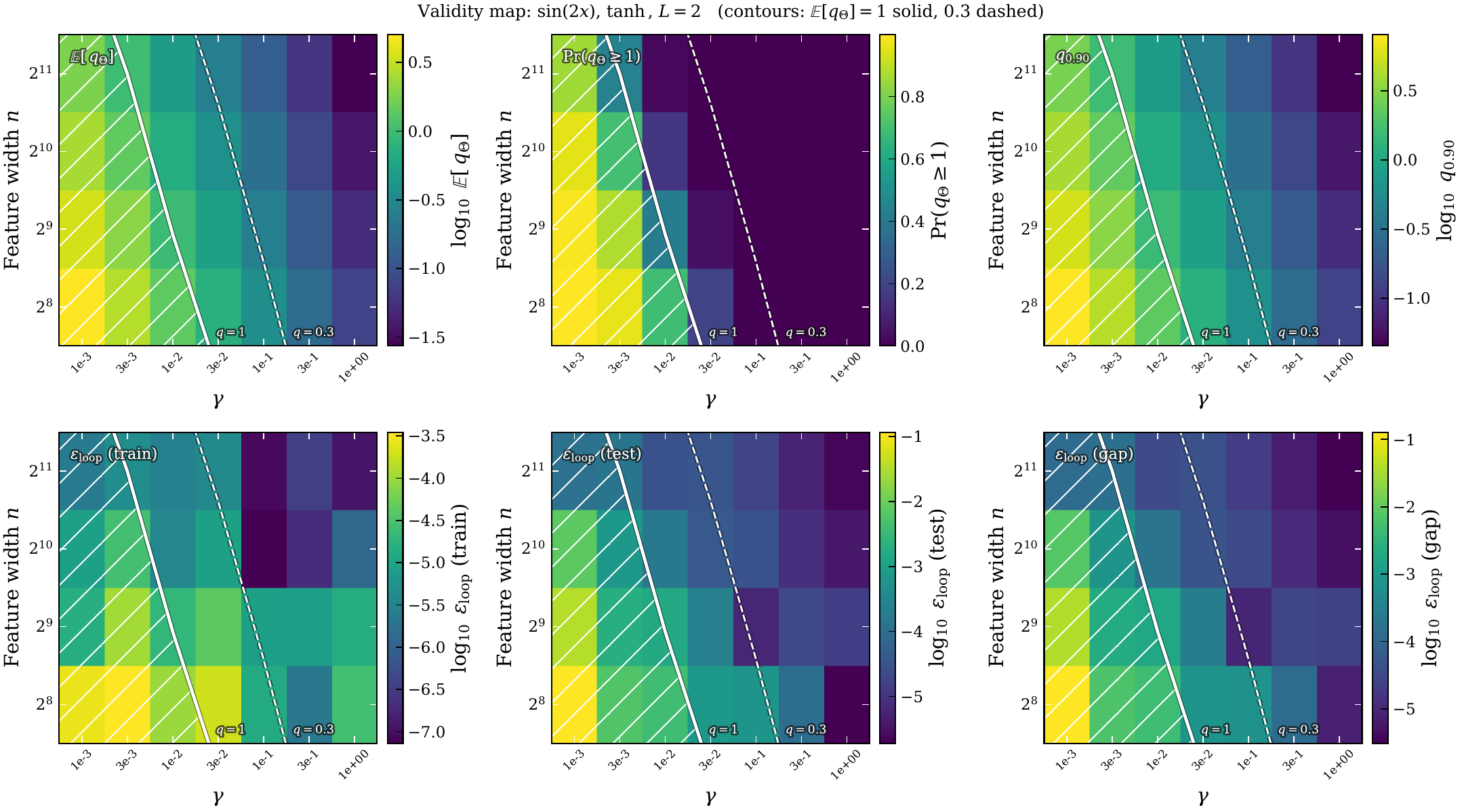}
    \caption{Validity map for \(y(x)=\sin(2x)\), \(L=2\), and \(\tanh\)
    features. The upper row reports the mean, exceedance probability, and
    \(90\)th percentile of \(q_\Theta\); the lower row reports the relative
    post-correction errors. The solid and dashed white contours mark
    \(\mathbb E[q_\Theta]=1\) and \(\mathbb E[q_\Theta]=0.3\); hatching marks
    the region above \(\mathbb E[q_\Theta]=1\).}
    \label{fig:validity-map}
\end{figure}

\subsection{Bias, sample size, and reporting limits}
\label{sec:joint-scaling}

The bias ablation quantifies an architectural dependence that is invisible if
the finite-width vertex is treated as a universal constant. At \(n=1024\),
the main bias variance \(C_b=2.5\times10^{-3}\) increases the magnitude of the
training correction by a factor \(1.31\) relative to \(C_b=0\), and the factor
stays between \(1.28\) and \(1.32\) across the width grid. Larger bias
variances change both the empirical-kernel covariance and its contraction with
the resolvent, as shown in Figure~\ref{fig:bias-vertex}. This dependence is an
initialization effect within the frozen-feature ensemble, not feature learning.

Finally, a descriptive \(N\)--\(n\) fit confirms that the mean-object training
error is essentially width-independent while the covariance correction retains
its inverse-width factor. The fitted width exponents of the correction are
\(-1.001\) at \(\gamma=10^{-2}\) and \(-1.007\) at \(\gamma=0.2\). The fitted
sample-size exponent changes from \(-0.66\) to \(+0.41\), respectively, showing
that a single joint \(N,n\) power law is not supported across regularization
regimes. Because the theory holds \(N\) fixed and only four sample sizes are
used, these \(N\)-exponents remain descriptive. Additional residual, mixed-gap,
stress-test, \(N\)--\(n\), distributional, and spectral diagnostics are reported
in Appendix~\ref{app:numerics}.

The study remains conditional on one design per setting and uses one-dimensional
synthetic targets. The paired bootstrap does not include design-to-design
variation or finite test-integration error. These limitations prevent a claim
of uniform accuracy in sample size, depth, or vanishing regularization. The
executed notebook, numerical seeds, summary file, and the per-experiment records
behind every reported number are included with the source archive.

\section{Conclusion}

We derived centered covariance corrections to the feature-ensemble-averaged
training error, test error, and conditional generalization gap of fixed-design
random feature ridge regression. The effective vertex in these formulas is the
covariance of the empirical feature kernel. It is not the connected
preactivation vertex alone; the within-feature Wick terms in
\eqref{eq:empirical-connected-relation} remain even when the latter vanishes.
The baseline is the exact finite-width mean object, so the result isolates the
centered fluctuation correction rather than claiming a complete expansion of
all mean quantities around their infinite-width limits.

The observable-level analysis yields two conclusions beyond the generic fact
that wide-network statistics possess finite-width corrections. First, the test
error and conditional gap depend on mixed train--test covariance maps. They are
not universal functionals of the training-kernel law, as formalized by
Proposition~\ref{prop:nonidentifiability} and demonstrated by the population-shift
ablation. Second, the accuracy of the covariance truncation is governed by a
resolvent-amplified fluctuation, not by width alone. The measured
width--regularization boundary and its correlation with post-correction error
make this distinction explicit.

In the controlled regime, the covariance term removes the leading
\(O(n^{-1})\) discrepancy in all reported target--architecture settings. The
remaining residual approaches the Monte Carlo floor with exponents compatible
with the theoretical remainder bound. The direct measurement of
\(n\mathbb E[\|\Delta_K\|_F^2]\), the bias ablation, and the mixed-covariance
ablation distinguish kernel statistics, resolvent amplification, and
off-training geometry rather than combining them into a single accuracy plot.

These conclusions remain conditional on a fixed design and frozen hidden
features. They do not address feature learning, NTK evolution, or the
performance gap between trained networks and their kernel limits. The most
direct extension is therefore not another resolvent order in the same frozen
model, but an observable-level treatment in which the train--test kernel
objects evolve during training. A second direction is to derive the mixed
vertices analytically for specific architectures and to test the resulting
predictions across design draws and higher-dimensional data.

\subsubsection*{Acknowledgments}
Taeyoung Kim is supported by a KIAS Individual Grant (AP102201) at Korea Institute for Advanced Study and supported by the Center for Advanced Computation at Korea Institute for Advanced Study.

\newpage
\appendix
\section{Additional Numerical Evidence}
\label{app:numerics}

This appendix reports the diagnostics that support, but are not needed to
state, the three main numerical conclusions. All settings and uncertainty
conventions are those of Subsection~\ref{sec:numerical-setup}.

\subsection{Cross-setting accuracy and the paired noise floor}

Figure~\ref{fig:improvement-summary} summarizes the point-estimate improvement
\(R_0/R_1\) across the six target--architecture settings and the controlled
regularization ladder. Ratios become unstable when \(R_1\) is not resolved;
the numerical claims in the main text therefore use confidence-bound ratios,
not the bars in this figure.

Figure~\ref{fig:residual-exponents-noise} shows the bootstrap intervals for the
residual exponents and compares the paired estimator with a split-ensemble
estimator. The split residual remains at its sampling floor over much of the
width grid, whereas subtracting the realization-wise linear and quadratic
terms exposes a much smaller residual without changing its sample mean.

\begin{figure}[H]
    \centering
    \includegraphics[width=\linewidth]{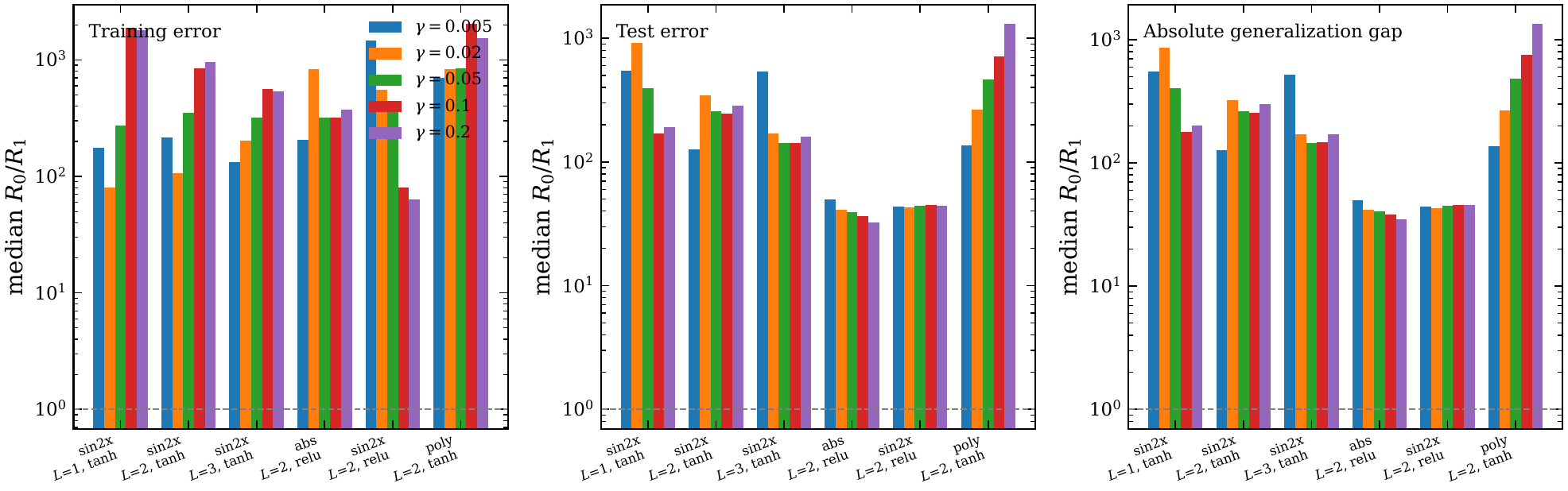}
    \caption{Median point-estimate ratios \(R_0/R_1\) across the target and
    architecture settings. These bars are descriptive; the main text reports
    ratios formed from the absolute confidence bounds.}
    \label{fig:improvement-summary}
\end{figure}

\begin{figure}[H]
    \centering
    \includegraphics[width=\linewidth]{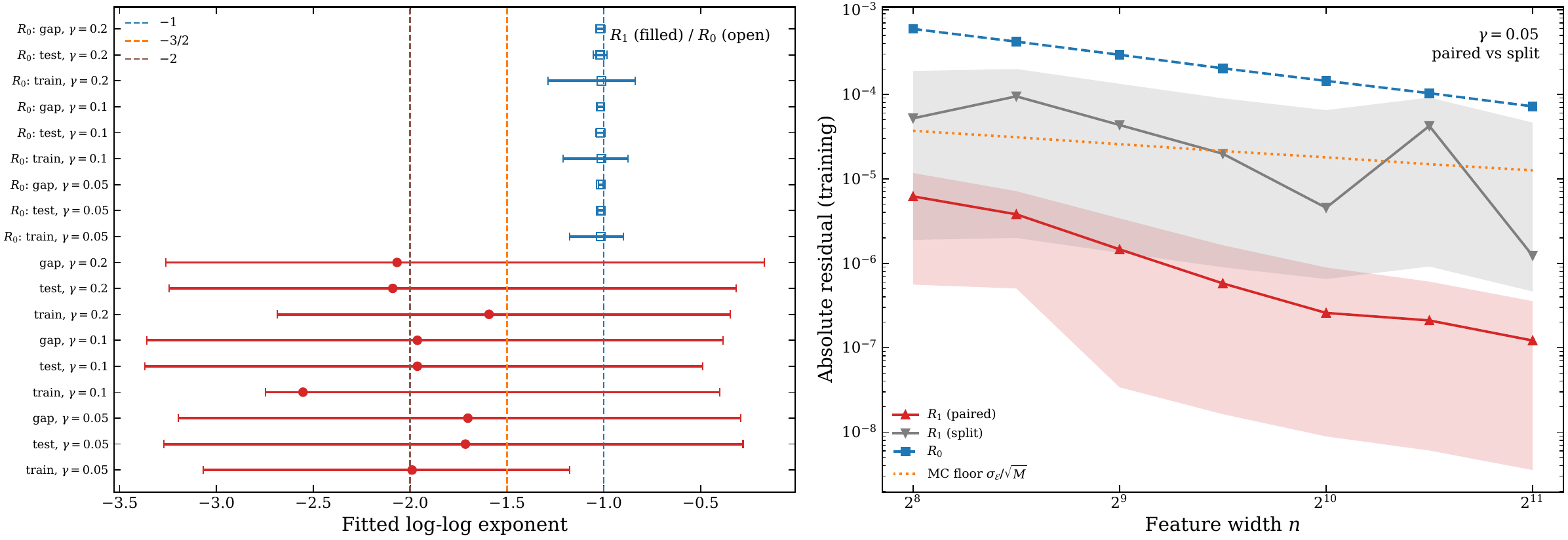}
    \caption{Left: fitted exponents for \(R_0\) and \(R_1\), with bootstrap
    intervals. Right: paired and split estimates of the post-correction
    training residual at \(\gamma=0.05\), together with the Monte Carlo floor.}
    \label{fig:residual-exponents-noise}
\end{figure}

\clearpage
\subsection{Conditional gap ablation and the resolvent diagnostic}

The conditional-gap ablation in Figure~\ref{fig:mixed-ablation-gap} mirrors the
test-error result in Figure~\ref{fig:mixed-ablation-test}. The training
correction is common to the three columns, so the failure of the
train-restricted approximation is inherited from test error. The lower row
shows that the full mixed correction removes the leading discrepancy.

Figure~\ref{fig:eps-vs-q} displays the pointwise relation between the
resolvent diagnostic and relative post-correction error that underlies the
rank correlations reported in Subsection~\ref{sec:validity-map}. The relation
is not a deterministic bound: architecture, target, and individual covariance
contractions still affect the vertical spread.

\begin{figure}[H]
    \centering
    \includegraphics[width=\linewidth]{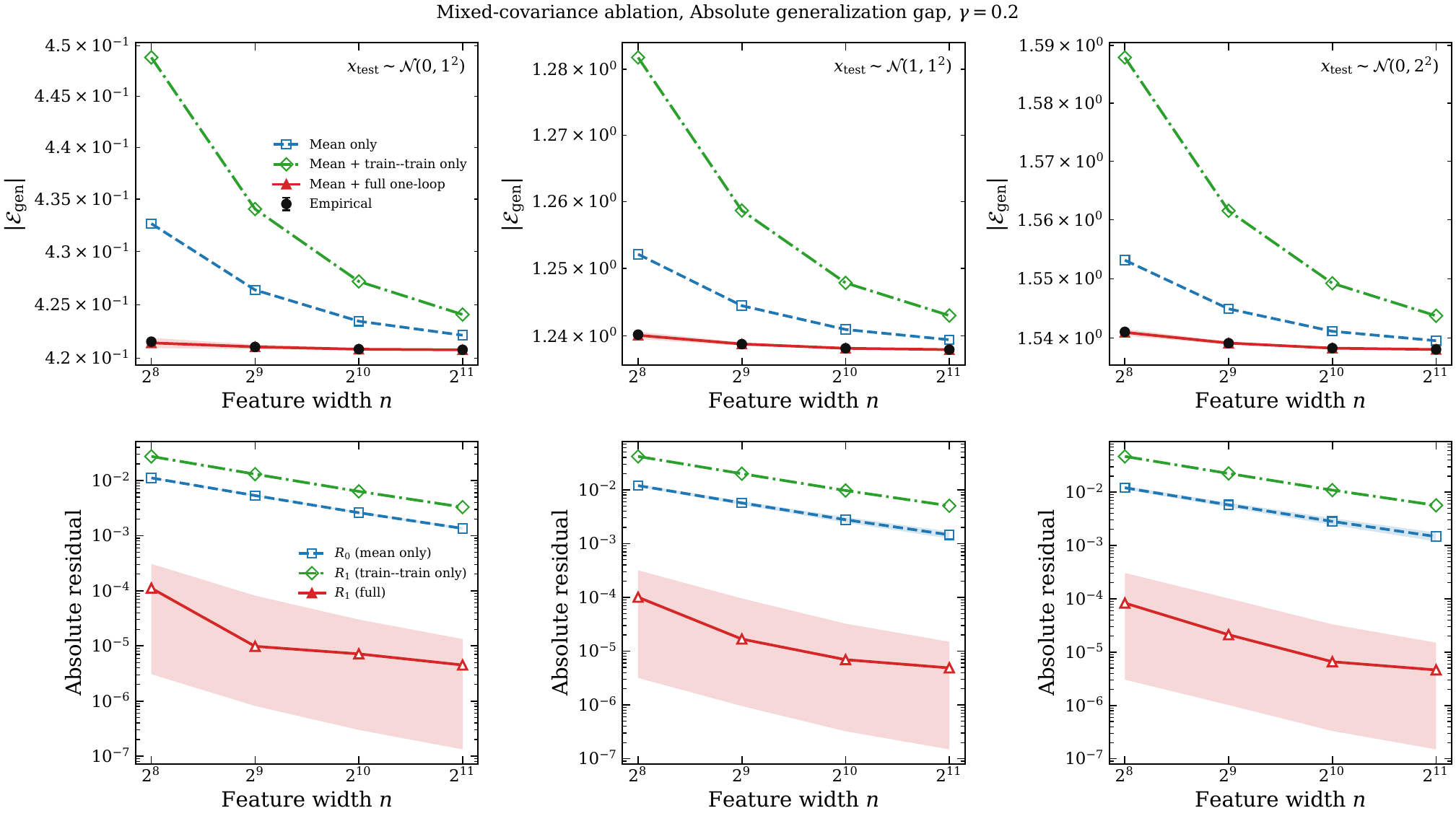}
    \caption{Mixed-covariance ablation for the absolute conditional
    generalization gap at \(\gamma=0.2\).}
    \label{fig:mixed-ablation-gap}
\end{figure}

\begin{figure}[H]
    \centering
    \includegraphics[width=0.88\linewidth]{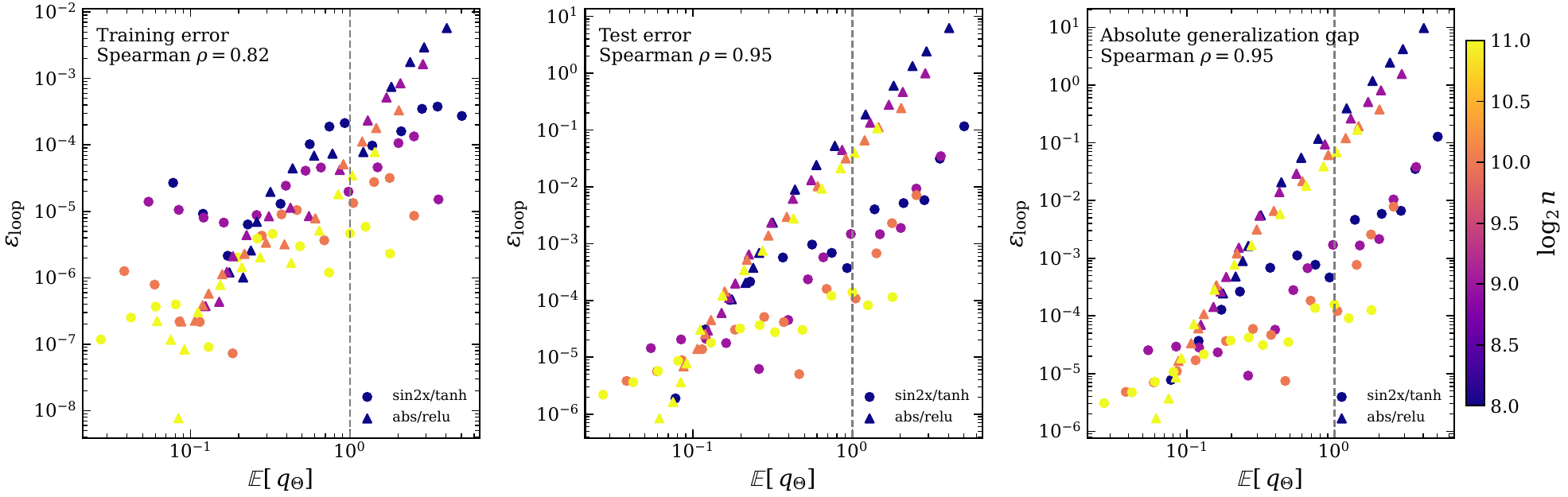}
    \caption{Relative post-correction error versus
    \(q_\Theta=\|(G+\gamma I)^{-1}\Delta_K\|_2\) across widths,
    regularization values, targets, and activations.}
    \label{fig:eps-vs-q}
\end{figure}

\clearpage
\subsection{ReLU--\texorpdfstring{\(\lvert x\rvert\)}{|x|} stress test}
\label{app:relu-stress}

At weak regularization, the ReLU--\(\lvert x\rvert\) setting produces an
unstable-looking truncated prediction. The larger paired ensemble and the
regularization ladder separate four possible causes: cancellation among large
terms, estimator variance, mixed-covariance growth, and resolvent amplification.
At \(\gamma=5\times10^{-3}\), the median relative test residual over the four
widths is \(8.3\times10^{-2}\), and the largest cancellation ratio is \(3.2\).
At \(\gamma=0.5\), the median relative test residual falls to
\(3.71\times10^{-5}\). This behavior is therefore primarily a stress-regime
effect rather than evidence that nonanalytic targets invalidate the
empirical-kernel vertex.

Figure~\ref{fig:relu-failure-diagnosis} summarizes these diagnostics.
Figure~\ref{fig:relu-term-decomposition} then resolves the test covariance
correction into its five terms. Individual terms remain smooth in width; their
sum is sensitive to cancellation at small regularization.

\begin{figure}[H]
    \centering
    \includegraphics[width=\linewidth]{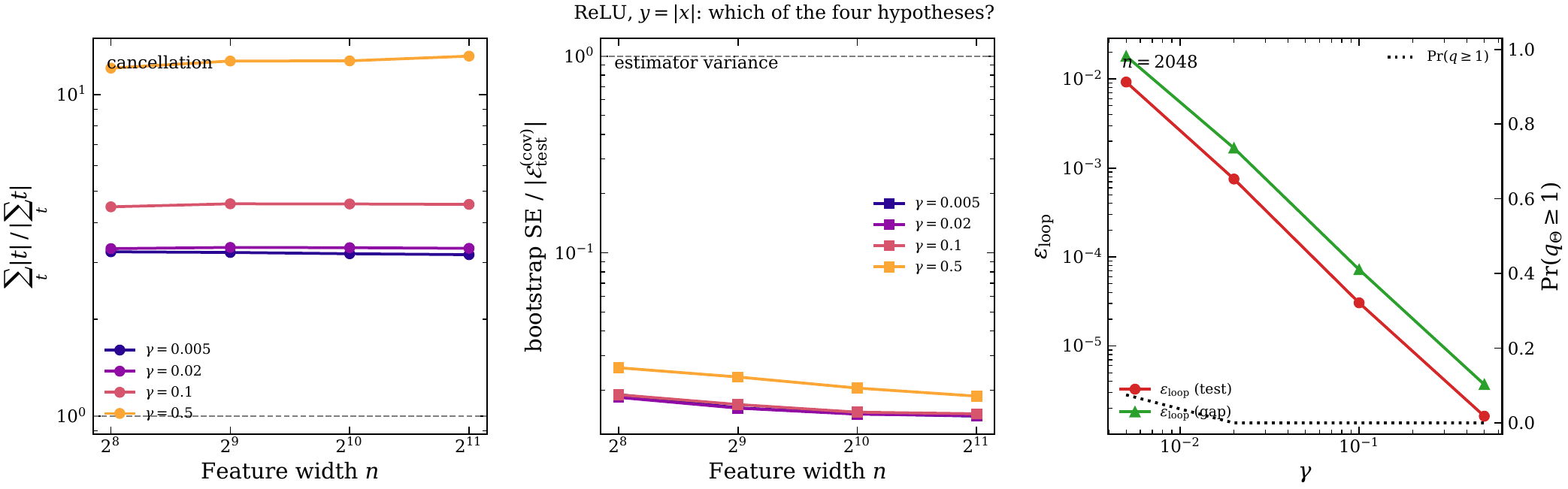}
    \caption{Diagnostics for the ReLU--\(\lvert x\rvert\) stress test:
    cancellation, bootstrap uncertainty, and the joint effect of regularization
    on \(q_\Theta\) and truncation error.}
    \label{fig:relu-failure-diagnosis}
\end{figure}

\begin{figure}[H]
    \centering
    \includegraphics[width=\linewidth]{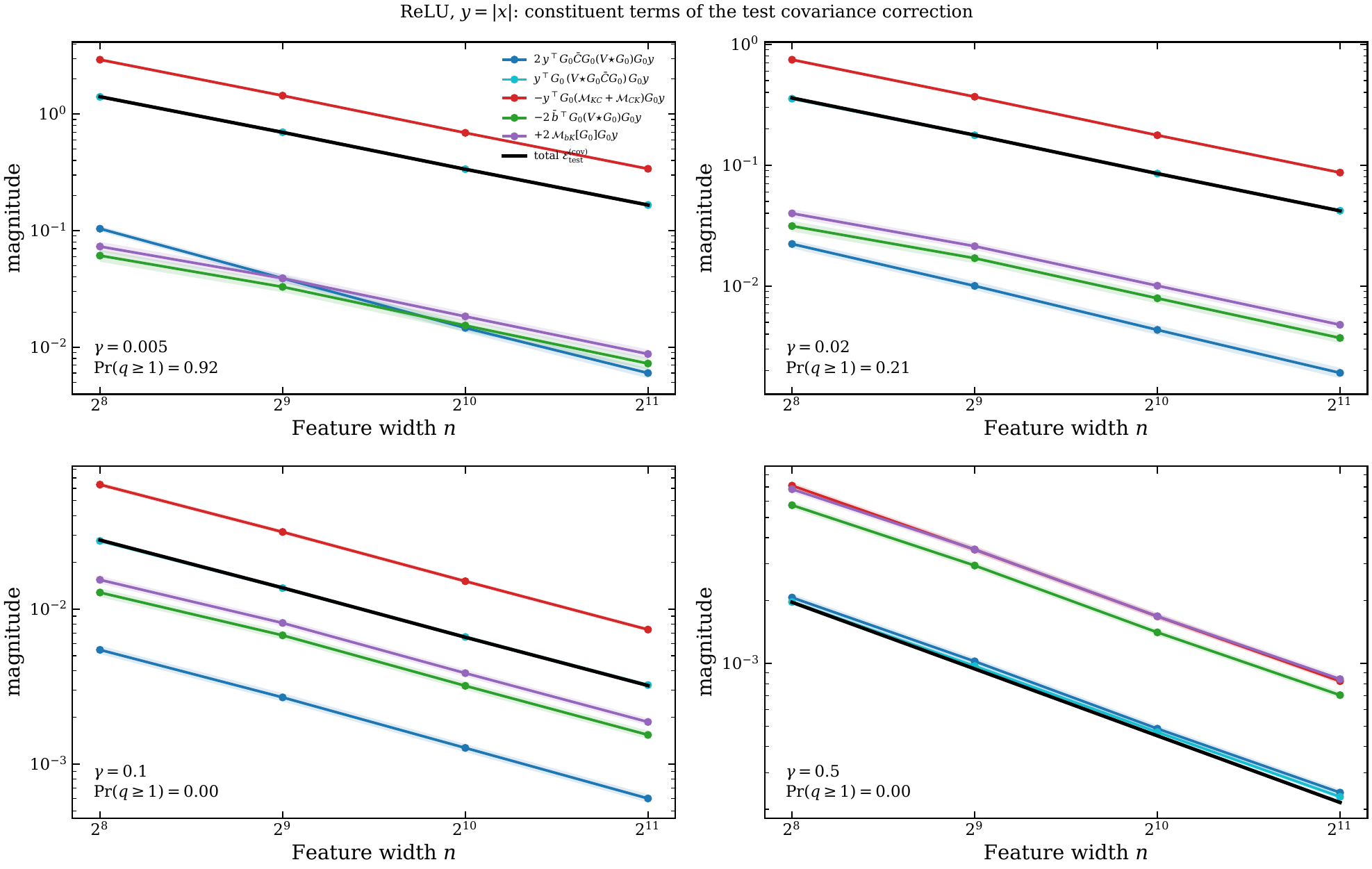}
    \caption{Term-by-term decomposition of the ReLU--\(\lvert x\rvert\) test
    covariance correction over the regularization ladder. Bands are
    paired-bootstrap intervals.}
    \label{fig:relu-term-decomposition}
\end{figure}

\clearpage
\subsection{Descriptive joint \texorpdfstring{\(N\)--\(n\)}{N--n} scaling}

Figure~\ref{fig:Nn-exponents} compares the fitted sample-size and width
exponents. The mean-object prediction is nearly independent of \(n\), while
the covariance term retains an exponent near \(-1\) at both regularization
values. Its dependence on \(N\) changes sign, consistent with the fact that
the fixed-\(N\) theorem does not imply a joint scaling law.

The heat maps in Figures~\ref{fig:Nn-small} and \ref{fig:Nn-large} show the
underlying training-error values and effective-dimension diagnostics. The
target normalization is common to all \(N\), so the remaining \(N\)-dependence
is not caused by rescaling the labels with training-sample statistics.

\begin{figure}[H]
    \centering
    \includegraphics[width=0.72\linewidth]{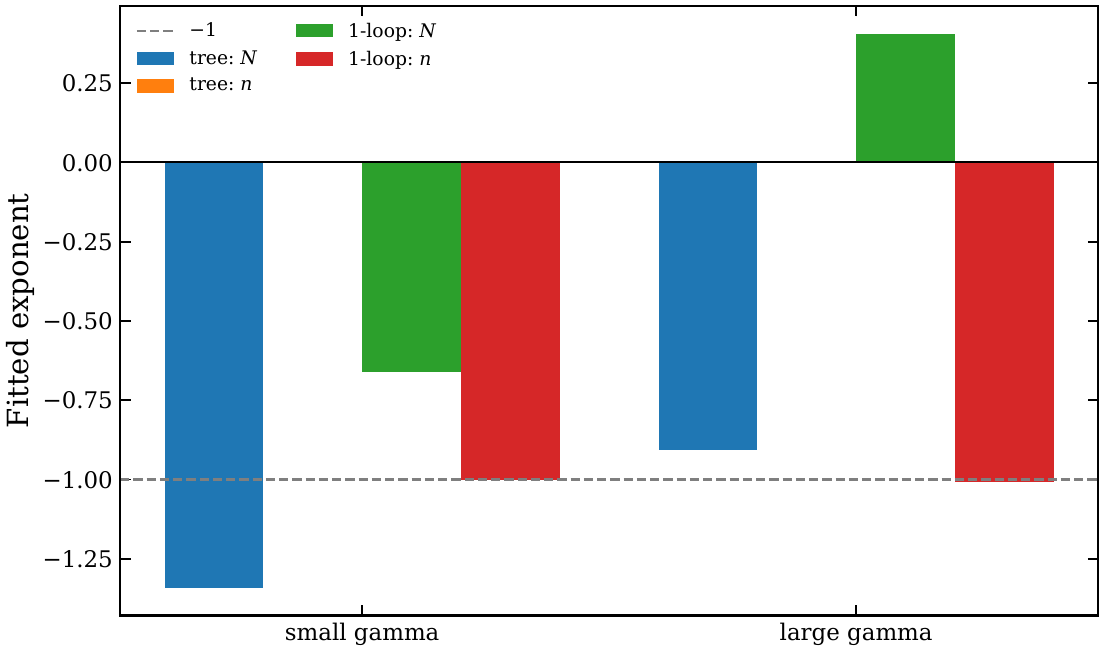}
    \caption{Fitted exponents in the descriptive model
    \(\log|\mathcal E|=c+a\log N+b\log n\) at
    \(\gamma=10^{-2}\) and \(0.2\).}
    \label{fig:Nn-exponents}
\end{figure}

\begin{figure}[H]
    \centering
    \includegraphics[width=\linewidth]{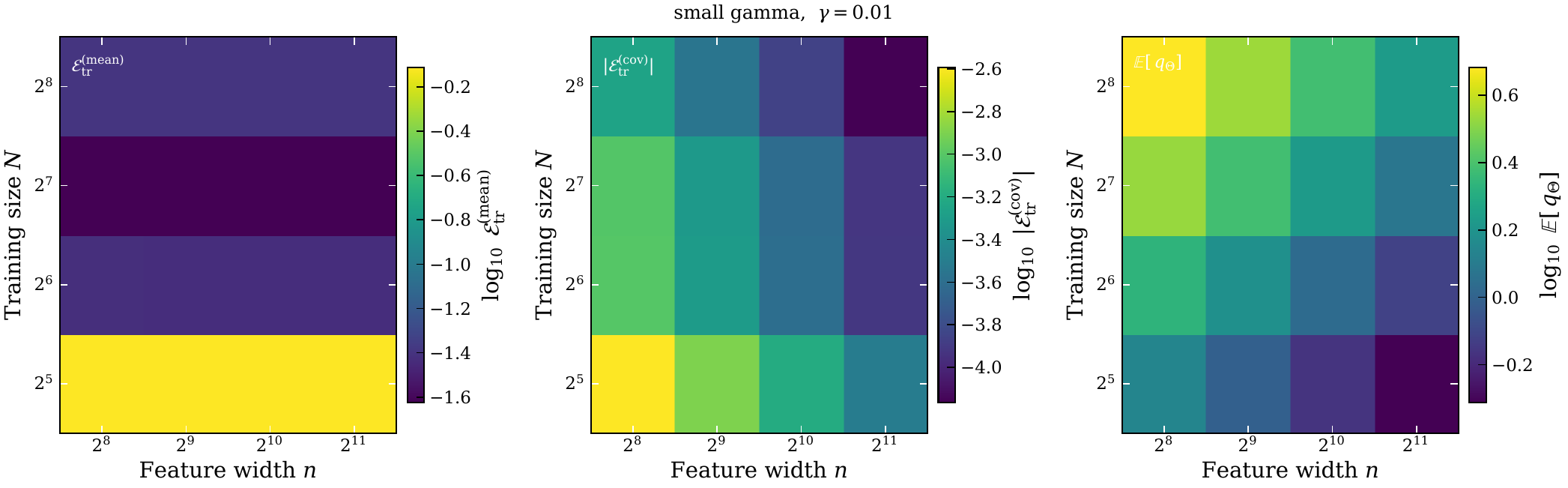}
    \caption{Training mean-object and covariance terms across \(N\) and \(n\)
    at \(\gamma=10^{-2}\), with effective-dimension diagnostics.}
    \label{fig:Nn-small}
\end{figure}

\begin{figure}[H]
    \centering
    \includegraphics[width=\linewidth]{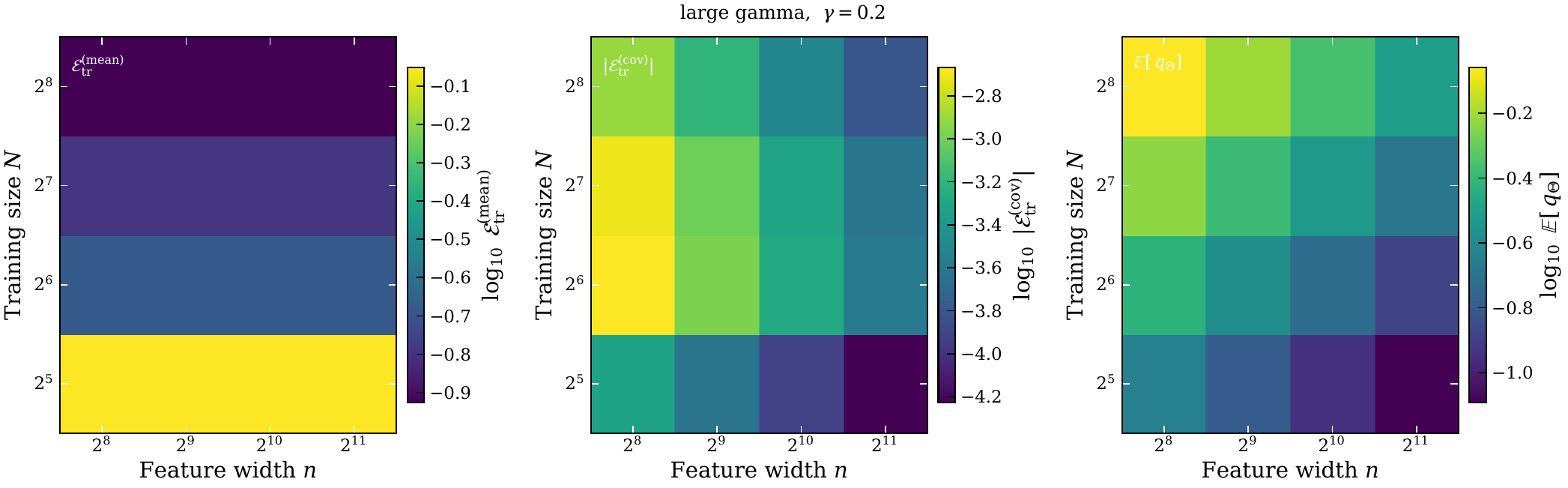}
    \caption{Training mean-object and covariance terms across \(N\) and \(n\)
    at \(\gamma=0.2\), with effective-dimension diagnostics.}
    \label{fig:Nn-large}
\end{figure}

\clearpage
\subsection{Distributional and spectral views}

Figure~\ref{fig:ensemble-distributions} separates two notions that are easily
conflated. The covariance correction predicts the displacement of the
ensemble mean relative to the mean-object baseline; it does not predict the
width of the realization-to-realization distribution. The histograms narrow
with \(n\), while the corrected mean continues to track the empirical mean.

Figure~\ref{fig:spectral-modes} decomposes the training mean-object and
covariance terms in the eigenbasis of \(G\). The dominant contributions occur
in modes whose eigenvalues are comparable to or larger than the regularization
scale. This provides a mode-resolved view of the amplification summarized by
\(q_\Theta\).

\begin{figure}[H]
    \centering
    \includegraphics[width=\linewidth]{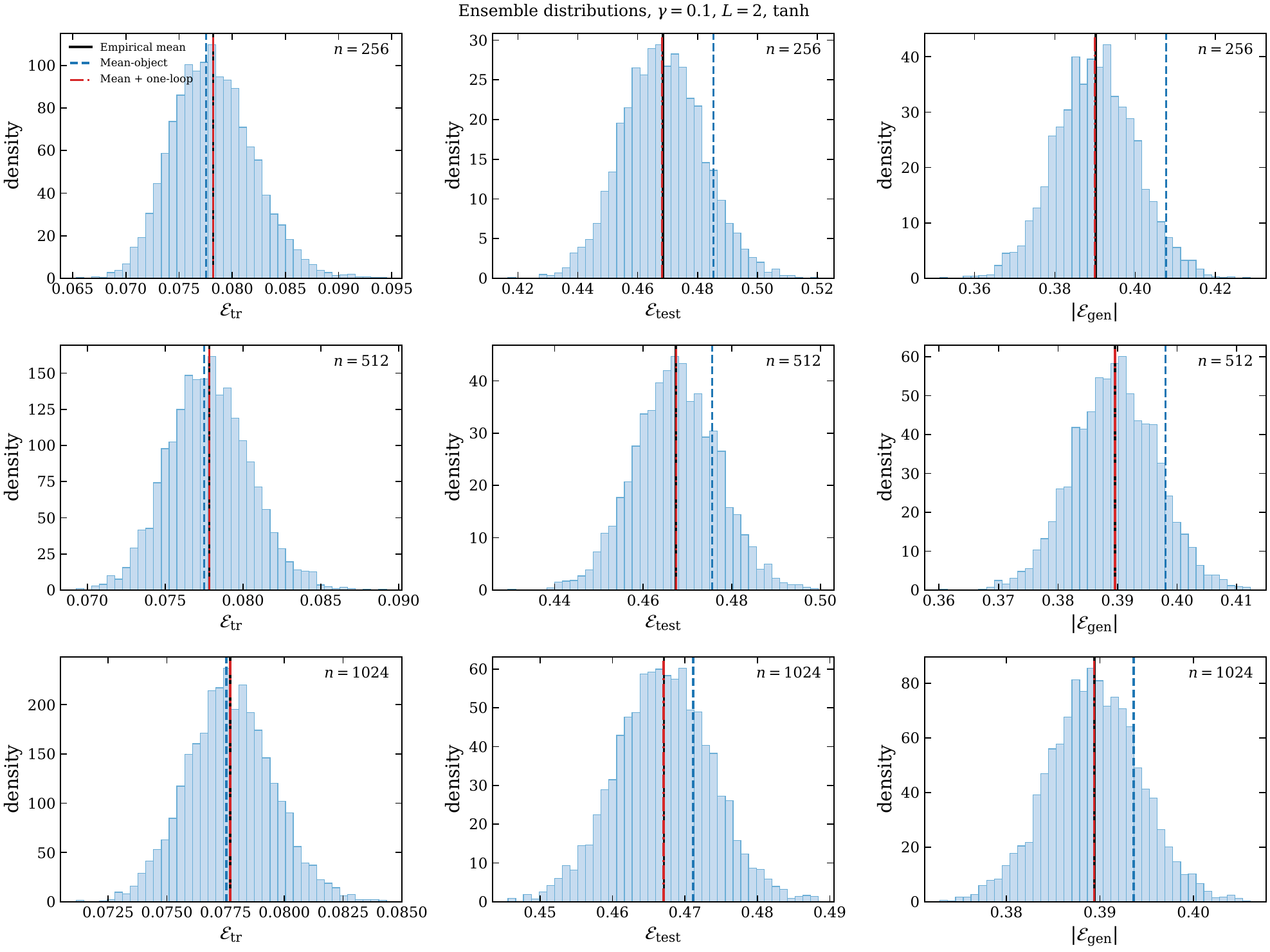}
    \caption{Ensemble distributions of training error, test error, and the
    absolute conditional gap for three widths at \(\gamma=0.1\). Vertical
    lines mark the empirical mean, mean-object prediction, and
    mean-plus-covariance prediction.}
    \label{fig:ensemble-distributions}
\end{figure}

\begin{figure}[H]
    \centering
    \includegraphics[width=\linewidth]{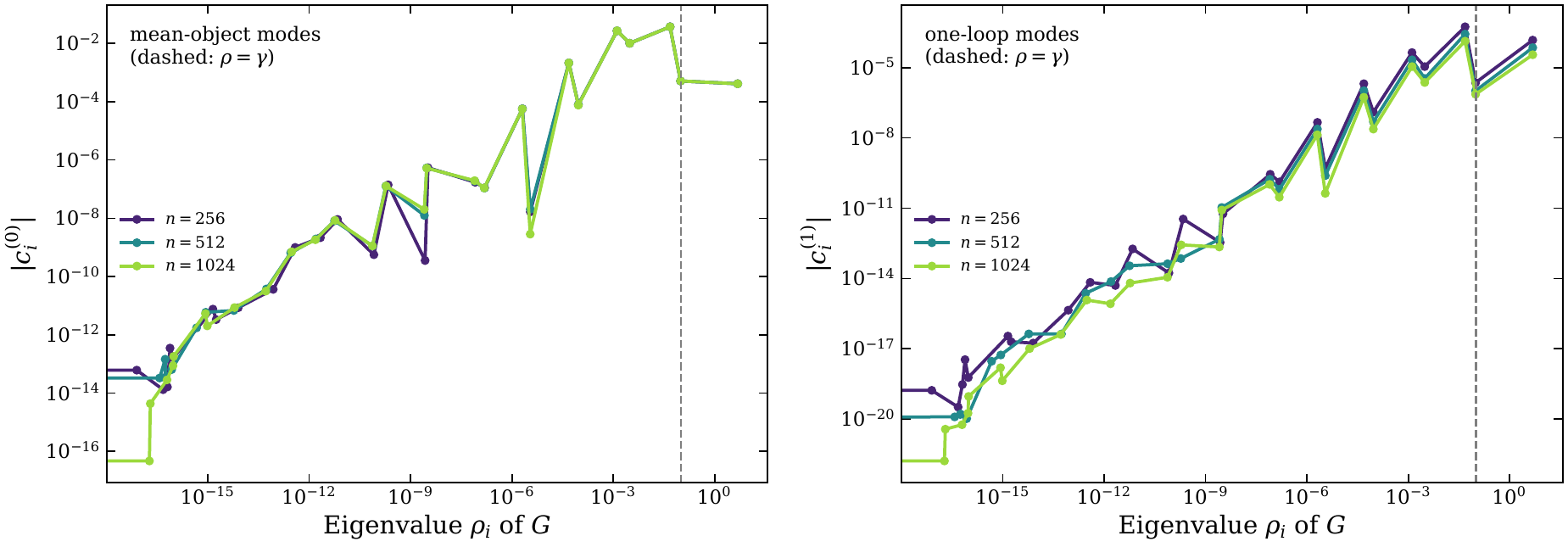}
    \caption{Mode-resolved mean-object and covariance contributions to the
    training error in the eigenbasis of \(G\). The dashed line marks
    \(\rho=\gamma\).}
    \label{fig:spectral-modes}
\end{figure}

\newpage
\bibliographystyle{MLSTEFTNO}
\bibliography{MLSTEFTNO}
\end{document}